\newtheorem{proposition}{Proposition} %[section]
\title{Disentangled Explanations of Neural Network Predictions by Finding Relevant Subspaces}
\author{Pattarawat Chormai, Jan Herrmann, Klaus-Robert M\"uller, Gr\'egoire Montavon${^*}$ 
\thanks{P. Chormai is with the Machine Learning Group, Technische Universit\"{a}t Berlin (TU Berlin), 10587 Berlin, Germany, with the Max Planck School of Cognition, Max Planck Institute for Human Cognitive and Brain Sciences, 04103 Leipzig, Germany, and also with the Konrad Zuse School of Excellence in Learning and Intelligent Systems (ELIZA), 64289 Darmstadt, Germany. E-mail: p.chormai@tu-berlin.de.}
\thanks{J. Herrmann is with BASF SE, Statistics and Machine Learning, Carl-Bosch-Straße 38, 67056 Ludwigshafen am Rhein, Germany. E-mail: jan.herrmann@basf.com.}
\thanks{K.-R. M\"uller is with the Machine Learning Group,  Technische Universit\"{a}t Berlin (TU Berlin), 10587 Berlin, Germany, with the Department of Artificial Intelligence, Korea University, Seoul 136-713, Korea, with  the Max Planck Institute for Informatics, 66123 Saarbr{\"u}cken, Germany, and also with BIFOLD---Berlin Institute for the Foundations of Learning and Data, 10587 Berlin, Germany. E-mail: klaus-robert.mueller@tu-berlin.de.}
\thanks{G. Montavon is with the Department of Mathematics and Computer Science,
Freie Universit\"{a}t Berlin (FU Berlin), 14195 Berlin, Germany, with the Machine Learning Group, Technische Universit\"{a}t Berlin (TU Berlin), 10587 Berlin, Germany, and also with BIFOLD---Berlin Institute for the Foundations of Learning and Data, 10587 Berlin, Germany. E-mail: gregoire.montavon@fu-berlin.de.}
\thanks{(Corresponding Author: Gr\'egoire Montavon)}}
\DeclareMathOperator*{\maximize}{maximize}
\DeclareMathOperator{\Tr}{Tr}
\newcommand{\R}{\mathbb{R}}
\newcommand{\ba}{\boldsymbol{a}}
\newcommand{\bx}{\boldsymbol{x}}
\newcommand{\br}{\boldsymbol{r}}
\newcommand{\bbu}{\boldsymbol{U}}
\newcommand{\bc}{\boldsymbol{c}}
\newcommand{\bh}{\boldsymbol{h}}
\newcommand{\lrpgamma}[1]{\text{LRP}-$\gamma_{#1}$}
\begin{document}

\IEEEtitleabstractindextext{%
\begin{abstract}
Explainable AI aims to overcome the black-box nature of complex ML models like neural networks by generating explanations for their predictions. Explanations often take the form of a heatmap identifying input features (e.g.\ pixels) that are relevant to the model's decision. These explanations, however, entangle the potentially multiple factors that enter into the overall complex decision strategy. We propose to \textit{disentangle explanations} by extracting at some intermediate layer of a neural network, subspaces that capture the multiple and distinct activation patterns (e.g.\ visual concepts) that are \textit{relevant} to the prediction. To automatically extract these subspaces, we propose two new analyses, extending principles found in PCA or ICA to explanations. These novel analyses, which we call principal relevant component analysis (PRCA) and disentangled relevant subspace analysis (DRSA), maximize \textit{relevance} instead of e.g.\ variance or kurtosis. This allows for a much stronger focus of the analysis on what the ML model actually uses for predicting, ignoring activations or concepts to which the model is invariant. Our approach is general enough to work alongside common attribution techniques such as Shapley Value, Integrated Gradients, or LRP. Our proposed methods show to be practically useful and compare favorably to the state of the art as demonstrated on benchmarks and three use cases.
\end{abstract}

\begin{IEEEkeywords}
Explainable AI, Subspace Analysis, Disentangled Representations, Neural Networks
\end{IEEEkeywords}
}

\maketitle

\section{Introduction}

Machine learning techniques, especially deep neural networks, have been successful at converting large amounts of data into complex and highly accurate predictive models. As a result, these models have been considered for a growing number of applications. Yet, their complex nonlinear structure makes their decisions opaque, and the model behaves as a black-box. In the context of sensitive and high-stakes applications, the necessity to thoroughly verify the decision strategy of these models before deployment is crucial. This important aspect has contributed to the emergence of a research field known as Explainable AI \cite{baehrens2010explain, DBLP:journals/pieee/SamekMLAM21,DBLP:journals/inffus/ArrietaRSBTBGGM20,DBLP:journals/aim/GunningA19,DBLP:series/lncs/11700,DBLP:journals/access/RoscherBDG20} that aims to make ML models and their predictions more transparent to the user.

A popular class of Explainable AI techniques, commonly referred to as `attribution', identifies for a given data point the contribution of each input feature to the prediction \cite{DBLP:journals/jmlr/StrumbeljK10,bach-plos15,DBLP:conf/icml/SundararajanTY17,DBLP:conf/nips/LundbergL17}. Attribution techniques have demonstrated usefulness in a broad range of applications. They can identify contributing features in nonlinear relations of scientific interest \cite{DBLP:series/lncs/KratzertHKHK19, EbertUphoff2020,Lundberg2020,Keyl2022,klauschen2024toward}, or enable further validation of the models at hand \cite{hae-srep20,DBLP:journals/inffus/AndersWNSML22}. However, for certain applications and data types, a simple attribution of the decision function to the input features may be of limited use. Specifically, it may fail to expose the \textit{multiple} reasons why a particular input feature contributes or which component of the decision strategy is responsible for that contribution.

These limitations have led to the advance of richer structured explanations. The development encompasses `higher-order explanations' \cite{Lundberg2020, DBLP:journals/pami/SimonRDD20, bilrp, Schnake2021} that aim to extract the contribution of input features in relation to other input features, and `hierarchical explanations' \cite{DBLP:journals/pami/ZhangWCWSZ21,DBLP:journals/corr/abs-2108-12204,Achtibat2023} where concepts (e.g.\ directions in activation space) are first extracted and then leveraged to identify joint feature-concept contributions. Proposals for hierarchical or concepts-based explanations typically construct a latent space that maximally correlates with some ground-truth annotations \cite{kim18,DBLP:journals/inffus/AndersWNSML22} or learn a latent space that maximizes some statistics of projected activations \cite{Vielhaben2022,DBLP:journals/corr/abs-2108-12204}. These approaches, however, do not guarantee a specific focus on features that are most relevant for the model to arrive at its decision; they may in some cases extract directions in activation space to which the model is mostly invariant.

\begin{figure*}[t!]

    \centering
        \includegraphics[width=\textwidth,clip,trim=0 60 0 0]{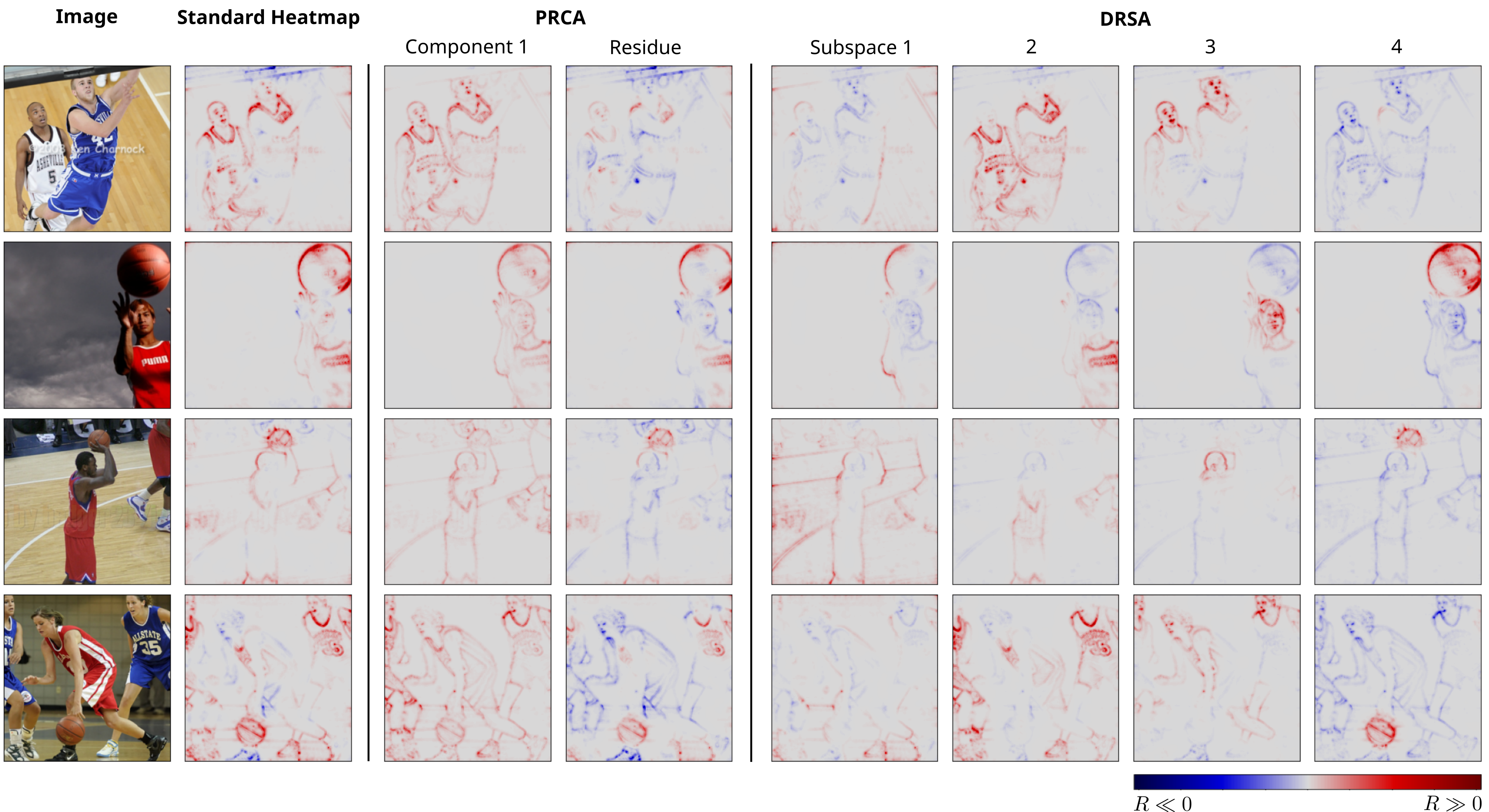}    
    \caption{Disentangled explanations produced by our proposed methods for the logit `basketball' of a pretrained VGG16 network \cite{DBLP:journals/corr/SimonyanZ14a,torchvision2016}. From left to right: the input image; a standard explanation (LRP); a decomposition onto the first PRCA component and the residue (computed at Conv4\_3); the disentangled explanation produced by DRSA (4 subspaces extracted at Conv4\_3). Red color {\color{red} $\bullet$} indicates pixels that contribute evidence for the prediction through the given component or subspace, and blue color {\color{blue} $\bullet$} indicates pixels that speak against it. The outcome of DRSA can be interpreted as decomposing the prediction strategy into four sub-strategies, here, the detection of the basketball field, the outfits, the faces, and the ball, as highlighted in red in each column.}
    \label{fig:overview}
\end{figure*}

To address the general need for more structured and focused explanations, we propose to equip Explainable AI with a new form of representation learning: extracting a collection of subspaces that \textit{disentangles} the explanation (i.e.\ separates it into multiple semantically distinct components contributing to the model's overall prediction strategy). Technically, we contribute two novel analyses: \textit{principal relevant component analysis} (PRCA) and \textit{disentangled relevant subspace analysis} (DRSA), that achieve two particular flavors of the representation learning objective. PRCA can be seen as an extension of the well-known PCA that incorporates model response into the analysis \cite{braun2008relevant}. In a similar fashion, DRSA can be seen as an extension of ICA-like subspace analysis \cite{DBLP:journals/nn/HyvarinenO00,DBLP:conf/cvpr/LeZYN11,DBLP:series/civ/HyvarinenHH09}. As a result, PRCA and DRSA inherit advantageous properties of the methods they build upon, such as simplicity and ease of optimization.

Furthermore, our contributed PRCA and DRSA methods integrate transparently into a number of popular attribution techniques, in particular, Integrated Gradients \cite{DBLP:conf/icml/SundararajanTY17}, Shapley Values \cite{shapley:book1952, DBLP:journals/jmlr/StrumbeljK10, DBLP:conf/nips/LundbergL17}, and Layer-wise Relevance Propagation \cite{bach-plos15, DBLP:journals/pr/MontavonLBSM17, DBLP:series/lncs/MontavonBLSM19, DBLP:conf/icml/AliSEMMW22}. Hence, any explanation produced by these common attribution techniques can now be disentangled by our method into several meaningful components. Moreover, our proposed methods  preserve useful properties of the underlying attribution techniques such as conservation \cite{shapley:book1952,bach-plos15} (aka.\ completeness or efficiency) and their computational/robustness profile. Fig.\ \ref{fig:overview} shows examples of disentangled explanations produced by our PRCA/DRSA approach, where we observe that the overall prediction strategy of a VGG16 network for the class  `basketball' decomposes into multiple sub-strategies including detecting the basketball field, the outfits, the faces, and the ball.

Through an extensive set of experiments on state-of-the-art models for image classification, we demonstrate qualitatively and quantitatively that our approach yields superior explanatory power compared to a number of baselines proposed by us or other approaches from the literature. In particular, we observe that our disentangled explanations capture more distinctly the multiple visual patterns used by the model to predict.

Lastly, we present three use cases for the proposed disentangled explanations:  (1) We show that disentangled explanations enable a simple and efficient interface for the user to identify and remove Clever Hans effects \cite{Lapuschkin2019} in some model of interest. (2) We demonstrate on a subset of ImageNet containing different butterfly classes how disentangled explanations can enrich our understanding of the relation between visual features and butterfly classes. (3) We use PRCA to analyze explanations that are adversarially manipulated through a perturbation of the input image (see e.g.\ \cite{DBLP:conf/nips/DombrowskiAAAMK19,dombrowski2022towards}), allowing us to disentangle the original explanation from its adversarial component.

\section{Related Work}

We discuss below the work on Explainable AI that is most closely related to our contribution, in particular, concept-based and structured explanations. For a broader overview of Explainable AI techniques, general discussions of Explainable AI and applications, we refer to reviews, e.g.\ \cite{DBLP:journals/pieee/SamekMLAM21}, \cite{DBLP:journals/inffus/ArrietaRSBTBGGM20}, \cite{DBLP:series/lncs/11700}, \cite{DBLP:journals/dsp/MontavonSM18}, \cite{DBLP:conf/icml/2020xxai}, \cite{holzinger2019causability}, \cite{DBLP:journals/entropy/LinardatosPK21}, \cite{DBLP:journals/cacm/Lipton18}.

\subsubsection*{Concept-Based Explanations}

Building on findings that deep neural networks encode useful intermediate concepts in their intermediate layers \cite{DBLP:journals/jmlr/MontavonBM11,DBLP:journals/ploscb/CadieuHYPASMD14, DBLP:conf/eccv/ZeilerF14}, a first set of related works considers the problem of explanation in terms of abstract concepts that are represented well in intermediate layers. For example, IBD \cite{DBLP:conf/eccv/ZhouSBT18} learns from ground-truth concept annotations an interpretable basis, allowing to decompose the prediction in terms of the different concepts. The TCAV method \cite{kim18} builds for each concept a linear classifier in activation space, using ground-truth annotations, and generates per-instance concept sensitivity scores using derivatives in activation space.  \cite{ghorbani19} extends the TCAV framework by using clustering algorithms to find directions in latent space, bypassing the need of having a concept dataset. Alternatively, \cite{DBLP:conf/aaai/ZhangM0ER21} finds that using non-negative factorization yields higher fidelity than using clustering approaches.  \cite{Vielhaben2022} views concepts as subspaces of the representation formed in some intermediate layer and proposes a sparse clustering algorithm to extract such subspaces. \cite{Vielhaben2023} introduces ``virtual layers'', converting the input signal to frequency space and back, in order to produce explanations in frequency domain. The NetDissect framework \cite{DBLP:journals/pami/ZhouBO019} offers a way of matching hidden neurons to concepts (obtained from the Broden dataset \cite{BauZKO017}). It enables partitioning the space of activation into multiple subspaces, each of them representing a distinct concept. Concept-based explanations have also been proposed to investigate similarity predictions \cite{DBLP:journals/tomccap/ChenLZSLC23}. We refer to \cite{poeta2023concept} for a broader overview and taxonomy of recent developments in concept-based explanations.

\subsubsection*{Structured and Higher-Order Explanations}

Another line of work aims to extract structured explanations, either joint contributions of input features or of input features and concepts. Higher-order methods \cite{Cui19, bilrp, Lundberg2020, DBLP:journals/jmlr/JanizekSL21, Schnake2021} enable the construction of these explanations and also better account for interaction effects present in ML models such as graph neural networks \cite{Schnake2021}.  \cite{DBLP:journals/corr/abs-2108-12204} builds an interpretable model, called prototype networks, that support joint explanations in terms of concepts and input features.  \cite{Achtibat2023} enables such structured explanation in a post-hoc manner, by extending the LRP framework to filter the explanation signal that passes through different activation maps representing different concepts. Another propagation-filtering approach is applied at each layer in \cite{DBLP:journals/ijcv/ChengJHWT23} in order to build a hierarchical explanation. \cite{DBLP:journals/pami/ZhangWCWSZ21} learns a surrogate graph-based model at multiple layers of a trained neural network in order to produce hierarchical explanations.  \cite{DBLP:conf/iclr/SinghMY19} proposes the context decomposition approach to extract hierarchies of input features that explain the prediction of an NLP model. The contextual decomposition approach is further extended in \cite{DBLP:conf/iclr/JinWDXR20}, in particular, addressing the question of how to explain combinations of two phrases.  \cite{DBLP:conf/acl/ChenZJ20} extracts a hierarchical explanation through the use of a second-order attribution method. In contrast to structured or higher-order explanations, other methods such as \cite{DBLP:conf/iclr/ChenZLLC024} aim to simplify a standard first-order explanation, by making it sparse.

\subsubsection*{Applications to Model Validation and Improvement}

A number of works leverage the joint usage of explanation techniques and intermediate representations for the purposes of model validation or improvement.   \cite{bykov2022dora} proposes a data-agnostic framework that uses synthetic images to investigate whether the intermediate representation of a trained model exhibits any potential Clever Hans effects. \cite{DBLP:journals/inffus/AndersWNSML22,DBLP:journals/inffus/LinhardtMM24} model Clever Hans phenomena in a trained model by an application of LRP and builds a transformation  in activation space to prune these Clever Hans effects, while \cite{DBLP:conf/iclr/KirichenkoIW23} mitigates such effects by relearning the last layer of the model using a reweighting dataset.

\subsubsection*{Representation Learning and Disentanglement}

Beyond the field of Explainable AI, a broad range of works have addressed the question of learning disentangled representations \cite{DBLP:journals/nn/HyvarinenO00,DBLP:conf/cvpr/LeZYN11,DBLP:journals/pami/BengioCV13}. Related to our focus on relevant subspaces, some of these works take label information or model response into account \cite{DBLP:journals/pr/BarshanGAJ11} or study the guarantees of concept discovery algorithms \cite{DBLP:journals/corr/abs-2206-13872}. Other works focus not on learning disentangled representations, but on evaluating them (e.g.\ \cite{DBLP:conf/iclr/EastwoodW18}, \cite{DBLP:journals/tbe/MeineckeZKM02}), including the study of how representations evolve from layer to layer in neural networks \cite{DBLP:journals/jmlr/MontavonBM11}, \cite{DBLP:conf/itw/TishbyZ15}, \cite{DBLP:journals/nn/GuoCDHST20,
DBLP:journals/ml/CaoLHWT22}. In contrast to the works above, our paper proposes techniques that specifically address the problem of disentangling explanations.

\section{Joint Pixel-Concept Explanations}

\begin{figure*}[t]
\centering
\includegraphics[width=.98\textwidth]{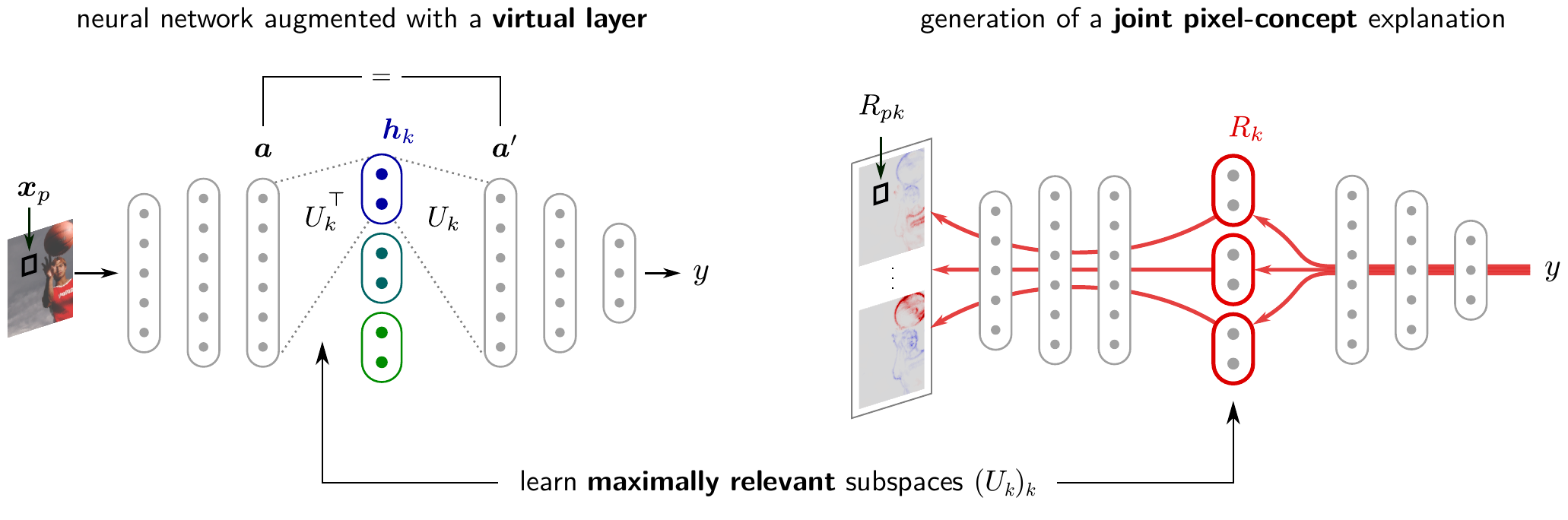}

\caption{
Overview of our proposed approach for generating disentangled explanations. \textit{Left:} The neural network to explain is augmented with a virtual layer performing an orthogonal transformation and back onto subspaces representing distinct concepts. The transformation matrices $(U_k)_k$ are optimized to extract subspaces maximizing some statistics of associated relevance scores $(R_k)_k$ (Eqs. {\eqref{eq:prca-objective}} or {\eqref{eq:drsa-objective}} with $R_k$ defined in {\eqref{eq:Rk}}). \textit{Right:} Once the virtual layer has been built, it is used to support the generation (via Eqs. \eqref{eq:twostep-1} and \eqref{eq:twostep-2}) of more informative pixel-concept explanations.}
\label{fig:neuralnet}
\end{figure*}

We place our focus on the commonly studied problem of \textit{attribution}, which asks the extent by which each input feature has contributed to a given prediction. Shapley value \cite{shapley:book1952, DBLP:journals/jmlr/StrumbeljK10, DBLP:conf/nips/LundbergL17},
Integrated Gradients \cite{DBLP:conf/icml/SundararajanTY17}, or Layer-wise Relevance Propagation (LRP) \cite{bach-plos15,DBLP:series/lncs/MontavonBLSM19} can be called `standard' attribution methods as they solve the problem of decomposing the output score into contributions of individual input pixels (or patches). An overview of these attribution techniques is provided in Supplementary Note A. One common limitation of these methods is that they do not provide information on the underlying reason (or concept) that makes a particular pixel relevant \cite{kim18,DBLP:conf/eccv/ZhouSBT18}.

More recent Explainable AI approaches, such as \cite{Achtibat2023}, \cite{gautam2023looks}, \cite{Vielhaben2023} or \cite{Fel_2023_CVPR} have thus aimed at resolving input features contributions in terms of intermediate concepts that the ML model uses to produce the overall decision strategy. In a favorable case where those concepts are readily identifiable at some intermediate layer of the neural network, more specifically, when the input-output relation can be formulated via the two-step mapping:
$$
\bx \mapsto \bh \mapsto y,
$$
where $\bx = (\bx_p)_{p=1}^{P}$ is the collection of $P$ pixels (or patches) forming the input image, where $\bh = (\bh_k)_{k=1}^K$ are the groups of neurons encoding each of the $K$ concepts, and $y$ is the output of the network (e.g.\ the evidence built by the network for the image's class). From this two-step mapping, one can generate a richer joint pixel-concept explanation via the corresponding two-step explanation process:
\begin{align}
(R_k)_{k=1}^K &= \mathcal{E}(y,\bh),\label{eq:twostep-1}\\
(R_{pk})_{p=1}^P &= \mathcal{E}(R_k,\bx). \label{eq:twostep-2}
\end{align}
The notation $\mathcal{E}(a,b)$ reads ``explain $a$ in terms of $b$'', or ``attribute $a$ onto $b$''. The score $R_k$ indicates the contribution of concept $k$ to the prediction. The score $R_{pk}$ can then be interpreted as the \textit{joint} contribution of input pixel $p$ and concept $k$ to the prediction. As an example, in Fig.\ \ref{fig:overview}, for some given input image, the score $R_k$ (with $k=2$) would measure the contribution of the `outfit' to the prediction `basketball', and $R_{pk}$ would be the contribution of a particular pixel within the outfit. The collection of all the $R_{pk}$'s forms the joint pixel-concept explanation.

In practice, when using backpropagation methods such as LRP, such a two-step explanation process takes the form of a filtering of the backward pass through specific neurons, a procedure we highlight graphically in Fig.\ \ref{fig:neuralnet} (right). The filtering approach is also found e.g.\ in \cite{DBLP:journals/dsp/MontavonSM18}, \cite{Schnake2021}, \cite{Achtibat2023}.
We provide the derivations of our two-step procedure for non-propagation attribution techniques, such as the Shapley value \cite{DBLP:journals/jmlr/StrumbeljK10,DBLP:conf/nips/LundbergL17} and Integrated Gradients \cite{DBLP:conf/icml/SundararajanTY17}, in Supplementary Note B.1.

When the attribution technique used at each step obeys a conservation principle (which is the case for all the attribution methods stated above up to some approximation), one gets the conservation equation $\sum_{pk} R_{pk} = y$. Furthermore, under some reasonable assumptions about the model and the attribution technique (cf.\ Supplementary Note B.1), one gets the stronger form of conservation $\forall p \colon \sum_{k} R_{pk} = R_p$. The joint pixel-concept explanation then becomes a decomposition of the standard pixel-wise explanation into multiple sub-explanations, and conversely, the standard pixel-wise explanation can be seen as a reduction (or coarse-graining) of the joint explanation.

\subsection{Concepts as Orthogonal Subspaces}
\label{section:linking}

We have so far assumed an intermediate representation in the model, with the variables $(\bh_k)_{k=1}^K$ encoding distinctly the multiple concepts used by the model to predict. In most deep neural networks, however, we only have a sequence of layers, each of which is a large---mainly unstructured---collection of neurons whose role or contribution to the neural network output is not always easily identifiable and possibly not well-disentangled (see e.g.\ \cite{DBLP:journals/corr/SzegedyZSBEGF13,DBLP:journals/pami/ZhouBO019}).
As some earlier works have demonstrated \cite{kim18, DBLP:conf/cvpr/FongV18,neon,Vielhaben2022}, meaningful concepts are typically recoverable (e.g.\ using a linear transformation) from the collection of activations $\ba = (a_i)_{i=1}^D$ at some well-chosen layer.

\medskip

We propose to append to such a layer of activations a \textit{virtual layer} that maps the activations to some latent representation using some orthogonal matrix $\bbu$ of size $D \times D$ and back. The matrix is structured as
\begin{align}
\bbu = \big(U_1 \big| \hdots \big| U_k \big| \hdots \big| U_K\big),
\label{eq:proj-matrix}
\end{align}
where each block $U_k$, a matrix of size $D \times d_k$, is associated with the concept $k$ and defines a projection onto a subspace of dimensions $d_k$. The virtual layer is depicted in Fig.\ \ref{fig:neuralnet} (left) and its mapping can be expressed as:
\begin{align}
\ba^\prime = 
\overunderbraces{& \br{2}{\displaystyle I} &}{
& \sum_{k=1}^K U_k & U_k^\top & \ba
}{& & \br{2}{\displaystyle  \bh_k}}~,
\label{eq:ak}
\end{align}
highlighting (1) its property of keeping the overall decision function unchanged due to the orthogonality property, and (2) the extraction of the variable $\bh_k$ which is required for producing the joint pixel-concept explanation according to Eqs.\ \eqref{eq:twostep-1} and \eqref{eq:twostep-2}.

\subsection{Expressing Concept Relevance}
\label{section:concept-relevance}

There are many ways to learn matrices $U_k$'s, for example, using PCA or other unsupervised analysis on a set of activation vectors $\ba$'s, similar to \cite{Vielhaben2022}. However, we aim in this work to learn subspaces that are specifically relevant to the decision function, i.e.\ with high relevance scores $R_k$'s. To achieve this, we will first need to express $R_k$ (the outcome of Eq.\ \eqref{eq:twostep-2}) in terms of the transformation matrix $U_k$. We give the demonstration for the LRP \cite{bach-plos15} attribution technique.

Let us recall the definition of the virtual layer in Eq.\ \eqref{eq:ak}, and observe that each reconstructed activation $a_j^\prime$ can be expressed in terms of concepts $\bh_k$ in the layer below as:
$$
a_j^\prime = \sum_{k=1}^K \bh_k^\top (U_k)_j,
$$
with $(U_k)_j$ the $j$th row of the matrix $U_k$. Assume we have propagated the neural network output using LRP down to the layer of the reconstructed activations and obtained for each $a_j^\prime$ a relevance score $R_j$\footnote{For our method to be applicable, one further requires that any activation $a_j^\prime=0$ has relevance $R_j = 0$. This property is satisfied when applying standard LRP rules (cf.\ Supplementary Note A).}. The LRP-0 rule  \cite{bach-plos15, DBLP:series/lncs/MontavonBLSM19} lets us propagate these scores to the layer below representing concepts:
\begin{align}
R_k &= \sum_{j} \frac{\bh_k^\top (U_k)_j }{a_j^\prime}  R_j,
\label{eq:lrp-rule}
\end{align}
where $\sum_j$ runs over all activated neurons $j$. After some rearranging, the same $R_k$ can be restated as:
\begin{align}
R_k = \big(U_k^\top \ba\big)^\top \big(U_k^\top \bc\big),
\label{eq:Rk}
\end{align}
where $\bc$ is a $D$-dimensional vector whose elements are given by $c_j = R_j / a_j^\prime$ for all activated neurons $j$ and $c_j=0$ otherwise.  The vector $\bc$ can be interpreted as the model response to a local variation of the activations, and we refer to it in the following as the `context vector'.

In Supplementary Note B, we show that other attribution methods such as Gradient$\,\times\,$Input \cite{DBLP:journals/corr/ShrikumarGSK16} and Integrated Gradients \cite{DBLP:conf/icml/SundararajanTY17} (with 
zero reference value) also produce relevance scores of the form of Eq.\ \eqref{eq:Rk}, and we provide an expression for their respective context vector $\bc$. Because methods based on the Shapley value \cite{DBLP:journals/jmlr/StrumbeljK10,DBLP:conf/nips/LundbergL17} do not yield the form of Eq.\ \eqref{eq:Rk}, one requires an approximation of the latter. In particular, our solution consists of first computing Shapley values w.r.t.\ the activations $a_j$ (or groups of it), and then performing the propagation step onto concepts using the LRP rule of Eq.\ \eqref{eq:lrp-rule}.

\section{Learning Relevant Subspaces}
\label{section:learning}

Having expressed concept relevance $R_k$'s in terms of known quantities, specifically for each data point, (i) activations vectors $\ba$ that we can collect and (ii) context vectors $\bc$ representing the model response and that we can compute (e.g.\ using LRP), we can formulate various concept relevance maximization objectives over the transformation matrices $U_k$'s.

\subsection{Principal Relevant Component Analysis (PRCA)}
\label{section:prca}

The first objective we propose is to extract a subspace that is maximally relevant to the model prediction. Consider our virtual layer has the simple block structure
\begin{align}
\bbu = (U\,|\,\widetilde{U}),
\label{eq:block-prca}
\end{align}
where $U \in \R^{D \times d}$ defines a projection onto a subspace of fixed dimensions $d$ and where $\widetilde{U} \in \R^{D \times (D-d)}$ projects to the orthogonal complement. We ask ``what matrix $U$ yields a subspace that is maximally relevant for the prediction''. Starting from the expression of relevance in Eq.\ \eqref{eq:Rk}, we can formulate the search for such a maximally relevant subspace via the optimization problem:
\begin{align}
\maximize_{U}~\mathbb{E}[(U^\top \ba)^\top (U^\top \bc\big)]
\label{eq:prca-objective}\\
\text{subject to:}~~U^\top U = I_{d}, \nonumber
\end{align}
where the expectation denotes an average over some dataset (e.g.\ images of a given class and the associated activation and context vectors). Using linear algebra identities, the same optimization problem can be reformulated as:
$\max_{U} \Tr(U^\top \mathbb{E}[ \ba \bc^\top] U)$ subject to $U^\top U = I_{d}$, where a cross-covariance term between the activations $\ba$ and the context vector $\bc$ appears. The solution to this optimization problem is the eigenvectors associated to the $d$ largest eigenvalues of the symmetrized cross-covariance matrix $$\boldsymbol{\Sigma} = \mathbb{E}[\ba \bc^\top + \bc \ba^\top]$$ (cf.\ Supplementary Note C.2 for the derivation). In practice, the orthogonal matrix of Eq.\ \eqref{eq:block-prca} can therefore be computed using a common eigenvalue solver:
\begin{align*}
\bbu &= \text{eigvecs}(\boldsymbol{\Sigma}),
\end{align*}
and, assuming eigenvectors are sorted by decreasing eigenvalues, we recover the blocks of that matrix as
\begin{align*}
    U &= \bbu_{:,1\dots d},\\
    \widetilde{U} &= \bbu_{:,d+1\dots D}.
\end{align*}
The proposed PRCA differs from standard PCA, by also taking into account---via the context vector $\bc$---how the output of the network responds to the activations $\ba$. Thus, PRCA is able to ignore high variance directions in the data when the model is invariant or responds negatively to these variations. \mbox{Fig.\ \ref{fig:toy-prca}} provides an illustration of this effect on two-dimensional data, showing that the PRCA subspace aligns more closely to the model response than PCA.

\begin{figure}
    \centering
    \includegraphics[clip,trim=0 0 0 1cm,width=\linewidth]{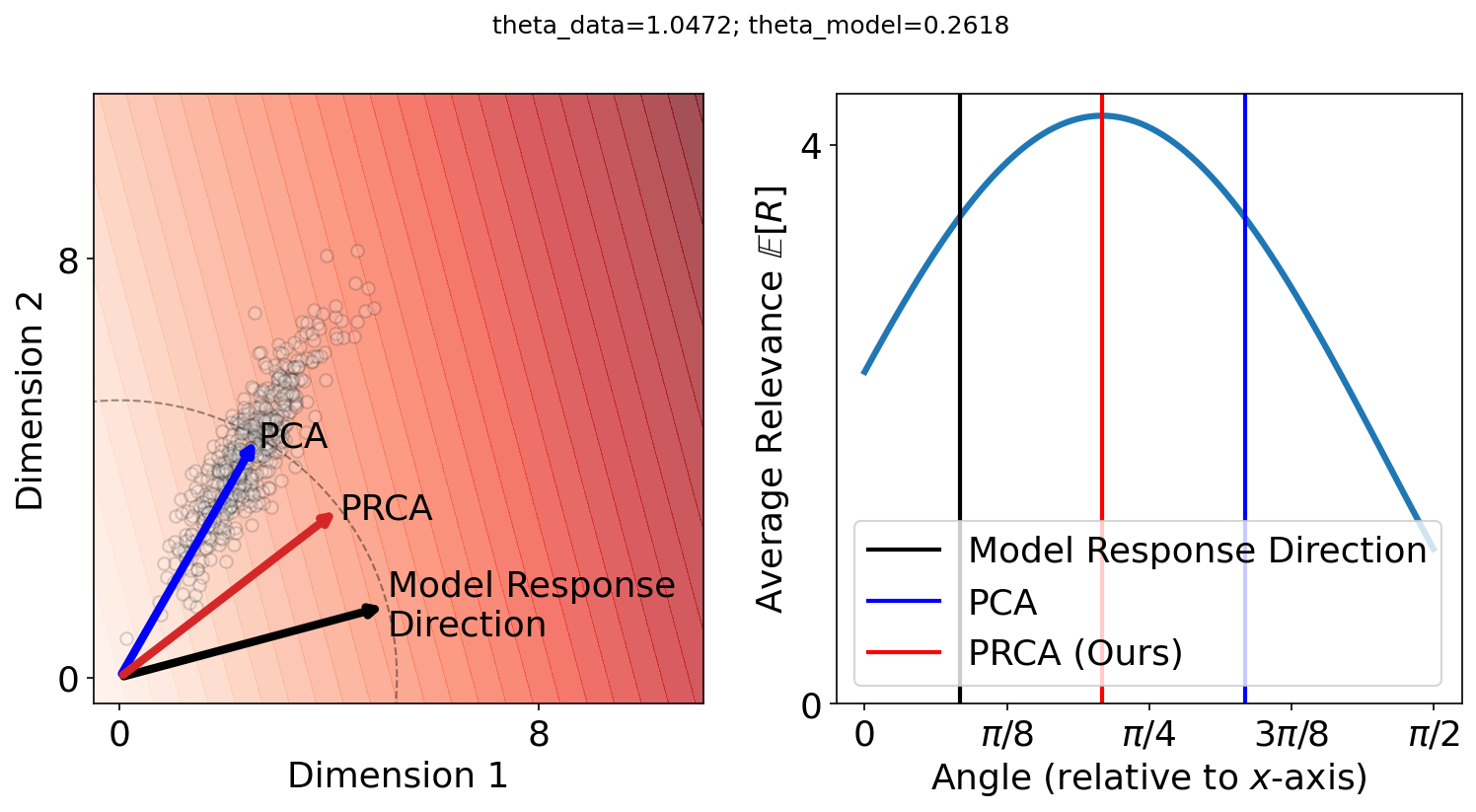}
    \caption{Left: Comparison between the one-dimensional subspace extracted by (uncentered) PCA and PRCA in a synthetic two-dimensional activation space. Right: Average relevance as a function of the angle of the subspace. The vertical lines correspond to the angles of the vectors in the left panel. By design, PRCA maximizes average relevance.}
    \label{fig:toy-prca}
\end{figure}

Note that several related approaches to refocus PCA on task-specific features have been proposed, although in a different context than Explainable AI. This includes `directed PCA' \cite{imhoff1982use,Fraser1987}, where a subset of task-related features are selected before running PCA. It also includes `supervised PCA' \cite{DBLP:journals/pr/BarshanGAJ11} which formulates a trace maximization problem involving both input and labels, and methods based on partial least squares \cite{DBLP:books/sp/HastieFT01}.

\subsection{Disentangled Relevant Subspace Analysis (DRSA)}
\label{section:irca}

\begin{figure}[t!]
    \centering
    \includegraphics[clip,trim=0.25cm 13.7cm 13cm 0.8cm,width=0.24\textwidth]{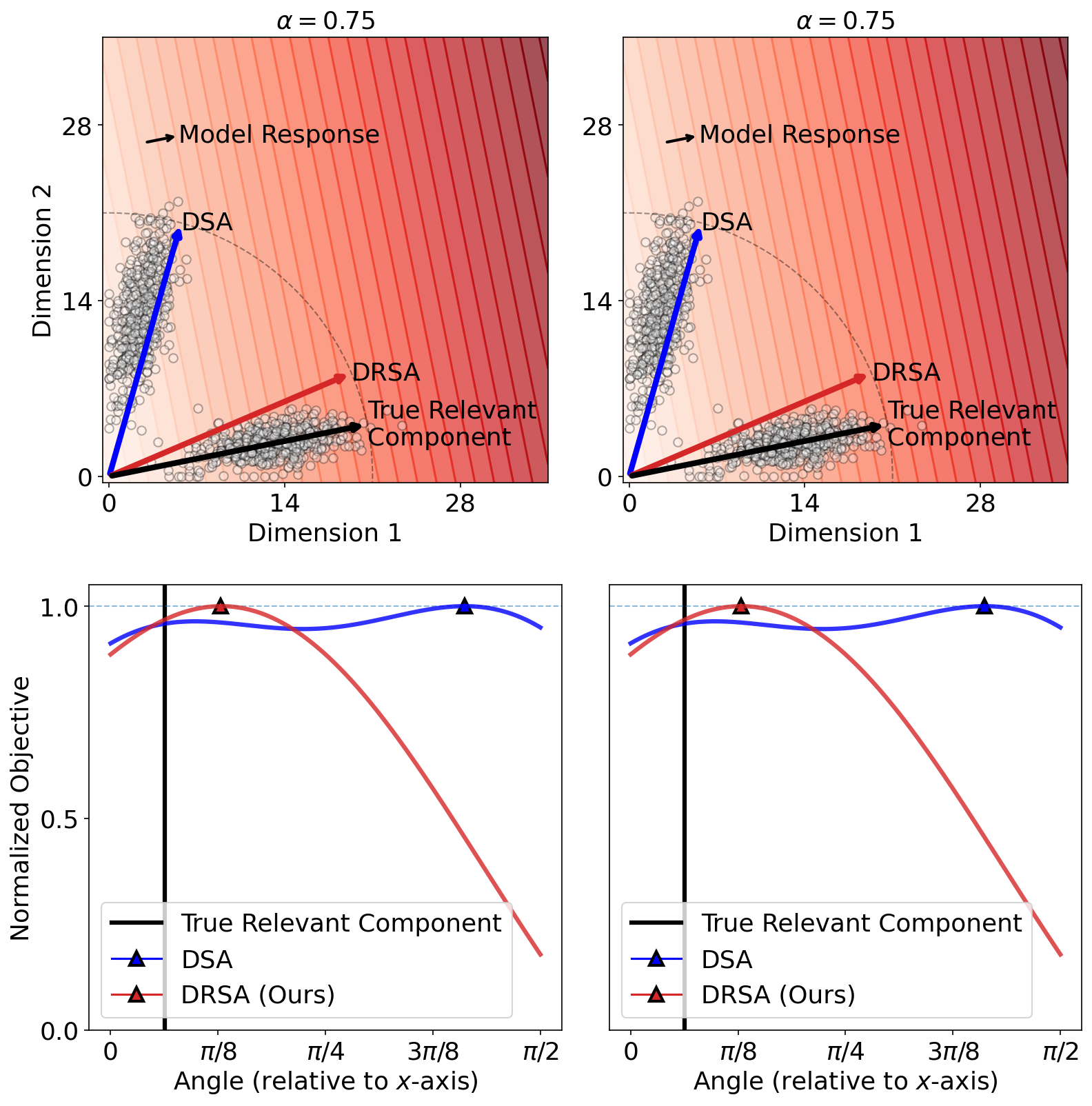}
     \includegraphics[clip,trim=0.25cm 0.25cm 13cm 14.2cm,width=0.24\textwidth]{figures/toy-example/toy-example-drsa}
    \caption{Left: One-dimensional subspaces extracted by DRSA and DSA (a reduction of DRSA where $\ba$ is used in place of $\bc$) in a synthetic two-dimensional activation space. Unlike DSA, our proposed approach DRSA takes the model response (shown as a contour plot) into account, thereby being able to focus on the relevant component. Right: Normalized objective of DSA and DRSA at different angles.}
    \label{fig:toy-drsa}
\end{figure}

We now extend the approach above for the purpose of separating an explanation into multiple components representing the different concepts contributing to the overall decision strategy. Specifically, we partition the activation space into multiple subspaces, with subspaces optimized in a way that they maximize some higher-order statistics of relevance scores.

Let the orthogonal matrix $\bbu$ associated to our virtual layer be structured in the general form of Eq.\ \eqref{eq:proj-matrix}. Let $n \in \mathcal{D}$ be indices for different data points and
\begin{align}
R_{k,n}^+(\bbu) = \max(0,(U_k^\top \ba_n\big)^\top \big(U_k^\top \bc_n\big)\big)
\end{align}
be the positive part of the relevance associated to subspace $k$ according to Eq.\ \eqref{eq:Rk} for a given data point $n$. The rectification operation allows us to focus our subsequent analysis on extracting salient positive contributions. In particular, we define our disentanglement-inducing objective as:
\begin{align}
\maximize_{\bbu}~~\mathbb{M}^{q}_{k \in \{1,\dots,K\}} \big[ \mathbb{M}^2_{n \in \mathcal{D}} \big[ R_{k,n}^+(\bbu) \big]\big]
\label{eq:drsa-objective}\\
\text{subject to:}~~\bbu^\top \bbu = I_D, \nonumber
\end{align}
where we use in practice $q=0.5$. The operator $\mathbb{M}^p$ denotes a generalized F-mean with function $F(t) = t^p$. The special cases $\mathbb{M}^{0.5}$ and $\mathbb{M}^{2}$ can be interpreted as soft min- and max-poolings respectively.  The soft max-pooling over data points serves to encourage subspaces to align with instances $n$'s with particularly high relevance scores. These instances can be interpreted as prototypes for each identified component of the decision strategy. The soft min-pooling over subspaces, on the other hand, serves to favor solutions that balance the total relevance attributed to the different subspaces. Other nested pooling structures are found in the Independent Subspace Analysis algorithms of \cite{DBLP:series/civ/HyvarinenHH09,DBLP:conf/cvpr/LeZYN11}. These nested structures have also been studied in more depth in \cite{DBLP:journals/jmlr/SinzB10}. The behavior of DRSA on a synthetic two-dimensional activation space and a single one-dimensional subspace is illustrated in Fig.\ \ref{fig:toy-drsa}.

While the optimization problem above is non-convex and does not have a closed form solution, we can proceed iteratively, starting from a random orthogonal matrix\footnote{We can sample such an orthogonal matrix from the orthogonal group using e.g.\ the `ortho\_group' function from SciPy \cite{2020SciPy-NMeth}.}, and similarly to \cite{DBLP:conf/cvpr/LeZYN11}, alternating gradient ascent and orthogonalization steps ($\bbu \gets \bbu(\bbu^\top \bbu)^{-1/2}$ \cite{hyvarinen-ica-ch6}).

\subsection{Theoretical Properties}

The proposed relevant subspace analyses have a number of desirable theoretical properties:
\begin{proposition}
Let $\bbu = (U_k)_k$ be an orthogonal matrix formed by $U_k$'s.  Using the formulation of relevance $R_k = (U_k^\top \ba)^\top(U_k^\top \bc)$ with $\bc$ such that $R_j = a_j^\prime c_j$, we have the conservation property $\sum_k R_k = \sum_j R_j$. Furthermore, when $\bc = \xi \ba$ with $\xi \geq 0$, then we necessarily have $R_k\geq 0$.
\label{proposition:conservation}
\end{proposition}
\noindent (A proof can be found in Supplementary Note C.1.)
These properties follow from the orthogonality constraint on the matrices $(U_k)_k$. The first property (conservation) ensures that the two-step explanation produced by our method retains the conservation properties of the original explanation technique it is based on. The second property (positivity) ensures that an absence of contradiction in the decision function (e.g.\ a perfect alignment between activations and model response) results in a similar absence of contradiction w.r.t.\ concepts.

The following result links the proposed PRCA and DRSA algorithms to well-known analyses such as PCA and ICA.
\begin{proposition}
When the context vector $\bc$ is equivalent to the activation vector $\ba$, the PRCA analysis reduces to uncentered PCA. Furthermore, if we assumed whitened activations, i.e., $\mathbb{E}[\ba] = \boldsymbol{0}$ and $\mathbb{E}[\ba\ba^\top] = I$, and each matrix $U_k$ projecting to a subspace of dimension $1$, then the DRSA analysis with parameter $q=2$ reduces to ICA with kurtosis as a measure of subspace independence.
\label{proposition:reduction}
\end{proposition}
\noindent (A proof can be found in Supplementary Note C.1.)
In other words, with some restrictions on the parameters, our proposed algorithms reduce to PCA and ICA when the model response is perfectly aligned with the activations. Unlike PCA and ICA, our analyses are able to extract subspaces that are relevant to the prediction even when the model response does not align well with the activations. 

\medskip

When considering the level of access to the model our method requires, it can be characterized as a white-box method. Specifically, our method requires access to at least one intermediate layer to perform PRCA and DRSA. When using our method together with attribution methods such as LRP, access to all layers of the model is required in order to implement the LRP propagation rules.

\section{Quantitative Evaluation}
\label{section:evaluation}

To evaluate the performance of our PRCA and DRSA methods at extracting relevant subspaces and producing disentangled explanations respectively, we perform experiments on the ImageNet \cite{imagenet_cvpr09} and Places365 \cite{zhou2017places} datasets. For ImageNet, we consider a subset of 50 classes\footnote{For ease of reproducibility and maximizing class coverage, we choose ImageNet classes with indices $\{0, 20, 40, \dots, 980\}$.} and three publicly available  pre-trained models. These models are two \mbox{VGG16} \cite{DBLP:journals/corr/SimonyanZ14a} models---which are from the TorchVision (TV) \cite{torchvision2016} and NetDissect (ND) \cite{DBLP:journals/pami/ZhouBO019} repositories, denoted by VGG16-TV and VGG16-ND\footnote{We remark that VGG16-ND is our PyTorch version of the  model (originally in Caffe \cite{jia2014caffe}'s format) on which the relation between concepts and feature maps has been analyzed in \cite{DBLP:journals/pami/ZhouBO019}, allowing for a more direct comparison between the previous work and our DRSA approach. The original model is available at \url{http://netdissect.csail.mit.edu/dissect/vgg16_imagenet/}.} respectively---and the NFNet-F0 model (the smallest variant of a more recent architecture called Normalizer-Free Networks  (NFNets) \cite{pmlr-v139-brock21a}) from PyTorch Image Models \cite{rw2019timm}. For Places365, we consider a subset of seven classes\footnote{These Places365 classes are similar to the ones visualized in Ref. \cite{DBLP:conf/eccv/ZhouSBT18}'s Fig.\ 4.} and the ResNet18 \cite{He_2016_CVPR} model provided by Ref. \cite{DBLP:conf/eccv/ZhouSBT18}.

We evaluate our proposed methods with Shapley Value Sampling---an approximation of the classic Shapley value---and LRP; these two attribution techniques are chosen based on patch-flipping experiments \cite{samek-tnnls17} (see Supplementary Note D). We use the implementation of Shapley Value Sampling from Captum \cite{kokhlikyan2020captum}. Our LRP implementation for VGG16 is based on \mbox{\lrpgamma{}} used in \cite{bilrp}. For NFNets,  we contribute a novel LRP implementation (see Supplementary Note K).
For ResNet18, we use the LRP implementation from Zennit \cite{anders2021software}. We provide the details of these attribution methods' hyperparameters in Supplementary Note D.

In the following, we focus on evaluating the proposed approaches using activations from VGG16 at Conv4\_3 (after ReLU), NFNet-F0 at Stage 2, and ResNet18 at Layer 4\footnote{We adopt the layer-name conventions of  VGG16 from \cite{DBLP:journals/pami/ZhouBO019}, of NFNet-F0 from \cite{pmlr-v139-brock21a}, and of ResNet18 from \cite{torchvision2016}.}. We refer to ablation studies on the choice of layers and the number of subspaces in Supplementary Note F.

To extract subspaces, we randomly choose 500 training images of each class and take their feature map activations at the layer of interest. For each image, we randomly pick 20 spatial locations in the feature maps. The procedure results in a collection of $10000$ activation vectors for each class. We also collect corresponding context vectors (w.r.t. the target class) associated to Shapley Value Sampling and LRP attribution methods. We summarize these details in Supplementary Note E. All our evaluations are performed on a validation set disjoint from the data used for training the networks and optimizing the PRCA/DRSA subspaces.

\subsection{Evaluation of PRCA}
\label{sec:experiment-1}

\begin{table*}[t!]
\centering
    \caption{Patch-Flipping evaluation of PRCA for subspace size $d=1$. Evaluation is performed over different combinations of models, datasets and underlying attribution techniques (columns). We report for each method in our evaluation (rows) the AUPC score. The AUPC scores are computed by averaging over instances within each class and then averaging over classes. The best  method for each setting is shown in bold, and the second best with underline. The proposed PRCA method performs best in all settings. ($\dagger$) average from three seeds.}
    \label{table:aupc-single-subspace}
    \begin{tabular}{lcccccc}
    \toprule

     & \multicolumn{5}{c}{ImageNet} & \multicolumn{1}{c}{Places365} \\
 
    \cmidrule(lr){2-6} \cmidrule(l){7-7}
    
     & \rotatebox{90}{\parbox{1.5cm}{VGG16-TV\\+ LRP}}
     & \rotatebox{90}{\parbox{1.5cm}{VGG16-ND\\+ LRP}}
     & \rotatebox{90}{\parbox{1.5cm}{NFNet-F0\\+ LRP}}
     & \rotatebox{90}{\parbox{1.5cm}{VGG16-TV\\+ Shapley}}
     & \rotatebox{90}{\parbox{1.5cm}{VGG16-ND\\+ Shapley}}
     & \rotatebox{90}{\parbox{1.5cm}{ResNet18\\+ LRP}}\\
     
     \midrule

%% Version: 2023-12-20 22:02:02.061993
%% function: make_table_single_subspace_pf_aucs
%% artifact-dir=../artifacts/2023-12-revision-v1.20.0/raw-main-experiment
%% pixel-flipping: patch_size=16

% > identity-ns1-ss[SS]
              \textit{No subspace projection ($U=I_D$)} &   \textit{5.97} &   \textit{4.98} &   \textit{3.75} &   \textit{5.30} &   \textit{4.98} &   \textit{3.60}   \\
\midrule

% > random*-ns1-ss1
          Random subspace$^\dagger$ &   9.28 &   8.22 &   5.46 &   8.82 &   8.06 &   4.81   \\

% > max-rel-ns1-ss1
         Most relevant feature maps \cite{Achtibat2023} &   \underline{7.67} &   6.81 &   \underline{4.55} &   7.30 &   6.93 &   3.98   \\

% > pca-ns1-ss1
                      PCA &   7.92 &   \underline{6.45} &   6.64 &  \underline{6.07} &   \underline{5.85} &   \underline{3.89}   \\

% > prca-ns1-ss1
              PRCA (Ours) &   \textbf{5.36} &   \textbf{4.91} &   \textbf{3.84} &   \textbf{5.75} &   \textbf{5.56} &   \textbf{3.80}   \\[1mm]

         % largest stderr &   0.30 &   0.28 &   0.10 &   0.29 &   0.27 &   0.42

%% end table %%
 \textit{Error bars (max)} & $\pm$ 0.30 & $\pm$ 0.28 & $\pm$ 0.10 & $\pm$ 0.29 & $\pm$ 0.27 & $\pm$ 0.42 \\
    \bottomrule
   \end{tabular}
\end{table*}

We test the ability of our PRCA method to extract a low-dimensional subspace of the activations, that retains input features used by the model to build its prediction. The extraction of what is \textit{relevant} (vs. \textit{irrelevant}) in an ML model has recently found application in the context of model compression (e.g.\ \cite{DBLP:journals/pr/YeomSLBWMS21}). We first recall from Section \ref{section:concept-relevance} that any subspace of the activations, and the matrix $U$ of size $D \times d$ that projects onto it, embodies an amount of relevance expressible as:
$$
R = (U^\top \ba)^\top(U^\top \bc).
$$
The relevance can then be attributed to the input space, using LRP or Shapley instantiations of $\mathcal{E}(R,\bx)$. We quantify how closely (in spite of the dimensionality bottleneck) the produced explanation describes the neural network prediction strategy using pixel-flipping \cite{bach-plos15, samek-tnnls17}, a common evaluation scheme, sometimes also referred to as deletion/insertion experiments \cite{DBLP:conf/bmvc/PetsiukDS18}.

Pixel-flipping (in our case, `patch-flipping'), proceeds by removing patches from the input image, from most to least relevant, according to the explanation\footnote{We opt for the removal-based rather than the insertion-based variant of patch-flipping because it makes the task of inpainting missing patches easier and thereby reduces evaluation bias. The replacement values for removed patches are set according to the TELEA \cite{telea2004image} algorithm---a neighborhood-based inpainter---from OpenCV \cite{opencv_library}.}. As patches are being removed iteratively, we keep track network output and then compute the ``area under the patch-flipping curve'' (AUPC) \cite{samek-tnnls17}
\begin{align}
    \text{AUPC} = \mathbb{E} \Big[ \sum_{\tau=1}^T w(\tau) \bigg( \frac{f(\bx^{(\tau-1)})  - f(\bx^{(\tau)}) }{2} \bigg) \Big],
     \label{eq:aupc-single-subspace}
 \end{align}
where $\bx^{(\tau)}$ denotes the image after $\tau$ removal steps, $T$ is the number of steps until all patches have been removed from the image, $\mathbb{E}$ denotes an average over images of class $t$ in the validation set, and $f$ is the neural network output for class $t$ to which we have applied the rectification function to focus on positive evidence. The weighting function $w(\tau) \in (0, 1)$ is the difference between the percentage of patches flipped at the $\tau$-th and ($\tau$-1)-th steps. To make experiments executable in a reasonable time, we measure relevance over patches of size $16 \times 16$, and we flip $\tau^2$ such patches at each step. The lower the AUPC score, the better the explanation, and the better the subspace $U$.
 
To the best of our knowledge, there are no existing baselines from the literature that are designed to extract a subspace of the activations that can preserve the decision strategy of a given class\footnote{Ref.\ \cite{DBLP:conf/nips/YehKALPR20} studies a related problem: completeness in latent space, although its objective is to verify whether extracted concept vectors can restitute the full accuracy of the model. In contrast, our objective is to extract a subspace that maximally expresses the predicted evidence for a certain class.}. Hence, for comparison, we consider several baselines: 1) a random subspace\footnote{We take the first $d$ columns of a random orthogonal matrix.}; 2) the subspace of the first $d$ eigenvectors of the standard (uncentered) PCA on the activations; and 3) the subspace corresponding to the $d$ most relevant feature maps (Max-Rel) \cite{Achtibat2023}.

We can view the choice of baselines as a special ablation study of PRCA. Specifically, PCA corresponds to PRCA with the context vector $\bc$ representing model response set to the activation vector $\ba$ (Proposition \ref{proposition:reduction}); Max-Rel is a reduction of PRCA where the basis of the subspaces are canonical; and the random approach can be thought of as an `untrained' PRCA.

Results are given in Table \ref{table:aupc-single-subspace} for subspace size $d=1$. We observe that, PRCA strongly surpasses the baseline methods across configurations. The observation indicates that PRCA can identify a  subspace of the activations that is relevant to the prediction.

\begin{figure}[t!]
    \centering
    \includegraphics[width=0.8\linewidth,clip,trim={0 0 0 1.2cm}]{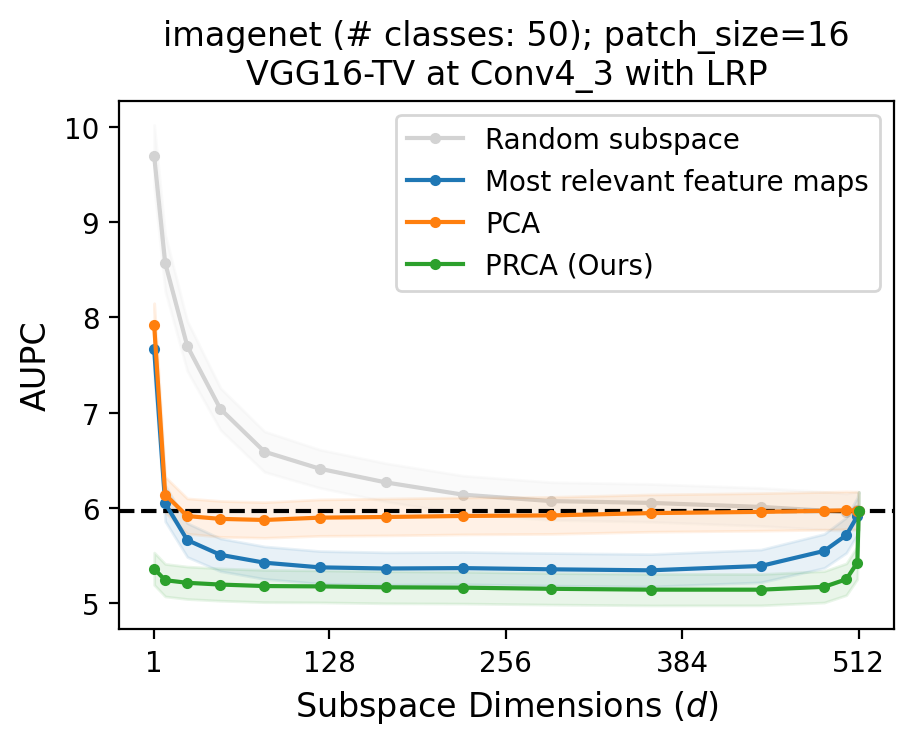}
    \caption{AUPC scores of different subspace methods when varying the subspace dimensionality (the variable $d$); lower is better.  The analysis is performed on VGG16-TV with LRP (same as column 1 in Table \ref{table:aupc-single-subspace}). Each curve is an average over the means of these 50 classes, and shaded regions represent one standard error (over classes). The horizontal dashed line represents the AUPC of no subspace projection.}
    \label{fig:single-subspace-increasing-dimensions}
\end{figure}

Next, we investigate the influence of the subspace dimensions $d$ on the quality of the subspace. We analyze the AUPC score as a function of $d$.  We perform the experiment on  the VGG16-TV model with LRP. \mbox{Fig.\ \ref{fig:single-subspace-increasing-dimensions}} shows that PRCA has the lowest AUPC scores across different values of $d$. The result supports the conclusion that the top few principal components of PRCA accurately  capture the evidence the neural network builds in favor of the image's class. \mbox{The fact} that PRCA (and also Max-Rel) performs better than no subspace projection (i.e.\ retaining the whole activation space) suggests that there exists some amount of inhibitory signal in activations that conceals mildly relevant features in the original heatmaps. By construction, the first few PRCA components are maximally relevant directions, and projecting activations onto them decreases the amount of such inhibitory signal. The showcase we present in Section \ref{sec:showcase-3}, where we use PRCA to robustify an explanation under adversarial manipulations, corroborates the above interpretation.

\subsection{Evaluation of DRSA}
\label{sec:eval-drsa}

The second question we have considered in this paper (and for which we have proposed the DRSA analysis) is whether the explanation can be disentangled into multiple components that are distinctly relevant for the prediction. Specifically, we have set the problem of disentanglement as partitioning the space of activations into $K$ subspaces, defined by their respective transformation matrices $(U_k)_k$. From these $K$ subspaces, one can retrieve each component of the explanation as $\mathcal E(R_k, \bx)$. All methods in our benchmark yield such a decomposition onto a fixed number of $K$ components (they only differ in the choice of the matrices $U_k$'s).

For evaluation purposes, we propose an extension of the patch-flipping procedure used in Section \ref{sec:experiment-1}. The extended procedure allows us to quantify the level of disentanglement, specifically, verifying that the multiple explanation components highlight distinct (spatially non-overlapping) visual elements contributing to the neural network's prediction. Our extension consists of running multiple instances of patch-flipping in parallel (one per component $k$) and aggregating patch removals coming from each component. More specifically, denoting by $\boldsymbol{M}_k^\tau$ an indicator vector of patches removed based on the $k$th component after $\tau$ steps, we define the overall set of patches to be removed after these steps to be $\boldsymbol{M}^\tau = \cup_{k=1}^K \boldsymbol{M}_k^\tau$, where the union operator applies element-wise. An illustration of the modified patch-flipping procedure is given in Fig.\ \ref{fig:modpflip}. Similar to Section \ref{section:prca}, as the patch-flipping proceeds, we keep track of the model output and compute the AUPC score. Our extended patch-flipping procedure shares similarity with the evaluation procedures in \cite{ghorbani19,Vielhaben2022} as the latter also remove features based on the distinct explanation components (or concepts).

\begin{figure}[b!]
    \centering
    \includegraphics[width=.75\linewidth]{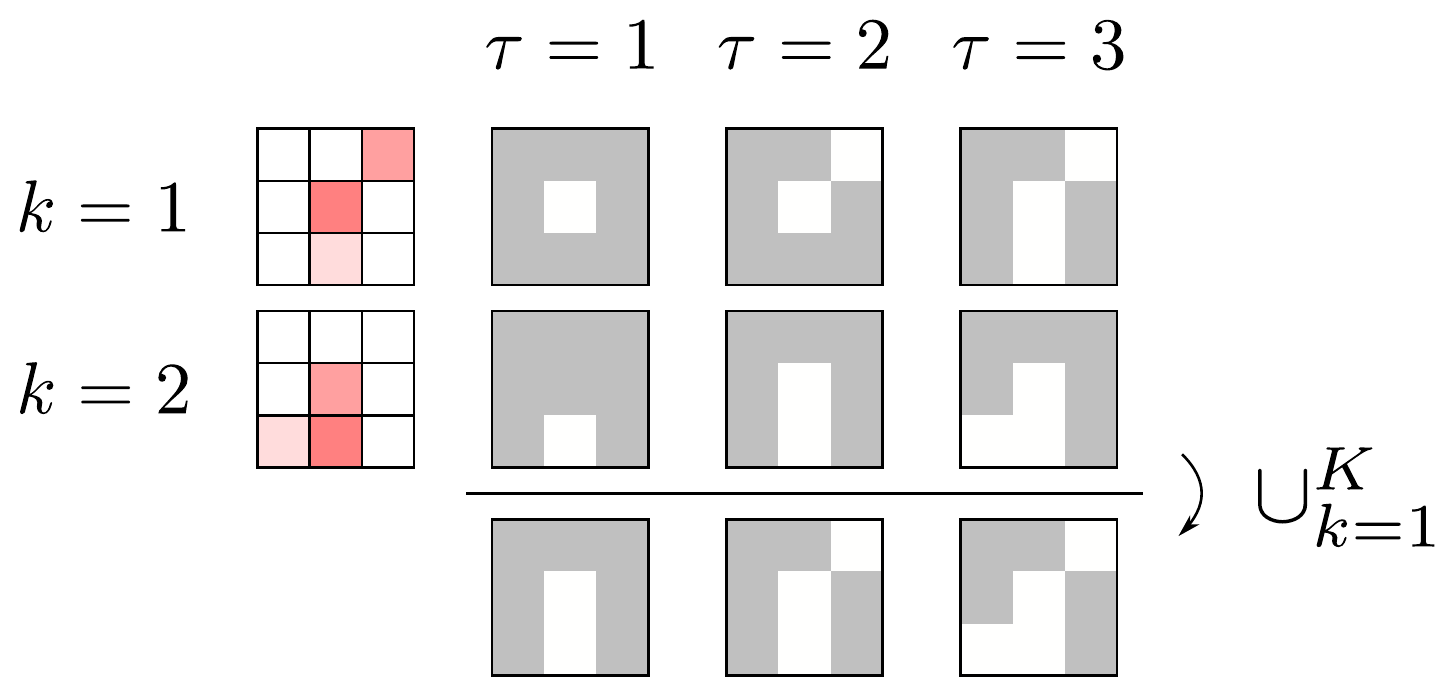}
    \caption{Illustration of our modified patch-flipping procedure to evaluate explanation disentanglement. The first column shows a two-concept explanation ($k=1,2$) with red color intensity indicating patch relevance. The next three columns show the indicator vectors for the first three steps ($\tau=1,2,3$) of the modified patch-flipping procedure, with white color indicating removed patches.}
    \label{fig:modpflip}
\end{figure}

\begin{table*}[]
    \centering
    
    \caption{Patch-Flipping evaluation of DRSA for $K=4$ subspaces. Evaluation is performed over the same dataset, model and attribution settings (columns) as in Table \ref{table:aupc-single-subspace}.  We report for each method in our evaluation (rows) the AUPC score. Like in Table \ref{table:aupc-single-subspace}, the AUPC scores are first averaged over instances and then over classes. The best method is shown in bold, and the second best with underline. Entries with `$\times$' are not applicable.  ($\dagger$) average from three seeds.}
    \label{table:aupc-multile-subspaces} 
    \begin{tabular}{lcccccc}
    \toprule
     & \multicolumn{5}{c}{ImageNet} & \multicolumn{1}{c}{Places365} \\
    \cmidrule(lr){2-6} \cmidrule(l){7-7}
    
     & \rotatebox{90}{\parbox{1.5cm}{VGG16-TV\\+ LRP}}
     & \rotatebox{90}{\parbox{1.5cm}{VGG16-ND\\+ LRP}}
     & \rotatebox{90}{\parbox{1.5cm}{NFNet-F0\\+ LRP}}
     & \rotatebox{90}{\parbox{1.5cm}{VGG16-TV\\+ Shapley}}
     & \rotatebox{90}{\parbox{1.5cm}{VGG16-ND\\+ Shapley}}
     & \rotatebox{90}{\parbox{1.5cm}{ResNet18\\+ LRP}}\\
     
     \midrule

%% Version: 2023-12-22 09:38:46.947095
%% function: make_table_multiple_subspaces_aupc
%% artifact-dir=../artifacts/2023-12-revision-v1.20.0/raw-main-experiment
%% metric=basis_auc
%% pixel-flipping: patch_size=16

% > identity-ns1-ss[SS]
              No subspace partitioning $(K=1)$ &   \textit{5.97} &   \textit{4.98} &   \textit{3.75} &   \textit{5.54} &   \textit{5.20} &   \textit{3.60}   \\
            \midrule

% > random*-ns4-ss[SS]
          Random subspaces$^\dagger$ &   4.07 &   3.47 &   3.39 &   2.96 &   2.78 &   3.23   \\

% > netdissect
               NetDissect &   3.36 &   3.03 &   $\times$  &   3.30 &   3.03 &   $\times$   \\

% > ibd
                      IBD &   $\times$  &   $\times$  &   $\times$  &   $\times$  &   $\times$  &   2.57  \\

% > learnt--dsa-ns4-ss[SS]-sm2-seed1
                      DSA &   \underline{3.20} &   \underline{2.80} &   \textbf{1.96} &   \underline{2.76} &   \underline{2.64} &   \underline{2.40}   \\

% > learnt--drsa-ns4-ss[SS]-sm2-seed1
              DRSA (Ours) &   \textbf{2.81} &   \textbf{2.53} &   \textbf{1.96} &   \textbf{2.52} &   \textbf{2.45} &   \textbf{2.16}   \\[1mm]

         % largest stderr &   0.20 &   0.16 &   0.08 &   0.25 &   0.21 &   0.41
%% end table %%
        
\textit{Error bars (max)} & $\pm$ 0.20 & $\pm$ 0.16 & $\pm$ 0.08 & $\pm$ 0.25 & $\pm$ 0.21 & $\pm$ 0.40\\

     \bottomrule
     \end{tabular}
\end{table*}

We evaluate our DRSA method against a number of baselines. The first  baseline is random subspaces constructed from a random $D \times D$ orthogonal matrix. The second  baseline, called DSA, is an ablation of the objective of DRSA where we replace the context vector with the activation vector itself.

For ImageNet experiments, the third baseline is NetDissect \cite{DBLP:journals/pami/ZhouBO019}, a state-of-the-art framework linking neurons to a large set of real-world concepts extracted from the Broden database \cite{BauZKO017}. The approach associates each filter in the layer's feature map to a concept available in the dataset.  For each identified concept, we define its subspace as the span of the standard basis vectors of the associated filters. We refer to Supplementary Note H for our reproduction details of NetDissect.
For Places365 experiments, the third baseline is  concept directions from IBD \cite{DBLP:conf/eccv/ZhouSBT18}.  Because these concept vectors do not form an orthogonal basis, we adapt our formulation of the virtual layer accordingly (details are provided in Supplementary Note I).

We set the number of subspaces to $K=4$. For the random subspaces, DSA, and DRSA, we choose the dimensions of each subspace to be $D/K$. To build the DSA and DRSA subspaces, we use each class's set of activation (and context) vectors similar to the setup in Section \ref{sec:experiment-1}. We provide  the training details of DSA and DRSA in   Supplementary Note E. Because Shapley Value Sampling is computationally demanding, we report for this method an average over only 10 validation instances per class.

Table \ref{table:aupc-multile-subspaces} shows the AUPC scores across setups. We observe that our proposed approach (DRSA) outperforms baseline methods by reaching the lowest scores across these setups. The observation  also aligns with the visual inspection of Fig.\ \ref{fig:overview} earlier in the paper, where spatially distinct concepts could be identified from the DRSA explanations.   The result suggests that the subspaces of DRSA capture distinctively relevant components in the decision of the neural network.

When ranking the 50 ImageNet classes based on the AUPCs score obtained by DRSA on  VGG16-TV, we observe that class `zebra' comes first. A subsequent visual inspection reveals that evidence for the class zebra arises from multiple, spatially disentangled, concepts such as the zebra's shape, its unique texture, and the environment in which they are located. We provide the details of the class comparison as well as qualitative examples in \mbox{Supplementary Note G}.

We further conduct ablation studies on the choice of layers and the number of subspaces. The results of these studies align with the conclusions from Table \ref{table:aupc-multile-subspaces}. In addition to these studies, we also perform experiments that verify certain intrinsic properties of the produced explanations. We refer to Supplementary Note F for these results.

\section{Application Showcases}

\begin{figure*}[b!]
    \centering
    \includegraphics[width=.875\textwidth,clip,trim=0 0 70 0]{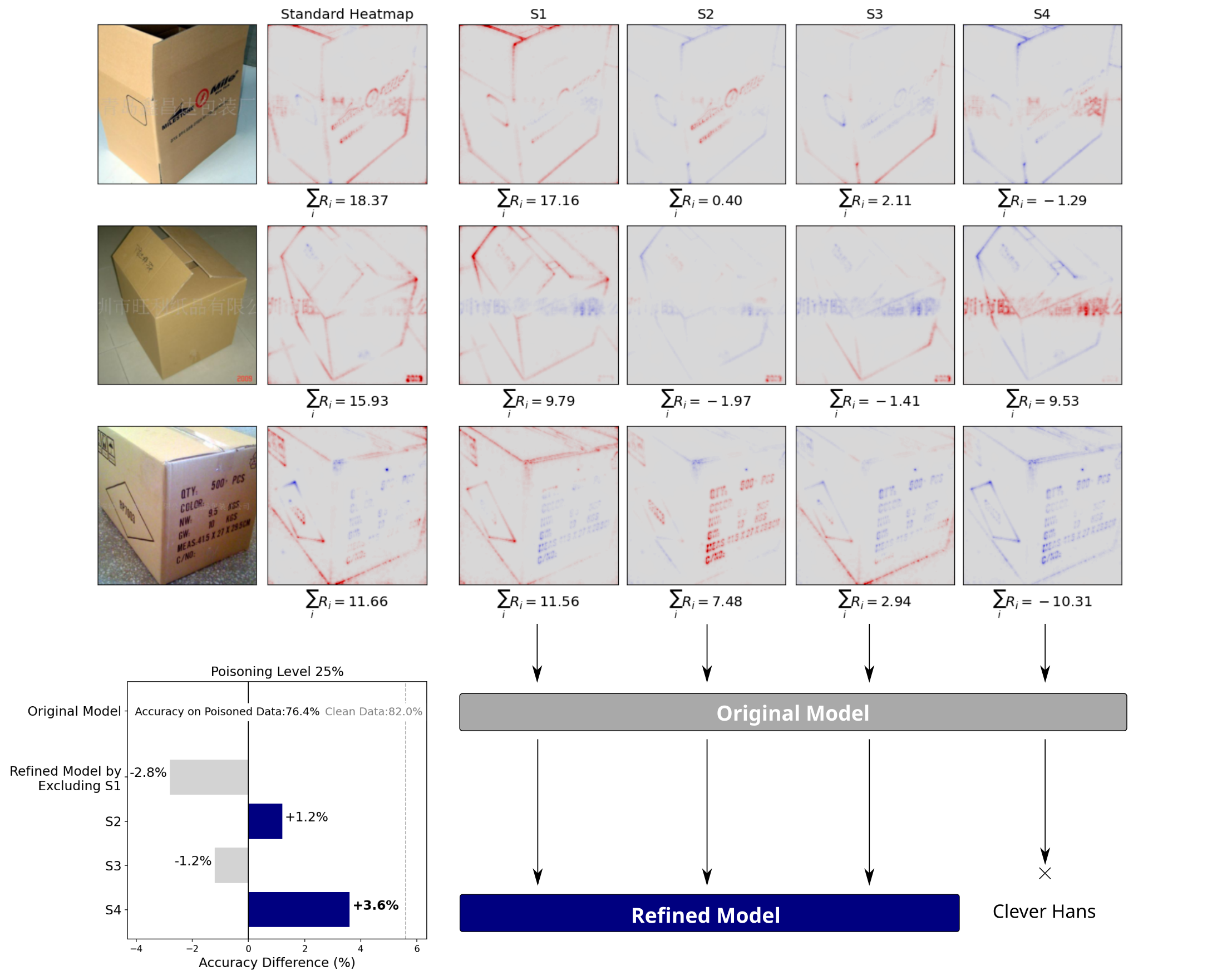}
    
    \caption{Top: Training images from class `carton' with their standard and DRSA heatmaps from VGG16-TV at Conv4\_3 using the LRP backend. The heatmaps are generated w.r.t. the logit of class `carton'. Red and blue colors indicate positive and negative pixel-concept contributions to the logit of the class. Bottom: Effect of model refinement (subtraction of each concept's contribution from the logit) on the accuracy when the validation data is poisoned with watermarks at a rate of 25\%.}
    \label{fig:showcase1-overview}
\end{figure*}

We showcase three possible applications of the proposed PRCA and DRSA methods, namely (1) building a more trustworthy ML model by detection and removal of Clever Hans strategies in the model, (2) getting better insights into the data by highlighting multiple ways input and output variables are related, and (3) bringing further understanding about the problem of adversarially manipulated explanations.

\subsection{Detecting and Mitigating Clever Hans Effects}
\label{sec:showcase-1}

A common issue with ML models is that they sometimes rely not on the true features---that should support the ML decision---but on artifactual features that spuriously correlate with the target labels on the available data. Such flawed strategies of the ML model are commonly referred to as `Clever Hans' \cite{Lapuschkin2019, DBLP:journals/inffus/AndersWNSML22}. Clever Hans models evade traditional model validation techniques, such as cross-validation, when the spurious correlation is present in both the training and test data. Nevertheless, Explainable AI can reveal these Clever Hans strategies; specifically, the user would inspect the explanation of a number of decision strategies and verify that artifactual features are not highlighted as `relevant' in the explanation.

We demonstrate in this showcase how the proposed DRSA analysis enables us to \textit{detect} and \textit{mitigate} Clever Hans effects in a highly efficient manner. In contrast to existing state-of-the-art approaches to Clever Hans detection \cite{Lapuschkin2019, bykov2022dora} and mitigation \cite{DBLP:journals/inffus/AndersWNSML22}, our approach can leverage the multiple sub-strategies readily identified by DRSA, some of which can be of Clever Hans nature. In particular, for \textit{detecting} Clever Hans strategies, one can let the user inspect one or a few representative examples of each decision strategy identified by DRSA.

We test our approach on some known example of a Clever Hans strategy: the reliance of VGG16-TV on Hanzi watermarks for predicting `carton'  \cite{DBLP:journals/inffus/AndersWNSML22}. Fig.\ \ref{fig:showcase1-overview} (top) shows three training images\footnote{We select the examples based on the procedure outlined in Supplementary Note E.2. The rationale for using training images rather than validation images is that the training set is more likely to contain features causing the CH effect, and thus more useful for inspection purposes.} from the class `carton' and their standard and DRSA heatmaps using LRP (DRSA is applied at Conv4\_3 with $K=4$). From the heatmaps, we can see that the subspace S4, unlike other subspaces, captures the Hanzi watermark quite prominently when the watermark is salient (e.g.\ the first and second examples). We therefore identify that S4 corresponds to the Hanzi Clever Hans strategy. We find that S4 is able to discriminate Clever Hans from non-Clever Hans instances with an AUROC of 0.909. We compare the Clever Hans detection ability of DRSA with that of SpRAy \cite{Lapuschkin2019}. The SpRAy method consists of performing a clustering of standard LRP heatmaps and inspecting individual clusters. In our case, we choose the same number of clusters as DRSA subspaces. In contrast, the SpRAy's most discriminative cluster achieves a slightly lower AUROC of 0.842. Details of the experiment and full ROC curves are provided in Supplementary Note J.1. We expect further gain from our approach over SpRAy when the Clever Hans features occur at different locations in the input images. Overall, our experiment demonstrates the effectiveness of DRSA at identifying Clever Hans effects.

When it comes to \textit{mitigating} Clever Hans strategies, we propose again to leverage DRSA. Specifically,  building on the subspace(s) we have identified using DRSA to be of Clever Hans nature, we propose to refine the  prediction of the class by subtracting the relevance scores associated to those subspaces from the prediction:
\begin{align}
\textstyle f^\text{(refined)}(\bx) = f(\bx) - \sum_{k \in \text{CH}} R_k(\bx).
\label{eq:showcase-1-defense}
\end{align}
In practice, we find that subspaces identified to be of Clever Hans type still contain residual non-Clever Hans contributions, especially negative ones. Hence, we propose to only consider \textit{excess} relevance given by \mbox{$R_k^\text{(excess)} = \max(0,R_k - \mathbb{E}[R_k])$}, where the expectation is computed over a set of training images from the class, here `carton'. We then use $R_k^\text{(excess)}$ in place of $R_k$ in Eq.\ \eqref{eq:showcase-1-defense}.

We now apply the proposed method to mitigate the influence of the Hanzi watermark, focusing on the class carton and other classes that VGG16-TV tends to confuse as `carton'.  We say VGG16-TV confuses a class with class `carton` if 
it has class carton in the top-3 predictions 
of its validation images with frequency at least 10\%. With the criteria, these classes are `crate', `envelope', `packet', and `safe'. Using validation images of these classes and class `carton', we then construct a classification problem in which some of the non-carton images are  poisoned with a random Hanzi watermark (from one of the three we have prepared; cf.\ Supplementary Note J.1). We apply 25\% poisoning, i.e.\ 25\% of non-carton images are inpainted with Hanzi watermarks. We observe that the classification accuracy of the original model decreases on the poisoned data (from around 82\% to 76\%). The decrease indicates that our poisoning procedure effectively fools \mbox{VGG16-TV}.

\begin{figure}
    \centering
    \includegraphics[width=.9\linewidth]{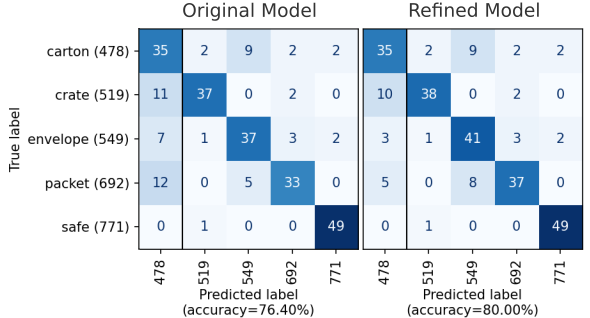}
    \caption{
        Confusion matrices from the original and refined models on clean and 25\%-poisoned data. The refined model excludes the evidence of the subspace S4 from the prediction using Eq.\ \eqref{eq:showcase-1-defense}.
    }
    \label{fig:showcase1-confusion-mat}
\end{figure}

Fig.\ \ref{fig:showcase1-overview} (bottom) shows the difference of accuracy between the original and refined models on the 25\%-poisoned data. We see that the refined model based on excluding the contribution of S4 has the highest classification accuracy (adding 3.6\% to the accuracy score of the unrefined model). We further investigate the structure of error in Fig.\ \ref{fig:showcase1-confusion-mat} which shows the confusion matrices between predicted and target classes for the original and refined model. After the refinement, we observe that the number of misclassified non-carton examples decreases substantially.  We finally compare our model refinement method with the method of \cite{DBLP:conf/iclr/KirichenkoIW23}, which consists of retraining the last layer of the model  on poisoned data. We find that the retraining approach achieves a gain of 4.8\% accuracy, slightly above our method based on DRSA. Our method, however, comes with the additional advantages of neither having to synthesize artificial Clever Hans instances nor having to choose a particular poisoning level for retraining. These advantages are decisive in the context where Clever Hans features are tightly interwoven with the other objects contained in the image. We refer to Supplementary Note J.1 for the details of the experiments, including different poisoning levels.

Overall, this showcase has demonstrated that DRSA can be an effective tool for detecting and mitigating Clever Hans effects in complex ML models. Furthermore, we stress that our approach is \textit{purely unsupervised}: It requires neither assembling a dataset of examples labeled according to the strategy the model employs to predict them, nor to generate synthetic examples where the Clever Hans features have been stripped or artificially added. Furthermore, our Clever Hans mitigation approach is \mbox{`post-hoc'}: except for the DRSA analysis, our method does not require any training or retraining of the neural network model.

\subsection{Better Insights via Disentangled Explanations}

\label{sec:showcase-2}

Explainable AI has been shown to be a promising approach to extract insights in the data and in the systems or processes that generates this data \cite{DBLP:journals/pieee/SamekMLAM21,DBLP:journals/access/RoscherBDG20}. Several recent works have shown successful usages in biomedical or physics applications. For example, Explainable AI enabled a better understanding of what geometrical aspects of molecules are predictive of toxicity \cite{DBLP:series/lncs/PreuerKRHU19} (or `toxicophores'). It also allowed to predict protein interactions in a human cell \cite{Keyl2022}, thereby supporting the research on identifying signaling pathways. There are many further examples of successful uses of Explainable AI for extracting scientific insights in geology \cite{EbertUphoff2020}, hydrology \cite{DBLP:series/lncs/KratzertHKHK19}, quantum chemistry \cite{schutt2017quantum, DBLP:series/lncs/SchuttGTM19}, neuroscience \cite{sturm-jnm16,thomas-fron19}, histopathology \cite{DBLP:journals/natmi/BinderBHWHHISHD21,klauschen2024toward}, etc. In these works, the authors often resort to standard heatmaps highlighting the extent by which one feature or a group of features contributes to the overall prediction.

\setcounter{figure}{9}

\begin{figure*}[b!]
    \centering
    \includegraphics[width=.975\textwidth]{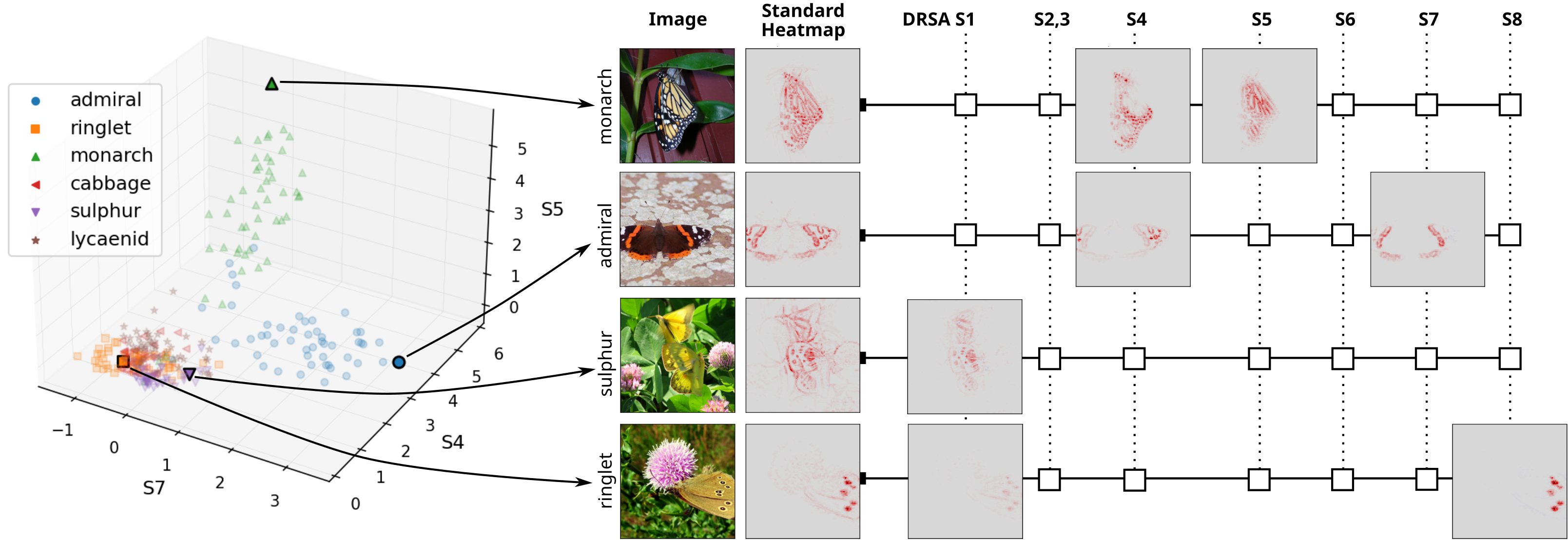}
    \caption{
        Left: Three-dimensional scatter plot resolving the relation between ImageNet's butterfly images and classes along three DRSA subspaces (S4, S5, and S7). Each point is an example, and its coordinates are the relevance scores of the three subspaces. Right: Prototypical examples and their standard and DRSA heatmaps. We show only the heatmaps of the class-subspace configurations that pass the selection criteria (Eq.\ \ref{eq:showcase-2-selection}).  We provide the complete set of heatmaps in Supplementary Note J.2. 
    }
    \label{fig:showcase-2-overview}
\end{figure*}

\setcounter{figure}{8}

\begin{figure}[t!]
    \centering
    \includegraphics[width=\linewidth,clip,trim=0.0cm 0cm 0.0cm 1cm]{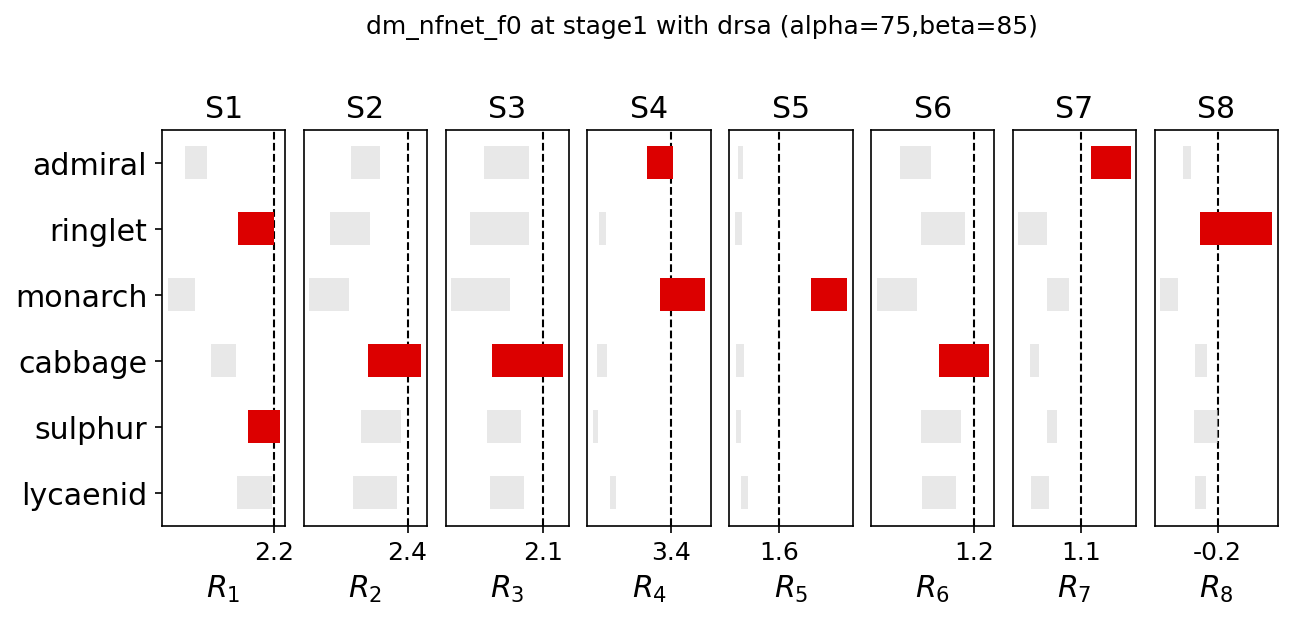}
    \caption{Statistics of DRSA relevance scores $R_k$'s from different butterfly classes.  The dashed lines indicate the $0.85$-quantile of relevance scores for each subspace. The left and right border of rectangles are the $0.25$- and $0.75$-quantiles of class-conditioned relevance scores. The selection criterion of Eq.\ \eqref{eq:showcase-2-selection} can be visually interpreted as the right side of the box surpassing the dashed line, and we highlight boxes in red if the corresponding quantile satisfies the  selection criterion.}
    \label{fig:showcase-2-boxplot}
\end{figure}

\setcounter{figure}{10}

The amount of insights one can extract from a standard explanation is however restrained by the fact that multiple concepts are entangled, and it is therefore difficult to gain a structured understanding of the relation between input and output. We showcase in the following how our proposed DRSA-LRP method enables the extraction of more sophisticated insights. We consider for an illustrative purpose the task of gaining insights into the visual differences between six classes of butterflies present in the ImageNet dataset: `admiral', `ringlet', `monarch', `cabbage', `sulphur', and `lycaenid' butterflies. 

For this showcase where the objective is for the user to gain insights from the model, it is natural to choose the best model available. We choose NFNet-F0, which achieves an overall top-1 accuracy of 82\% compared to VGG16-TV and -ND that achieve 72\% and 70\% respectively. We select 125 training images from each of these butterfly classes to form a training set. We use activation and context vectors from NFNet-F0 at Stage 1, and use LRP (with parameter $\gamma=0.1$ to compute the explanations). We extract eight subspaces using DRSA with the optimization details similar to Section \ref{sec:eval-drsa} (see also Supplementary Note E).

First, we would like to build a correspondence table between classes and concepts, indicating for each class which concepts are specific to it. We propose the following simple statistical test, which accounts for the fact that concepts are typically expressed only in a subset of images from the given class. Denote $\mathcal D_\omega$ to be the set of class $\omega$'s validation images and $\mathcal D$ the set of all validation images from the investigated classes (in our showcase, all butterfly images). We consider Subspace $k$ to be specific to \mbox{class $\omega$} if 
\begin{align}
     Q_{\alpha}[R_{k}|\omega] >  Q_{\beta}[R_{k}]
    \label{eq:showcase-2-selection},
\end{align}
where $Q_\alpha$ is the $\alpha$-quantile of the given distribution and $\alpha < \beta$. In our experiments, we choose $\alpha = 0.75$ and $\beta = 0.85$. In this equation, scores $R_k$ are measured via $\sum_i R_{ik}$.

Fig.\ \ref{fig:showcase-2-boxplot} illustrates the process of matching classes with DRSA subspaces. The right border of the rectangles and the dashed lines correspond to the left- and right-hand sides of Eq.\ \eqref{eq:showcase-2-selection}. The analysis reveals 10 class-subspace matchings (highlighted in red). We observe that each DRSA subspace is associated to one type of butterfly, except for the subspaces S1 and S4, which matches multiple classes, thereby indicating visual concepts that are shared between multiple classes. Furthermore, the number of concepts associated to a particular class vary from one (sulphur butterfly) to three (cabbage butterfly).

Fig.\ \ref{fig:showcase-2-overview} (left) explores, using a three-dimensional scatter plot, how the relation between butterflies and their respective classes is resolved by the subspaces S4, S5, and S7 of our DRSA analysis. Each point in this plot corresponds to one example, and its coordinate is given by the scores $R_k$'s. As already noted in Fig.\ \ref{fig:showcase-2-boxplot}, we observe that `monarch' is jointly expressed along axes S4 and S5, and `admiral' is jointly expressed along axes S4 and S7. These subspaces are not relevant for the other classes; hence, their respective examples appear near the origin.

\begin{figure*}[t!]
    \centering
    \includegraphics[width=0.9\textwidth]{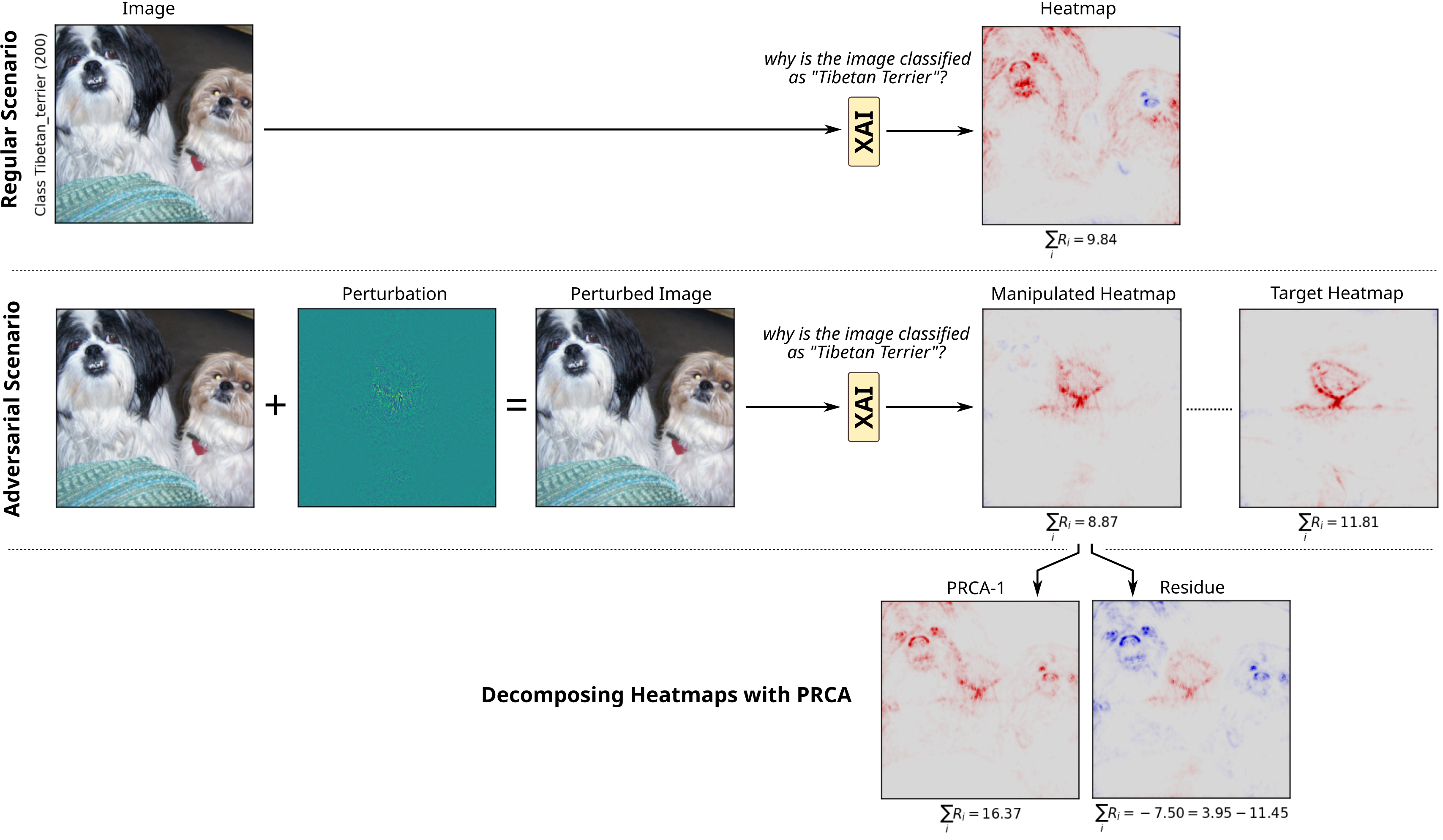}
    \caption{Application of PRCA for shedding light on a forgery in Explainable AI. Top: Regular scenario. Middle: Adversarial scenario, following the approach of \cite{DBLP:conf/nips/DombrowskiAAAMK19}, in which a perpetrator imperceptibly perturbs an image in a way that its heatmap (here the LRP heatmap associated to the output `tibetan terrier' of VGG16-TV) is steered maliciously towards an incorrect target heatmap. Bottom: PRCA of the perturbed image at Conv4\_3. The analysis decomposes the manipulated heatmap into the heatmap of class tibetan terrier's maximally relevant direction and its residue. Red and blue colors indicate positive and negative contributions.}
    \label{fig:showcase3-overview}
\end{figure*}

Fig.\ \ref{fig:showcase-2-overview} (right) shows pixel-wise explanations for the most prototypical examples of a few selected classes\footnote{We show for the selected classes the example $\arg\max_{n} \min_{k \in \mathcal{K}} R_{k,n}$, where $\mathcal{K}$ is the set of subspaces associated to the given class, and $R_{k,n}$ is the contribution of Subspace $k$ for example $n$ to its associated class.}. We observe that  S1 corresponds to yellow colored surfaces, which seems to be common of ringlet and sulphur butterflies. S4 corresponds to white-dot texture, which is found on monarch's wings and body and admiral's wings. S5's pattern is specific to the orange/black texture on the wings of monarch specie. S7 captures the prominent orange pattern on the wings of the admiral butterfly. Lastly, we find that S8 captures the distinct dotted pattern that appears on the wings of the ringlet species. We provide the complete set of these subspace heatmaps in Supplementary Note J.2.

Overall, throughout this showcase, we have demonstrated that our method is capable of providing further insights into the complex relation between visual features and class membership. In addition to highlighting features that are predictive of class membership, we have identified distinct visual concepts, such as dotted patterns or yellow textures, that are shared between multiple classes. These shared visual patterns provide a structured understanding of the nonlinear relation between butterfly species and their visual characteristics.

\subsection{Analyzing Manipulated Explanations using PRCA}
\label{sec:showcase-3}

One of the premises of Explainable AI is to facilitate trust to stakeholders, but previous works \cite{DBLP:conf/nips/DombrowskiAAAMK19,DBLP:conf/aaai/GhorbaniAZ19} show that explanation techniques are vulnerable to manipulation. More concretely, a slight perturbation of the input could lead to substantial changes in its explanation  while maintaining visual similarity to the input and other statistics  (e.g.\ model output). Crucially, \cite{DBLP:conf/nips/DombrowskiAAAMK19} shows that such perturbation can lead to arbitrary changes in explanations, having neither relation to the input nor the original heatmap. Fig.\ \ref{fig:showcase3-overview} contrasts such a scenario (where a perpetrator perturbs an image to manipulate its explanation) and a regular Explainable AI scenario.

Certainly, the vulnerability to perturbation does not only raise practical concerns, but also theoretical questions on how such a phenomenon could happen. As a result, a number of theoretical analyses \cite{DBLP:conf/nips/DombrowskiAAAMK19, DBLP:conf/aaai/GhorbaniAZ19} have been conducted to investigate the cause of the perturbation vulnerability. In particular, the investigation of  \cite{DBLP:conf/nips/DombrowskiAAAMK19} elucidates that the degree to which an explanation can change is partially upper-bounded by the principal curvature evaluated at the data point. Furthermore, \cite{DBLP:conf/nips/DombrowskiAAAMK19} shows that, for neural networks with ReLU, the principal curvature can be reduced by approximating ReLU with the softplus activation. By controlling the smoothness parameter of the softplus function, \cite{DBLP:conf/nips/DombrowskiAAAMK19} shows that the robustness of explanation manipulation can be effectively increased in a post-hoc manner.

Nevertheless, from the perspective of layer-wise representation, it is still unclear how perturbation causes such dramatic changes in explanation or how such changes manifest at a certain layer. We therefore aim to demonstrate that PRCA  might provide a clue  to   answer such questions.

As a proof of concept, we study the PRCA decomposition of LRP explanations from validation images of class `tibetan terrier' in the ImageNet dataset \cite{imagenet_cvpr09} on VGG16-TV at Conv4\_3. More precisely, we perform PRCA on a set of activation and context vectors from 500 training images of the class (details similar to the setup of Section \ref{sec:experiment-1}).

To manipulate explanations, we use the optimization procedure proposed by \cite{DBLP:conf/nips/DombrowskiAAAMK19} to find a perturbation that causes arbitrary changes in the explanation of each image, while retaining the same level of model response and visual similarity between the original and perturbed images. The arbitrary changes are induced by a target explanation, which is the explanation of a random image from a different class. 
In addition,  we also constrain  the original and manipulated explanations to have similar total relevance scores. We summarize the details of the algorithm in \mbox{Supplementary Note  J.3}.

Qualitatively, Fig.\ \ref{fig:showcase3-overview} (bottom)  shows that  the heatmap generated from the first PRCA component preserves features highlighted in the original heatmap, while the residual heatmap (orthogonal complement of the first PRCA component) contains features from both the original and target heatmaps. 

Looking at the positive and negative parts of the residual heatmap, we observe that the former substantially resembles the target heatmaps, while the latter is closely similar to part of the original heatmap expressed in the PRCA heatmap with opposite sign. When using more PRCA components, the PRCA heatmap  becomes similar to the target heatmap (see Fig.\ J.8 in the Supplementary Notes). The behavior suggests that, for VGG16-TV at Conv4\_3 and class \mbox{`tibetan terrier'}, the first PRCA component is the direction affected the least by perturbation.

\begin{figure}
    \centering
    \includegraphics[clip,trim=0 0 0cm 0.0cm,width=.8\linewidth]{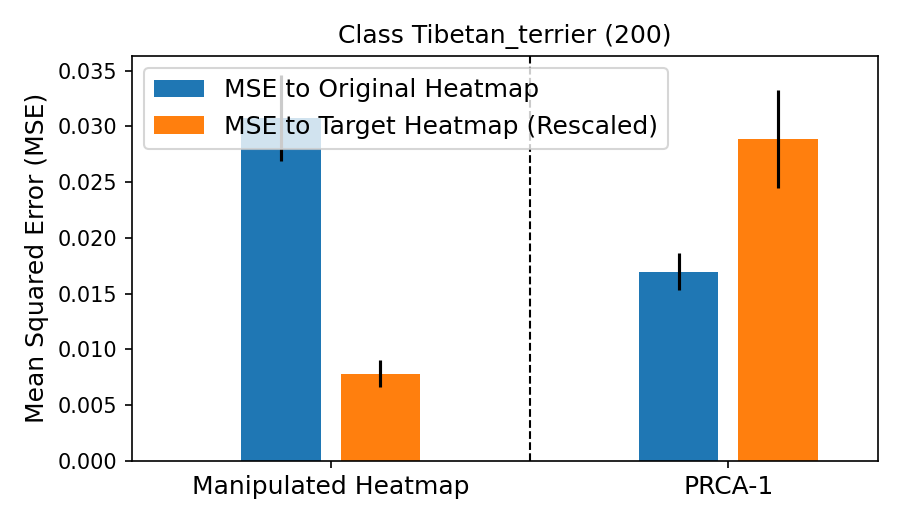}
    \caption{Mean squared error to original or target heatmaps and different sets of heatmaps averaged over 50 validation images of class `tibetan terrier'. These sets of heatmaps are manipulated heatmaps and their decomposition on the first PRCA component (PRCA-1). Two heatmaps are similar if the error between them is small. Vertical lines represent one standard error.}
    \label{fig:showcase3-distance-betweenheatmaps}
\end{figure}

Quantitatively, Fig.\ \ref{fig:showcase3-distance-betweenheatmaps} shows the mean squared error  between manipulated heatmaps (and their PRCA decomposition versions) and original or rescaled target heatmaps: the error is averaged over the 50 validation images of class `tibetan terrier'. We first observe that the manipulated heatmaps have lower error when comparing to the target heatmaps than the original heatmaps. It confirms that the optimization proposed by \cite{DBLP:conf/nips/DombrowskiAAAMK19} is indeed effective and also works well with the additional constraint we impose.

Secondly, when looking at the error from the manipulated heatmaps on the first PRCA component (PRCA-1), we observe that these heatmaps are closer to the original heatmaps than the target ones. The difference between the two errors is interesting because it indicates that PRCA indeed captures parts of the class-specific representation that are less affected by the perturbation. The insight may provide a new perspective towards understanding and increasing the robustness of explanation manipulation (cf.\ also \cite{dombrowski2022towards}).

\section{Conclusion and Discussion}

In this work, we have proposed to disentangle explanations of neural network models into multiple components in order to provide more useful information to the user compared to a standard explanation.

Technically, the desired disentanglement is achieved via an unsupervised analysis at some intermediate layer of the neural network model. A unique aspect of the proposed method is that it analyzes jointly the data and model response to the data. Hence, unlike a purely data-driven approach, our method enables a more focused disentanglement that efficiently ignores aspects of the data to which the model is invariant. Besides, our approach does not require any specialized datasets or concept annotations and can be applied to any deep neural network model whose predictions can be attributed to input features. Our method works together with a broad range of state-of-the-art attribution frameworks, such as the Shapley value and LRP.

We have demonstrated the high performance of our disentanglement approach, scoring significantly higher than other methods in our benchmark. Through an implementation of LRP we have contributed for the state-of-the-art NFNet model, we have further demonstrated that our approach can bring more light into highly sophisticated prediction functions. Building upon existing attribution techniques, our method also inherits challenges with the problem of attribution, such as the need to adapt to the rapidly increasing complexity of ML models.

On a practical note, we have demonstrated the use of our method on three application showcases: 1) detection and defusion of Clever Hans strategies in the popular VGG16 image classifier, 2) in-depth exploration of a complex nonlinear relation of interest, subsumed by a state-of-the-art ML model, in order to acquire new domain knowledge, and 3) investigation of the problem of adversarially manipulated explanations, for which we could gain new understanding.

In future work, we plan to apply our methods to analyze complex scientific data, and thereby, help a domain expert to obtain new scientific insights. Furthermore, our method could be extended towards the extraction of \textit{irrelevant} subspaces. The latter could then be pruned from the ML model, e.g.\ for compression purposes, or to robustify the model against unknown, potentially spurious, decision strategies \cite{DBLP:journals/inffus/LinhardtMM24}. Lastly, our proposed approach, which combines Explainable AI and representation learning, could be explored beyond the framework of attribution, for example, in the context of counterfactual explanations (e.g.~\cite{dombrowski2023diffeomorphic}).

\section*{Acknowledgments}

This work was supported by the German Ministry for Education and Research (BMBF) under Grants 01IS14013A-E, 01GQ1115, 01GQ0850, 01IS18025A, 031L0207D, and 01IS18037A, and by BASLEARN---TU Berlin/BASF Joint Laboratory, co-financed by TU Berlin and BASF SE. P.C.\ was supported by the Max Planck Society and the Konrad Zuse School of Excellence in Learning and Intelligent Systems (ELIZA) through the DAAD programme Konrad Zuse Schools of Excellence in Artificial Intelligence, sponsored by the German Ministry of Education and Research. K.R.M.\ was partly supported by the Institute of Information \& Communications Technology Planning \& Evaluation (IITP) grants funded by the Korea government (MSIT) (No. 2019-0-00079, Artificial Intelligence Graduate School Program, Korea University and No. 2022-0-00984, Development of Artificial Intelligence Technology for Personalized Plug-and-Play Explanation and Verification of Explanation). This work used the Scientific Compute Cluster at GWDG, the joint data center of Max Planck Society for the Advancement of Science (MPG) and University of G\"ottingen. We thank Thomas Schnake, Lorenz Linhardt, Simon \mbox{Letzgus}, and Ali Hashemi for helpful comments and feedback to improve the manuscript.

\section*{Data \& Code Availability}

We provide demonstration code at \url{https://github.com/p16i/drsa-demo}. The repository contains an implementation of LRP for VGG16 and NFNets that is compatible with our disentangled explanation framework and functionalities to perform the optimization of DRSA. In the `notebooks` directory, we provide two Jupyter notebooks demonstrating 1) the steps in the disentangled explanation framework and the reproduction of Fig.\ \ref{fig:overview}; and 2) the LRP implementation for NFNets.

\bibliographystyle{ieeetr}
\bibliography{ref}

\end{document}

% --- supplement: supplement.tex ---

\maketitle

\vspace{-1cm}

\tableofcontents

\begin{appendices}

\section{Overview of Attribution Techniques}
\label{sec:overview-attribution-methods}

Let $\x = (x_i)_i$ be a data point formed by the collection of input features. Attribution techniques aim to decompose the prediction of a ML model $f(\x)$ onto these features, i.e.\ producing scores $(R_i)_i$ where $R_i$ identifies the contribution of feature $i$ to the given prediction. We present three popular methods for performing attribution, and which we can use as part of our proposed framework for producing disentangled explanations.

\subsection{Shapley Value}
\label{appendix:attribution-shapley}

The Shapley Value \cite{shapley:book1952} is an attribution technique with foundations in game theory. It addresses the questions of how to redistribute a certain reward among a set of cooperating players, and was motivated as being the unique solution satisfying a set of basic axioms of the redistribution process. In machine learning terms, the players are the input features $i=1,\dots, d_x$, and the reward is the value $f(\x)$ at the output of the network \cite{DBLP:journals/jmlr/StrumbeljK10,DBLP:conf/nips/LundbergL17}. The Shapley value formula for attribution is:
\begin{align}
R_i = \sum_{\mathcal{S}:i \notin \mathcal{S}} \alpha_\mathcal{S} \cdot [f(\x_{\mathcal{S} \cup \{i\}}) - f(\x_\mathcal{S})],
\label{eq:shapley}
\end{align}
where $\sum_{\mathcal{S}:i \notin \mathcal{S}}$ is the sum of all subsets of input features that do not include feature $i$, $\x_\mathcal{S}$ is an artificial example where features other than $\mathcal{S}$ have been removed (e.g.\ set to zero), and $\alpha_\mathcal{S} = \frac{|\mathcal{S}|! \cdot (d_x - 1 - |\mathcal{S}|)!}{d_x!}$. Shapley values satisfy the conservation property $\sum_i R_i = f(\x) - f(\widetilde{\x})$ where $\widetilde{\x} = \x_{\{\}}$ an artificial example where all features have been removed, or `reference point'. While computation of the Shapley value is exponentially complex, various approximations have been proposed such as building local surrogates where Shapley values are easy to compute \cite{DBLP:conf/nips/LundbergL17}, or sampling approaches \cite{DBLP:journals/cor/CastroGT09, DBLP:journals/jmlr/StrumbeljK10}. In sampling approaches, the Shapley formula in Eq.\ \eqref{eq:shapley} is cast into an expectation over a probability distribution, and a random sample is drawn from that distribution.

\subsection{Integrated Gradients}

Integrated Gradients \cite{DBLP:conf/icml/SundararajanTY17} is another attribution technique, which leverages the fact that the gradient $\nabla f(\x)$ is a $d_x$-dimensional vector, and a few evaluations of that gradient potentially enables an attribution onto the many input features. The integrated gradient attribution is defined by the integral:
\begin{align}
R_i = \int_0^1 \frac{\partial f}{\partial x_i} \frac{\partial x_i}{\partial t} dt,
\end{align}
where the input $\x$ is a function of $t$, often a linear path from some reference point $\widetilde{\x}$ (e.g.\ the origin in input space) to the actual data point.  Integrated gradients satisfy the conservation property $\sum_i R_i = f(\x) - f(\widetilde{\x})$. In practice, the integral is discretized so that the function only has to be evaluated finitely many times, typically between 10 and 100 times. For increased robustness, averaging of results over multiple paths can be considered (see \cite{DBLP:journals/natmi/ErionJSLL21}), although it incurs an additional computational cost.

\subsection{Layer-wise Relevance Propagation}
\label{appendix:attribution-lrp}

Layer-wise Relevance Propagation (LRP) \cite{bach-plos15,DBLP:series/lncs/MontavonBLSM19} tackles attribution by performing a purposely designed backpropagation pass from the output of the network $f(\x)$ to the input features. A main advantage of LRP is that it computes explanations in the order of a single forward/backward pass, making it suitable for generating many explanations on large neural network models. Let $j$ and $k$ be neuron indices for two consecutive layers. The activations between these two layers are linked via the function
\begin{align}
z_k &= \textstyle \sum_{0,j} a_j w_{jk},\\
a_k &= \rho(z_k),
\end{align}
where $\rho(\cdot)$ is an activation function and $\sum_{0,j}$ runs over all neurons of the corresponding layer plus a bias $b_k$ (i.e.\ $b_k = w_{0k}$). The propagation at each layer is defined by means of a propagation rule. Examples of propagation rules are:
\begin{align}
\text{LRP-$0$ \cite{bach-plos15}:} \qquad R_j &= \sum_k \frac{a_j w_{jk}}{\sum_{0,j} a_j w_{jk}} R_k\\
\text{LRP-$\epsilon$ \cite{bach-plos15}:} \qquad R_j &= \sum_k \frac{a_j w_{jk}}{\epsilon_k + \sum_{0,j} a_j w_{jk}} R_k\\
\text{LRP-$\gamma$ \cite{DBLP:series/lncs/MontavonBLSM19}:} \qquad R_j &= \sum_k \frac{a_j (w_{jk} + \gamma w_{jk}^+)}{\sum_{0,j} a_j (w_{jk} + \gamma w_{jk}^+)} R_k\\
\text{generalized LRP-$\gamma$ \cite{Keyl2022}:} \qquad R_j &= \sum_k \frac{a_j^+ (w_{jk} + \gamma w_{jk}^+) + a_j^- (w_{jk} + \gamma w_{jk}^-)}{\sum_{0,j} a_j^+ (w_{jk} + \gamma w_{jk}^+) + a_j^- (w_{jk} + \gamma w_{jk}^-)} \cdot \indicator{z_k \ge 0}\cdot R_k \nonumber \\ 
    &\qquad + \sum_k \frac{a_j^+ (w_{jk} + \gamma w_{jk}^-) + a_j^- (w_{jk} + \gamma w_{jk}^+)}{\sum_{0,j} a_j^+ (w_{jk} + \gamma w_{jk}^-) + a_j^- (w_{jk} + \gamma w_{jk}^+)} \cdot \indicator{z_k < 0}\cdot R_k
    \label{eq:generalized-lrp}\\
\text{$z^\mathcal{B}$-rule \cite{DBLP:journals/pr/MontavonLBSM17}:} \qquad R_i &= \sum_j \frac{x_i w_{ij} - l_i w_{ij}^+ - h_i w_{ij}^- }{\sum_i x_i w_{ij} - l_i w_{ij}^+ - h_i w_{ij}^- } R_j,  \label{eq:lrp-B}
\end{align}
where we have used the notation $(\cdot)^+ =\max(0, \cdot)$ and $(\cdot)^- =\min(0, \cdot)$, and denoted by $\indicator{\cdot}$ an indicator function. These multiple rules have different application conditions. For example, the LRP-$\gamma$ rules requires positive input activations and an output activation function $\rho$ that maps negative values to zero and positive values to positive values, e.g.\ ReLU. This restriction is dropped for LRP-$0$, LRP-$\epsilon$ and generalized LRP-$\gamma$, where inputs can be both positive and negative, and where the only restriction is that the activation function $\rho$ is sign-preserving, e.g.\ tanh, LeakyReLU, GELU \cite{DBLP:journals/corr/HendrycksG16}, etc. The $z^\mathcal{B}$-rule is specialized for input layers receiving pixels, and the parameters $l_i \leq 0$ and $h_i \geq 0$ in this rule are the lowest and highest possible value of $x_i$'s (e.g.\ corresponding to the value of black and white pixels). The rules above address convolution and dense layers. Propagation through max-pooling layers typically follows a winner-take-all \cite{bach-plos15} (or winner-take-most) strategy. Batch normalization layers are typically fused with the preceding linear layer \cite{DBLP:series/lncs/MontavonBLSM19}. Backpropagation strategies have also been developed for LSTM and transformer architectures (cf.\ \cite{DBLP:series/lncs/ArrasAWMGMHS19,DBLP:conf/icml/AliSEMMW22}). In absence of neuron biases, most of these rules implement the layer-wise conservation $\sum_j R_j = \sum_k R_k$, and such layer-wise conservation implies the overall conservation $\sum_i R_i = f(\x)$.

\section{Disentangled Explanations with Various Attribution Techniques}
\label{sec:examples-subspace-attributions}
In this note, we show how to apply our approach to disentangling explanations in conjunction with attribution techniques other than LRP, specifically, \gradinput, \intgrad, and the Shapley value. For this, we need for each of them to express the two steps of attribution, and verify that relevance scores $R_k$'s have the structure required by the PRCA/DRSA analyses. In the following, we use the following decompositions of the neural network function $f$:
\begin{align*}
~\x~ 
\overbrace{
\overbrace{
\xmapsto{\displaystyle \big. \qquad \phi \qquad}
~\ba~
\xmapsto{\displaystyle \big. \qquad \bbu^\top \qquad}
}^{\displaystyle \big. f_1}
~\bh~
\overbrace{
\xmapsto{\displaystyle \big. \qquad \bbu \qquad}
~\ba^\prime~
\xmapsto{\displaystyle \big. \qquad g \qquad}
}^{\displaystyle \big. f_2}
}^{\displaystyle \big. f}
~y~
\end{align*}
with $\bx = (\bx_p)_{p=1}^P$ and $\bh = (\bh_k)_{k=1}^K$.
The first (coarse) decomposition is used for the derivation of the two-step attributions, and the second (fine) decomposition is used to analyze the structure of $R_k$.

\subsection{Deriving Two-Step Attributions}
\label{note:two-step}

We start with the two-step attribution process described generically in the main paper as:
\begin{align}
\boxed{
\begin{array}{lrl}
\big.\text{step 1:}~~~& (R_k)_k \!\!\!\!\! &= \mathcal{E}(y,\bh),\\
\big.\text{step 2:}~~~& (R_{pk})_p\!\!\!\!\! &= \mathcal{E}(R_k, \bx).
\end{array}
}
\end{align}
We have explained that, when using the LRP attribution technique, this two-step process can be readily implemented by filtering the backpropagation flow to only retain what passes through the neurons with index $k$. We demonstrate how to perform these two steps of explanation for non-backpropagation methods, in particular, \gradinput, \intgrad, and Shapley value.

\subsubsection{Application to \gradinput{}}
\label{note:hierarchical-gi}
Let $f_1$ and $f_2$ be two piecewise linear functions. Using \gradinput{} as an attribution method for each step, we get:
\begin{align}
R_k &= \frac{\partial y}{\partial \bh_k} \bh_k,\\
R_{pk} &= \frac{\partial}{\partial \bx_p}\Big(\frac{\partial y}{\partial \bh_k} \bh_k\Big) \bx_p.
\intertext{Observing that $\partial y /\partial \bh_k$ is piecewise constant w.r.t.\ $\bh$ (and therefore piecewise constant with w.r.t.\ $\x$), we take this expression out of $\partial / \partial \bx_p (\cdot)$, which gives us}
R_{pk} &= \frac{\partial y}{\partial \bh_k} \frac{\partial \bh_k}{\partial \bx_p} \bx_p
\label{eq:gradinput-Rik-final}
\end{align}
Hence, where both derivatives can be computed separately, and we can identify in Eq.\ \eqref{eq:gradinput-Rik-final} the multivariate chain rule for derivatives filtered to only include the term that depends on $k$. Thus, we can relate the two-step and one-step \gradinput{} as $\sum_{k} R_{pk} = R_p$. Furthermore, if $f_1$ and $f_2$ are first-order positively homogeneous, then we have that $\sum_p R_p = \sum_{pk} R_{pk} = \sum_k R_k = y$.

\subsubsection{Application to \intgrad{}}
\label{note:hierarchical-intgrad}
Let $f_1$ and $f_2$ be two differentiable functions. Using \intgrad{} as an attribution method for each step with a linear integration path starting at the origin, we get:
\begin{align}
R_k &= \int \frac{\partial y}{\partial \bh_k} \frac{\partial \bh_k}{\partial s} ds,\\
R_{pk} &= \int \frac{\partial}{\partial \bx_p} \Big( \int \frac{\partial y}{\partial \bh_k} \frac{\partial \bh_k}{\partial s} ds \Big) \frac{\partial \bx_p}{\partial t} dt. \label{eq:ig-nested}
\end{align}
The double integral is expensive in practice. For practical purpose, we can locally approximate the complicated function $R_k$ by a `relevance model' $\widehat{R}_k$, specifically, a linear function of $\bh_k$.
A possible relevance model is:
\begin{align}
\widehat{R}_k = \bh_k^\top \boldsymbol{1} \cdot [R_k / ( \bh_k^\top \boldsymbol{1})]_\text{cst.}
\end{align}
where $[\cdot]_\text{cst.}$ denotes a constant approximation of the expression evaluated at the current point, and where we use the convention $0/0=0$.
With this approximation, one can more efficiently attribute to the input features by performing the subsequent integrated gradient calculation:
\begin{align}
R_{pk} &= \int \frac{\partial \widehat{R}_k}{\partial \bx_p} \frac{\partial \bx_p}{\partial t} dt,
\end{align}
which unlike Eq.\ \eqref{eq:ig-nested} does not have a nested integral. Furthermore, assuming the functions $f_1$ and $f_2$ are zero at the origin, we get the conservation property $\sum_p R_p = \sum_{pk} R_{pk} = \sum_k R_k = y$, however, $\sum_k R_{pk} \neq R_p$ generally. Hence, we have a weaker form of conservation than the one obtained with \gradinput{} and LRP, and this is due to the relevance modeling step.

\subsubsection{Application to Shapley Values}
\label{note:hierarchical-shapley}

Using the Shapley value as an attribution method for each step (with reference points $\widetilde{\x}=0$ and  $\widetilde{\bh}=0$), we get:
\begin{align}
R_k &= \sum_{\mathcal{T}:k \notin \mathcal{T}} \beta_\mathcal{T} \cdot [y(\bh_{\mathcal{T} \cup k}) - y(\bh_{\mathcal{T}})],\\
R_{pk} &=
\sum_{\mathcal{S}:p \notin \mathcal{S}} \alpha_\mathcal{S} \cdot
\big[
R_k(\x_{\mathcal{S} \cup p})
- 
R_k(\x_{\mathcal{S}}) \big]\\
&=
\sum_{\mathcal{S}:p \notin \mathcal{S}} \alpha_\mathcal{S}
\sum_{\mathcal{T}:k \notin \mathcal{T}} \beta_\mathcal{T}
\big[
y\big(\bh_{\mathcal{T} \cup k}(\x_{\mathcal{S} \cup p})\big) -
y\big(\bh_{\mathcal{T}}(       \x_{\mathcal{S} \cup p})\big) -
y\big(\bh_{\mathcal{T} \cup k}(\x_{\mathcal{S}}       )\big) + y\big(\bh_{\mathcal{T}}(       \x_\mathcal{S}         )\big)
\big].
\end{align}
Like for Integrated Gradients, the nesting makes the computation expensive. We proceed similarly to Integrated Gradients by building a relevance model:
\begin{align}
\widehat{R}_k = \bh_k^\top \boldsymbol{1} \cdot [R_k / ( \bh_k^\top \boldsymbol{1})]_\text{cst.}
\label{eq:relmodel-shapley}
\end{align}
with $0/0=0$. Under this relevance model, we can more efficiently compute the joint relevance scores:
\begin{align}
R_{pk} = \sum_{\mathcal{S}:p \notin \mathcal{S}} \alpha_\mathcal{S} \cdot [\widehat{R}_k(\x_{\mathcal{S} \cup p}) - \widehat{R}_k(\x_{\mathcal{S}})].
\end{align}
Note that if the functions $f_1$ and $f_2$ are zero valued at the origin, then we have the conservation property $\sum_p R_p = \sum_{pk} R_{pk} = \sum_k R_k = y$. However, like for Integrated Gradients, the relevance modeling step implies that $\sum_k R_{pk}$ typically differs from the original Shapley value $R_p$. This weaker form of conservation is again due to the relevance modeling step.

\subsection{Verifying the Structure of $R_k$}
\label{sec:expressing-rel-factors}

Recall from the main paper that for PRCA/DRSA to be applicable, relevance scores $R_k$'s associated to the vector $\bh_k$'s should be expressible in terms of the orthogonal matrix $\bbu$ and activation/context vectors as:
\begin{align}
\boxed{
R_k = \big(U_k^\top \ba\big)^\top \big(U_k^\top \bc\big).
}
\end{align}
We verify that $R_k$'s have this structure for \gradinput{} and \intgrad{}, and show which approximation can be made for recovering such structure when using Shapley values.

\subsubsection{Structure of $R_k$'s with \gradinput{}}
\label{note:relk-gi}

When using \gradinput{}, the desired structured can be identified from an application of the chain rule for derivatives:
\begin{align}
R_k &= \frac{\partial y}{\partial \bh_k} \bh_k\\
&=  \frac{\partial y}{\partial \ba^\prime} \frac{\partial \ba^\prime}{\partial \bh_k} \bh_k,
\intertext{and observing that $\bh_k = U_k^\top \ba$ and that $\partial \ba^\prime / \partial \bh_k = U_k$, we get,}
&= \bc^\top U_k U_k^\top \ba\\
&= \big(U_k^\top \ba\big)^\top \big(U_k^\top \bc\big)
\end{align}
with $\bc = \partial y / \partial \ba^\prime$.

\subsubsection{Structure of $R_k$'s with \intgrad{}}
\label{note:relk-ig}

When using \intgrad{} with a linear integration path from $\boldsymbol{0}$ to $\bh_k$, we state the \intgrad{}  equation and apply the chain rule for derivatives:
\begin{align}
R_k &=
\int 
\frac{\partial y}{\partial \bh_k}
\frac{\partial \bh_k}{\partial t}
dt\\
&=
\int 
\frac{\partial y}{\partial \ba^\prime}
\frac{\partial \ba^\prime}{\partial \bh_k}
\frac{\partial \bh_k}{\partial t}
dt\\
&=
\int 
\frac{\partial y}{\partial \ba^\prime}
U_k
\bh_k
dt.
\intertext{Taking out constant terms from the integral and expressing $\bh_k$ as a function of $\ba$, we get:}
 &= \Big(\int \frac{\partial y}{\partial \ba^\prime} dt\Big) U_k U_k^\top \ba\\
&= \big( U_k^\top \ba\big)^\top\big( U_k^\top \bc \big) \label{eq:rk-intgrad}
\end{align}
with $c_j = \int \frac{\partial y}{\partial a_j^\prime} dt$. This is similar to the \gradinput{} case, except that the gradient $\partial y / \partial \ba^\prime$ is evaluated and averaged over all activations vectors encountered on the linear integration path. This specific structure of $R_k$ lets us revisit the relevance model $\widehat{R}_k$ proposed in Supplementary Note \ref{note:two-step} for performing the second step of attribution. In particular, an inspection of Eq.\ \eqref{eq:rk-intgrad} suggests the alternate relevance model
\begin{align}
\widehat{R}_k &= \big( U_k^\top \ba\big)^\top\big( U_k^\top [\bc]_\text{cst.} \big),
\end{align}
where $[\bc]_\text{cst.}$ is a constant approximation of $\bc$ evaluated at the current data point.

\subsubsection{Structure of $R_k$'s with Shapley Value}
\label{note:relk-shapley}

The Shapley value equation does not allow for factoring out the transformation matrices $U_k$'s as it was the case for the methods above. To incorporate Shapley value attribution (with baseline $\widetilde{\bh}_k = \boldsymbol{0}$), we have discussed in Supplement Note \ref{note:hierarchical-shapley}  how to perform Shapley value attribution on the activation layer.
Also, because the Shapley value method is typically slow for high dimensions, attribution can in practice be performed in terms of groups of activations (e.g.\ the collection of activations in a feature map $j$).

Let us denote by $\ba_j$ the activations in feature map\footnote{To illustrate, suppose the feature map of a given layer has $D$   channels and $h \times w$ spatial dimensions. The $j$th group of activations $\ba_j$ is a vector of size $hw$ for  $j \in \{1, \dots, D\}$.} $j$, and $R_j$ the attribution on this group of activations obtained with the Shapley value framework (with baseline $\widetilde{\ba} = 0$). We  apply LRP to further propagate to the concept $k$. More precisely, we use the standard LRP rule to redistribute   the contribution of the concept $k$ to the sum of activations in each group $j$:
\begin{align}
R_k &= \sum_j \frac{\bh_k^\top (U_{k})_j \boldsymbol{1}_j}{\sum_k \bh_k^\top (U_{k})_j\boldsymbol{1}_j} R_j,
\intertext{where $\boldsymbol{1}_j$ denotes a vector of ones of same size as $\ba_j$, and $(U_{k})_j$ is the $j$th row of the block $U_k$ in the orthogonal matrix $\bbu$ connecting concept $k$ to the group of activations $j$. The relevance score can then be further developed as:}
&= \sum_j \bh_k^\top (U_{k})_j \boldsymbol{1}_j \frac{R_j}{\ba_j^\top\boldsymbol{1}_j}\\
 &= \bh_k^\top U_{k}^\top \Big(\boldsymbol{1}_j \frac{R_j}{(\ba_j^\prime)^\top\boldsymbol{1}_j}\Big)_j\\
&= \big(U_k^\top \ba\big)^\top \big(U_k^\top \bc\big) \label{eq:rk-shapley}
\end{align}
with $\bc_j = \boldsymbol{1}_j \cdot {R_j} / ((\ba_j^\prime)^\top\boldsymbol{1}_j)$. Like for standard Shapley value, the scores $R_k$'s produced by our modified Shapley value formulation depend on input features in an intricate way (here, through the vector $\bc$ which itself depends on the regular Shapley attribution scores $R_j$'s). Similarly to the \intgrad{} case, an inspection of Eq.\ \eqref{eq:rk-shapley} suggests the relevance model:
\begin{align}
\widehat{R}_k &= \big( U_k^\top \ba\big)^\top\big( U_k^\top [\bc]_\text{cst.} \big),
\end{align}
where $[\bc]_\text{cst.}$ is a constant approximation of $\bc$ evaluated at the current data point.

\subsection{Analytical Calculation of Total Relevance}
\label{sec:analytic-relevance-model}
In practice, it can be useful to quickly predict certain properties of the explanation without computing the full explanation. For the Shapley Value and \intgrad{}, the sum of obtained relevances ($\sum_{pk} R_{pk}$) can be calculated without performing the second step of the explanation procedure. In particular, we can show that:
\begin{align}
\sum_{pk} R_{pk}
&= \sum_{pk} [\mathcal{E}(\widehat{R}_k,\bx)]_p \label{eq:shapt-0}\\
&= \sum_{pk} [\mathcal{E}(\ba^\top U_k U_k^\top [\bc]_\text{cst.},\bx)]_p \label{eq:shapt-1}\\
&= \sum_{pk} [\mathcal{E}(\ba,\bx)]_p^\top U_k U_k^\top [\bc]_\text{cst.} \label{eq:shapt-2}\\
&= \Big(\sum_p [\mathcal{E}(\ba,\bx)]_p\Big)^\top \Big(\sum_{k} U_k U_k^\top\Big) \bc \label{eq:shapt-3}\\
&=  [\phi(\x) - \phi(\widetilde{\x})]^\top \Big(\sum_k U_k U_k^\top\Big) \bc, \label{eq:shapt-4}
\end{align}
where we have denoted by $[\mathcal{E}(\ba,\bx)]_p$ the vector containing the attribution of all elements of $\ba$ onto feature $\bx_p$. From \eqref{eq:shapt-1} to \eqref{eq:shapt-2}, we have used the linearity of Shapley values to pull the constant multiplicative factors out of the attribution function. From \eqref{eq:shapt-3} to \eqref{eq:shapt-4}, we have used the conservation property of Shapley values to express the sum of scores forming the explanation as a difference of two function evaluations. Overall, the final formulation of total relevance $\sum_{pk} R_{pk}$ only involves---additionally to the prediction---the computation of the vector $\bc$ and activations associated to the reference point $\widetilde{\x}$.

\section{Proofs and Derivations}

In this note, we provide the proofs of Propositions 1 and 2 of the main paper, and the derivation of the eigenvalue formulation of our PRCA objective.

\setcounter{proposition}{0}

\subsection{Proofs of Propositions}
\label{section:propositions}

Recall that $\ba = (a_j)_j$ is a vector of activations at some layer of the neural network, $R_j$ is the relevance of neuron $j$ for the model output. Recall that $R_j$ decomposes as $R_j = a_j^\prime c_j$ and $\bc = (c_j)_j$. We restate the first proposition of the paper and provide the proof.

\begin{proposition}
Let $\bbu = (U_k)_k$ be an orthogonal matrix formed by $U_k$'s.  Using the formulation of relevance $R_k = (U_k^\top \ba)^\top(U_k^\top \bc)$ with $\bc$ such that $R_j = a_j^\prime c_j$, we have the conservation property $\sum_k R_k = \sum_j R_j$. Furthermore, when $\bc = \xi \ba$ with $\xi \geq 0$, then we necessarily have $R_k\geq 0$.
\label{proposition:conservation}
\end{proposition}

\begin{proof}
We get the conservation result by observing that
\begin{align}
    \textstyle \sum_k R_k
    &= \sum_k (U_k^\top \ba)^\top (U_k^\top \bc) \\
    &= \ba^\top \Big(\sum_k U_k U_k^\top\Big) \bc\\
    &= \ba^\top (\bbu\bbu^\top) \bc\\
    &= \ba^\top \bc\\
    &= \textstyle \sum_j a_j c_j\\
    &= \textstyle \sum_j R_j.
\end{align}
For the positivity property, we first recall the assumption $\bc = \xi \ba$ with $\xi > 0$. Then, we have
\begin{align}
    R_k
    &= (U_k^\top \ba)^\top (U_k^\top \xi \ba)\\
    &= \xi (U_k^\top \ba)^\top (U_k^\top \ba)\\
    &= \xi \|U_k^\top \ba\|^2\\
    &\geq 0.
\end{align}
\end{proof}

\begin{proposition}
When the context vector $\bc$ is equivalent to the activation vector $\ba$, the PRCA analysis reduces to uncentered PCA. Furthermore, if we assumed whitened activations, i.e., $\mathbb{E}[\ba] = \boldsymbol{0}$ and $\mathbb{E}[\ba\ba^\top] = I$, and each matrix $U_k$ projecting to a subspace of dimension $1$, then the DRSA analysis with parameter $q=2$ reduces to ICA with kurtosis as a measure of subspace independence.
\label{proposition:reduction}
\end{proposition}

\begin{proof} We divide the proof of the proposition into two parts: 1) the reduction from PRCA to uncentered PCA; and 2) the reduction from DRSA to ICA.

\smallskip

(Part 1: Reduction from PRCA to PCA) Let $U \in \R^{D\times d}$ and recall that the PRCA objective is to
maximize $\mathbb{E}[(U^\top \ba)^\top (U^\top \bc)]$ w.r.t.\ $U$ subject to $U^\top U = I_{d}$. Setting $\bc=\ba$, the objective can be further developed as:
\begin{align}
     \mathbb{E}[(U^\top \ba)^\top (U^\top \ba)] &= \mathbb{E}[\ba^\top U U^\top \ba] \\
     &= \mathbb{E}[\Tr(\ba^\top U U^\top \ba)] \\
    &= \mathbb{E}[\Tr(U^\top \ba \ba^\top U)] \label{eq:proof1-cyclic}\\
    &= \Tr(U^\top \Sigma U), \label{eq:proof1-sigma}
\end{align}
where \eqref{eq:proof1-cyclic} uses the cyclic permutation property of the trace operator $\Tr(\cdot)$, and where we define $\Sigma = \mathbb{E}[\ba\ba^\top]$ in \eqref{eq:proof1-sigma}. The last line is the canonical formulation for finding the $d$ leading principal components that minimizes the $l_2$ reconstruction error $\mathbb{E}[ \| \ba - U (U^\top \ba)\|^2 ] $. Therefore, it shows that PRCA becomes equivalent to uncentered PCA in this special case. 

\smallskip

(Part 2: Relation between DRSA and ICA)  Let  $\ba_n \in \R^D$ and $\bc_n \in \R^D$ be the activation and context vectors of a data point $n \in \mathcal D$. Note that because we consider the case where each subspace is $1$-dimensional, there are therefore $K=D$ such subspaces.  We denote the collection of these subspaces by $\mathcal K = \{1, \dots, D \}$. We also fix $q=2$. A reduction of the DRSA objective to this setting gives:
\begin{align}
\maximize_{\bbu} \mathbb{M}_{k \in \mathcal K}^{2} &\mathbb{M}_{n \in \mathcal D}^2 [ R_{k,n}^+ (\bbu) ]
\label{eq:proof2-objective}
\end{align}
subject to:
\begin{align}
\bbu^\top \bbu = I_D,
\end{align}
where 
\begin{align}
R_{k,n}^+ (\bbu) = \max(0, (U_k^\top \ba_n)^\top (U_k^\top \bc_n))
\label{eq:proof2-Rkn}
\end{align}
is the rectified relevance on the subspace $k$ of the data point $n$; where the matrix $\bbu = (U_k \in \R^{D\times 1})_{k\in\mathcal{K}}$ is the concatenation of $D$ one-dimensional transformation matrices $U_k$'s; and where $\mathbb{M}^p$ is a generalized F-mean with function $F(t) = t^p$. We now replace the context vector $\bc_n$ in \eqref{eq:proof2-Rkn} with $\ba_n$, which gives us:
\begin{align}
    R_{k,n}^+ (\bbu) &= \max(0, (U_k^\top \ba_n)^\top (U_k^\top \ba_n)) \\
    &= \max( 0, \| U_k^\top \ba_n \|^2) \\ 
    &= \| U_k^\top \ba_n \|^2  \\
    &= (U_k^\top \ba_n)^2,
\end{align}
where the last step follows from the fact that $U_k \in \R^{D \times 1}$. We then inject this expression in \eqref{eq:proof2-objective}, which gives
\begin{align}
    &\mathbb{M}_{k \in \mathcal K}^{2} \mathbb{M}_{n \in \mathcal D}^2 [ (U_k^\top \ba_n)^2 ]\\[1mm]&~=  \mathbb{M}_{k \in \mathcal K}^{2} \bigg[ \sqrt{ \mathbb{E}_{n\in \mathcal D}\big[ \big(U_k^\top \ba_n\big)^4\big] } \bigg]\\[1mm]
     &~=  \sqrt{\mathbb{E}_{k \in  \mathcal K} \mathbb{E}_{n\in \mathcal D} \big[\big(U_k^\top \ba_n \big)^4\big] }.
\end{align}
Optimizing the objective above is therefore equivalent to
\begin{align}
\maximize_{\bbu} \sum_{k\in \mathcal K}  \mathbb{E}_{n\in \mathcal D}\big[ \big(U_k^\top \ba_n \big)^4\big] 
\label{eq:drsa-kurtosis}
\end{align}
subject to $\bbu^\top \bbu = I_D$. Recall the  definition  of kurtosis  for a random variable $Y$ (see Eq. 8.5 in \cite{hyvarinen.icabook.ch8}):
$$
\text{kurt}(Y) = \mathbb{E}[Y^4] - 3(\mathbb{E}[Y^2])^2.
$$
Let $Y_k= U^\top_k \ba_n $ be a random variable associated to the (random) activation vector $\ba_n$. Because $U_k \in \R^{D\times 1}$ and the activation vectors $\ba_n$ are whitened, we have 
\begin{align}
    \mathbb{E}_n[Y_k^2] 
    &= \mathbb{E}_n[(U_k^\top \ba_n \ba_n^\top U_k ]  \\
    &= U^\top_k \mathbb{E}_n[ \ba_n \ba_n^\top  ]  U_k  \\
    &= U_k^\top U_k  \\
    &= 1.
\end{align}
Therefore, the maximization objective becomes  the sum of $\text{kurt}(Y_k)$.
\end{proof}
\begin{remark}
Unlike the setting of Proposition \ref{proposition:reduction} which uses $q=2$ (i.e.\ $\mathbb{M}_{k \in \mathcal K}^{2}$), we use in our DRSA models the parameter $q=0.5$ in order to balance the contribution of each subspace. Such balancing is however not necessary in ICA because the latter is always preceded by a whitening transform.    
\end{remark}

\subsection{Derivation of the PRCA Objective}
\label{sec:derivation-pca-prca-objectives}
\begin{proposition*}
    Suppose $\mathcal D = \{(\ba \in \R^D, \bc \in \R^D)\}$ is a set of  activation-context vector pairs. Let $V \in \R^{D \times d}$ where $d$ is the number of dimensions chosen by the user. Optimizing the objective of PRCA:
    \begin{align*}
        \maximize_{V}~\mathbb{E}_{\mathcal D}[(V^\top \ba)^\top (V^\top \bc\big)]\\
    \text{subject to:}~~V^\top V = I_{d}
    \end{align*}  is equivalent to solving the following eigenvalue problem 
    $$
         \mathbb{E}_{\mathcal D}[\bc\ba^\top + \ba \bc^\top ]  U = U\Lambda,
    $$
    where  $\Lambda \in \R^{d \times d}$ is a diagonal matrix containing the $d$ largest eigenvalues and $U \in \R^{D \times d}$ is the concatenation of the $d$ corresponding  eigenvectors.
\end{proposition*}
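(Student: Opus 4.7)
The plan is to reduce the constrained maximization to a symmetric trace maximization on $V$, and then invoke the standard eigenvalue characterization of such problems (Ky Fan / Rayleigh--Ritz).

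First, I would rewrite the scalar $(V^\top \ba)^\top (V^\top \bc) = \ba^\top V V^\top \bc$ as a trace. Using the cyclic property, this equals $\Tr(V^\top \bc \ba^\top V)$. Pulling the expectation inside the trace (by linearity) gives objective $\Tr(V^\top M V)$ with $M := \mathbb{E}_{\mathcal D}[\bc \ba^\top]$. The nuisance here is that $M$ is not generally symmetric, which is precisely why the statement of the proposition features the symmetrized matrix $\bc\ba^\top + \ba\bc^\top$ rather than $\bc\ba^\top$ alone.

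Next, I would symmetrize. Since $\Tr(V^\top M V)$ is a scalar, it equals its own transpose $\Tr(V^\top M^\top V)$, hence
\begin{align*}
\Tr(V^\top M V) = \tfrac12 \Tr\!\big(V^\top (M + M^\top) V\big) = \tfrac12 \Tr(V^\top S V),
\end{align*}
where $S := \mathbb{E}_{\mathcal D}[\bc \ba^\top + \ba \bc^\top]$ is symmetric. The PRCA problem therefore becomes: maximize $\tfrac12 \Tr(V^\top S V)$ subject to $V^\top V = I_d$. The constant $\tfrac12$ clearly has no effect on the argmax.

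Finally, I would invoke the classical fact (a direct Lagrangian computation, or Ky Fan's maximum principle) that, for a symmetric matrix $S \in \R^{D\times D}$, the problem $\max_{V^\top V = I_d} \Tr(V^\top S V)$ attains its optimum when the columns of $V$ are orthonormal eigenvectors of $S$ corresponding to its $d$ largest eigenvalues, with optimal value equal to the sum of those eigenvalues. Setting $U$ to be such a $V$ and $\Lambda$ to be the diagonal matrix of the corresponding eigenvalues yields $S U = U \Lambda$, i.e.\ the claimed eigenvalue problem. The main conceptual step is the symmetrization trick; the main technical step is the invocation of Ky Fan's theorem, but both are standard and essentially mechanical once the trace reformulation is written down.
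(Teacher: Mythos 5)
Your proof is correct, and it reaches the same eigenvalue problem as the paper, but by a genuinely different route. The paper's proof shares your Step 1 (the trace reformulation $\Tr(V^\top \bc\ba^\top V)$), but then proceeds via an explicit Lagrangian: it introduces a symmetric multiplier matrix $S$ for the constraint $V^\top V = I_d$, differentiates, and obtains the stationarity condition $\mathbb{E}_{\mathcal D}[\bc\ba^\top + \ba\bc^\top]V = VS$ --- so in the paper the symmetrization emerges as a by-product of the gradient $\tfrac{\partial}{\partial V}\Tr(V^\top M V) = (M+M^\top)V$, rather than being applied to the objective up front. It then diagonalizes $S = E\Lambda E^\top$ and rotates $U = VE$ to recover the eigenvalue problem. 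You instead symmetrize the objective directly via the scalar-transpose identity $\Tr(V^\top M V) = \Tr(V^\top M^\top V)$ and then invoke Ky Fan's maximum principle. Your route buys something the paper's argument technically leaves open: the Lagrangian computation only characterizes \emph{critical points} (any $d$ eigenvectors of the symmetrized matrix yield one), and the paper simply asserts without further argument that the relevant solution is the one with the $d$ largest eigenvalues; Ky Fan's theorem actually certifies that this choice is the global maximizer, with optimal value half the sum of those eigenvalues. Conversely, the paper's derivation is self-contained (no external theorem cited) and makes visible the rotational freedom $U = VE$ within the optimal subspace, which is conceptually relevant since PRCA only determines a subspace, not a particular basis. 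Both arguments are valid; yours is the more economical and the more complete on the ``which critical point wins'' question.
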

\begin{proof}
We divide the proof into three steps: 1) reformulating $(V^\top \ba)^\top (V^\top \bc)$ in terms of trace; 2) constructing a constrained optimization problem using the method of Lagrange multipliers; and 3) finding the critical points of the constructed Lagrangian.

\smallskip

(Step 1): Observing that $(V^\top \ba)^\top (V^\top \bc) \in \R$, we can write it as 
\begin{align}
    (V^\top \ba)^\top (V^\top \bc) &= \ba^\top V V^\top \bc \\
    &= \text{Tr}( \ba^\top V V^\top \bc  ) \\
    &= \text{Tr}( V^\top \bc \ba^\top V),
\end{align}
where the last step uses the fact that trace is invariant under cyclic permutation. 

\smallskip

(Step 2): Because the condition $V^\top V = I_{d}$ induces $d \cdot (d+1)/2$ equality constraints, the Lagrangian of the objective is therefore
\begin{align}
    \mathcal L(V, S ) = \text{Tr}( V^\top \mathbb{E}_{\mathcal D}[\bc \ba^\top] V) - \frac{1}{2} \text{Tr}( (V^\top V - I_{d}) S ),
\end{align}
where $S \in \R^{d \times d}$ is a symmetric matrix of Lagrange multipliers for the $d \cdot (d+1)/2$ equality constraints.

\smallskip

(Step 3): Taking the derivative of $\mathcal L(V, S )$ w.r.t.  $V$ and setting it to zero yields
\begin{align}
\mathbb{E}_{\mathcal D}[\bc \ba^\top + \ba\bc^\top] V = V S.
\label{eq:prca-eigval-problem}
\end{align}
Define $\Sigma = \mathbb{E}_{\mathcal D}[\bc \ba^\top + \ba\bc^\top]$. Because $S=S^\top$, it can be diagonalized. Suppose its diagonalization is $S=E\Lambda E^\top$ where
 $E \in \R^{d\times d}$ is an orthogonal matrix and $\Lambda \in \R^{d\times d}$ is a diagonal matrix.  Right multiplying Eq.\ \eqref{eq:prca-eigval-problem} with $E$ leads to
\begin{align}
   \Sigma VE&= V SE \\
    &= V(E\Lambda E^\top) E \\
    &= VE\Lambda.
\end{align}
Let $U=VE \in \R^{D \times d}$ and note that $U^\top U = (VE)^\top (VE) = E^\top V^\top V E = I_{d}$. We therefore arrive at the eigenvalue problem
\begin{align}
    \Sigma U = U\Lambda,
\end{align}
where each column $U_{:, \tau} \in \R^D$ is the corresponding eigenvector of the $\tau$-th largest eigenvalue $\Lambda_{\tau\tau}$.
\end{proof}

\section{Selection of the Attribution Method}
\label{sec:selection-explanation-backend}

As a starting point to our experiments, we need to select a suitable attribution method. It serves both to extract relevance scores necessary to build subspaces and then to compute disentangled explanations based on the learned subspaces.

\subsection{Evaluation Baselines}

We conduct the evaluation of attribution methods and parameter selection on the ImageNet dataset. We consider Shapley value, Gradient $\times$ Input, Integrated Gradients, and LRP attribution techniques.

For the \textbf{Shapley value}, we use the `Shapley Value Sampling' approximation from the Captum library \cite{DBLP:journals/corr/abs-2009-07896}.  Additionally, we also coarse-grain pixels into disjoint $16\times 16$ pixel patches\footnote{Suppose an input image has $224\times224$ spatial dimensions. Grouping the pixels into $16\times 16$-pixel patches leads to the attribution of only  $14 \times 14$ (coarse-grained) input features, instead of $224 \times 224$.} and perform attribution on these patches, making the attribution for one data point achievable within a reasonable time. We choose the number of permutations in Shapley Value Sampling to be $25$. We use the reference point $\widetilde{\x} = \boldsymbol{0}$ which corresponds to setting removed patches to uniform gray color\footnote{We note that, for input features, the reference point refers to the value after the  input standardization step. That is, if the pixel values are channel-wise standardized by means and standard deviations, the zero reference point corresponds to an input whose pixels are channel means. \label{footnote:input-reference-value}}, and to $\widetilde{\ba} = \boldsymbol{0}$ when applying the method to a function of activations.

For \textbf{Integrated Gradients}, we set the reference points\footref{footnote:input-reference-value} $\widetilde{\x}$ and $ \widetilde{\ba}$ to zero similarly to the Shapley value setting, and choose the linear integration path between the reference points and the actual points. We perform $10$ integration steps when attributing on input features, and $100$ steps when attributing on activations.

For \textbf{Layer-wise Relevance Propagation (LRP)}, we choose  for the VGG16 architecture the heuristics of \cite{bilrp}. Specifically, we use LRP-$\gamma$, where we set $\gamma$ to value $0.5$ in the first two convolution blocks, $0.25$ in the third one, $0.1$ in the fourth one, and $0.0$ in the last convolution block and in the classification head. For the NFNets architecture, we use our novel LRP implementation that utilizes the generalized LRP-$\gamma$ rule  \cite{Keyl2022}. We choose the generalized LRP-$\gamma$ because activations in NFNets can be positive or negative. We use the same value of $\gamma$ for all layers in NFNets and perform parameter selection based on the evaluation metric described in the following. We refer to Supplementary Note \ref{sec:implementing-lrp-nfnet} for the details of our NFNet-LRP implementation.

\subsection{Patch-Flipping Evaluation}
\label{section:pixel-flipping}
Similar to Section 4.1 in the main paper, we evaluate the basic (one-step) attribution techniques (and their parameters) using the patch-flipping method and  the area under the patch-flipping curves (AUPC) \cite{samek-tnnls17}.

\textit{Experimental Setup :} 
We construct a dataset  of $5000$ images in which we randomly select  $5$ validation images of each class in the ImageNet dataset \cite{imagenet_cvpr09}. We compare five attribution methods, namely random attribution, \gradinput, \intgrad, Shapley Value Sampling, and LRP. For computational reasons, we use for Shapley Value Sampling only 10\% of the dataset.  We perform the comparison on three ImageNet-pretrained models used in the main paper (VGG16-TV, VGG16-ND, and NFNet-F0).

Table \ref{table:pixelflipping} shows AUPC scores from different attribution methods and models. From the table, as expected, we first observe that all the methods are substantially better than the random baseline. Secondly, we see that Shapley Value Sampling and LRP performs better than \gradinput\ and \intgrad\ across the three models. Comparing to Shapley Value Sampling, LRP seems to be on par on VGG16-ND but slightly worse on VGG16-TV or better on NFNet-F0. We refer to Fig.\ \ref{fig:pixel-flipping-imagenet-patchsize4} for the corresponding patch-flipping curves.  For NFNet, we performed grid-search on $\gamma \in \{0, 0.001, 0.01, 0.1, 1.0\}$ and found that $\gamma=0.1$ yields a satisfying AUPC of $3.64$. A slightly better AUPC of $3.54$ could be obtained for $\gamma=0.01$ but with visually noisier (and less interpretable) heatmaps (cf.\ Fig.\ {\ref{fig:lrp-final-heatmaps}}).

\begin{table}[t!]
\centering
\caption{Area under the Patch-Flipping Curve (AUPC) of  different attribution methods for different models. We average the obtained score over $5000$ random validation images from the ImageNet dataset \cite{imagenet_cvpr09} for all methods, except Shapley Value Sampling, where we use only 10\% of the images for computational reasons. The largest error bars of Shapley Value Sampling and other methods are $\pm 0.13$ and $\pm 0.06$ respectively. We perform patch-flipping over patches of size $16\times16$.  We show the best AUPC score of each model in bold. 
}
\label{table:pixelflipping}
\begin{tabular}{l|ccc}\toprule
 & \parbox{1.5cm}{VGG16-TV}
 & \parbox{1.5cm}{VGG16-ND}
 & \parbox{1.5cm}{NFNet-F0}\\\midrule

% Table Meta Information
% notebook=viz-pixel-flipping.ipynb@table_flipping_all_for_patch_size
% artifact-dir=../artifacts/2023-12-revision-v1.20.0/pixel-flipping
% Generated at 2023-12-18 16:34:55.074895
% patch_size=16
%        torchvision-vgg16-imagenet & netdissect-vgg16-imagenet & dm_nfnet_f0
Random                         & 8.32 & 7.85 & 5.42  \\
\gradinput            & 7.08 & 6.24 & 5.16  \\
\intgrad                     & 6.18 & 6.23 & 4.88  \\
Shapley Value Sampling               & \textbf{4.91} & \textbf{4.64} & 3.77  \\
LRP                            & 5.78 & 4.91 & \textbf{3.64}  \\
%Largest stderr(not svs)=0.0540 and stdeer(svs)=0.1326
%patch_size=16
% end
\bottomrule
\end{tabular}
\end{table}

\begin{figure}
    \centering
    \includegraphics[clip,trim={15cm 0 3cm 0.5cm},width=.32\textwidth]{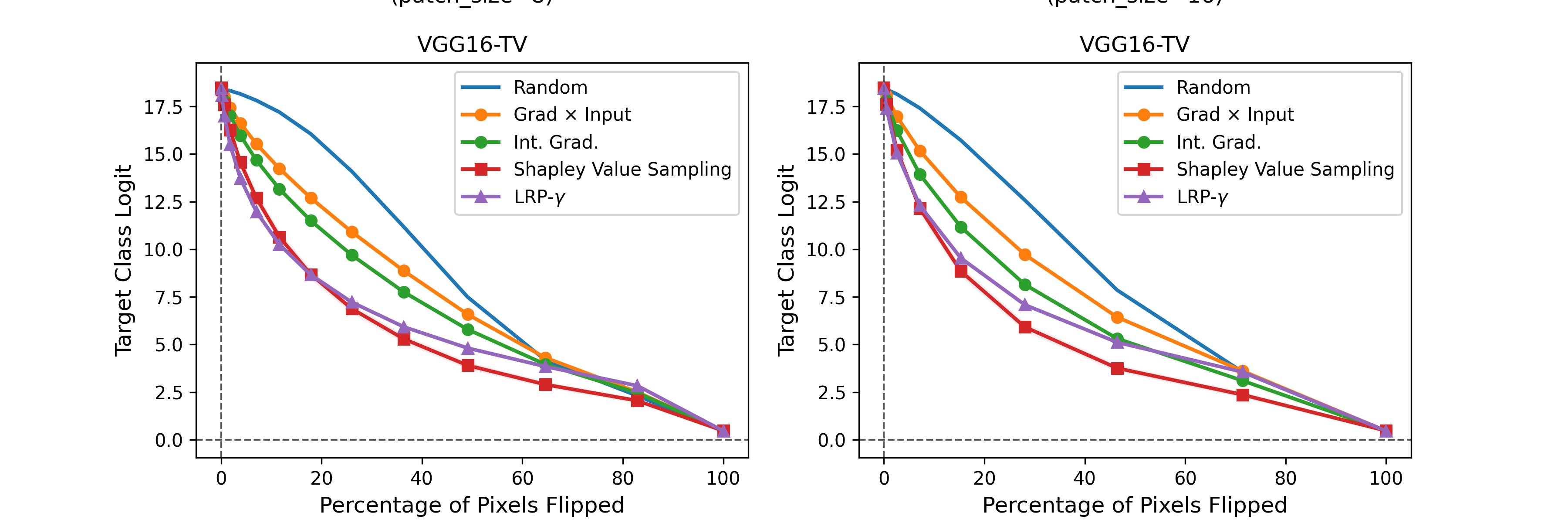} \hfill
    \includegraphics[clip,trim={15cm 0 3cm 0.5cm},width=0.32\textwidth]{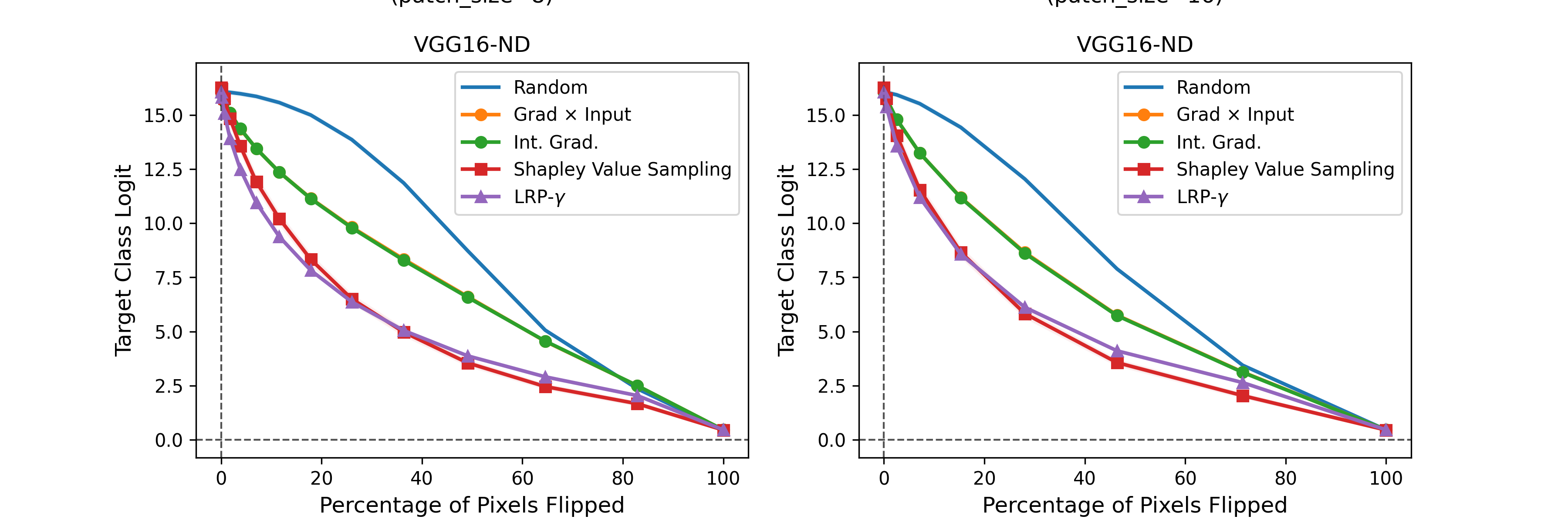} \hfill
    \includegraphics[clip,trim={15cm 0 3cm 0.5cm},width=0.32\textwidth]{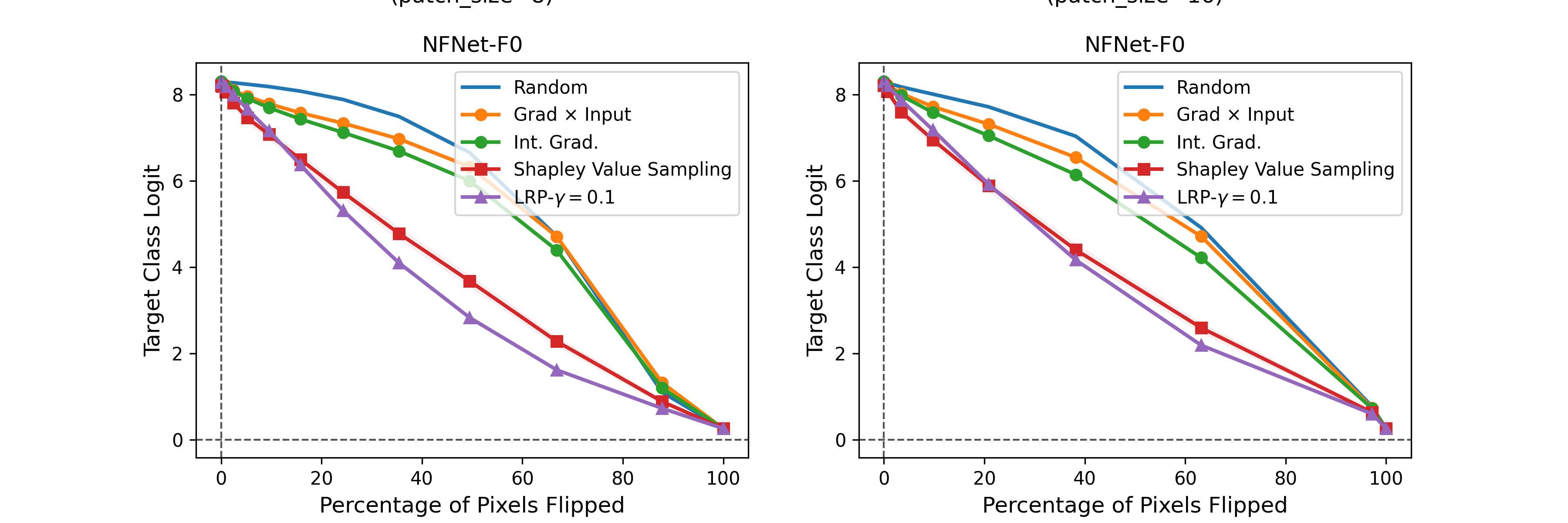}
    \caption{Patch-flipping curves from different attribution methods and models.  We average the scores  from $5000$ random validation images from the ImageNet dataset \cite{imagenet_cvpr09} for all methods, except Shapley Value Sampling, where we use only 10\% of these images for computational reasons. We perform patch-flipping over patches of size $16\times16$.}
    \label{fig:pixel-flipping-imagenet-patchsize4}
\end{figure}

\begin{figure}
    \centering
    \includegraphics[width=0.8\textwidth]{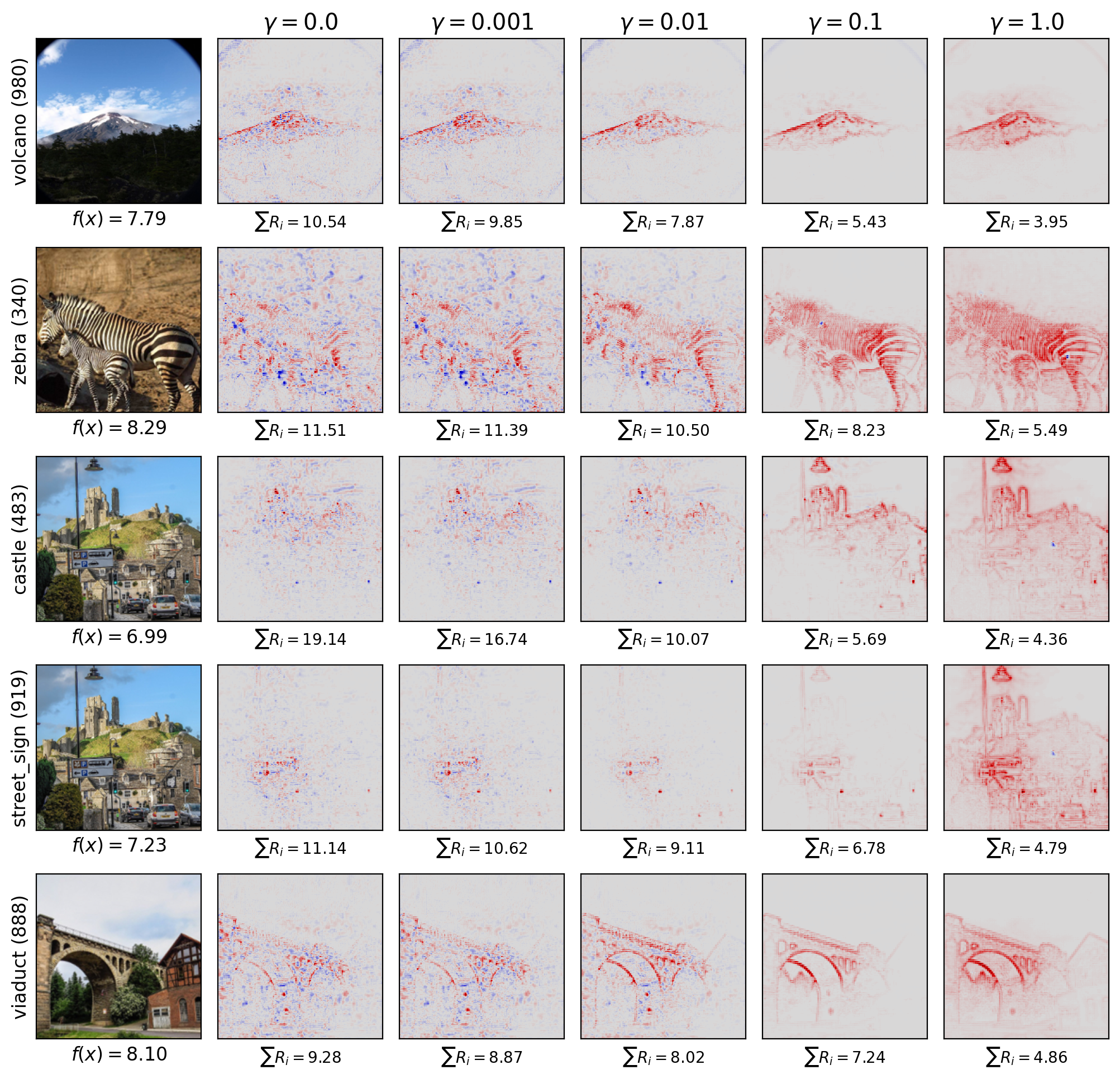}
    \caption{\lrpgamma{} heatmaps of different input images for NFNet-F0 and with different values of $\gamma$.}
    \label{fig:lrp-final-heatmaps}
\end{figure}

\subsection{Computational Efficiency of Attribution Methods}
\label{sec:remark-on-computationale-efficiency}
Computational efficiency of the explanation method is an important aspect in practice. Gradient$\,\times\,$Input and LRP perform favorably, with both approaches requiring only one forward/backward pass in the network. LRP comes with a small additional cost due to operationalizing LRP rules (e.g.\ via forward hooks). Although the cost is implementation-dependent, we find empirically that it does not exceed an order of magnitude of the original computation. In comparison, the cost of Integrated Gradient grows linearly with the number of integration steps, and that of Shapley Value Sampling linearly with the number of input features and sampled permutations.

We note that, unlike a typical (i.e.\ one-step) attribution scenario, the two-step explanation we consider in our paper generates not a single but $K$ explanations per data point. In other words, the overall runtime increases for all methods by a linear factor $K$ compared to the typical setup, making the computational efficiency an important criterion when selecting the underlying attribution method.

\section{Training DRSA and DSA}
\label{sec:training-irca}
\subsection{Preprocessing and Optimization Parameters}
Let $\mathcal X$ be a set of randomly selected training images of the class of interest with $|\mathcal X|=N$.  We take activation (and context) vectors  at $n$ random spatial locations from each of these $N$ training images. We generate the context vectors w.r.t.\ the logit of the class using a chosen attribution method. Denote $\mathcal A = \{ (\ba \in \R^D, \bc \in \R^D) \}$ to be the set of these activation and context vectors pairs. Suppose $i \in \{1, \dots, | \mathcal A |  \}$ and $j \in \{1, \dots, D \}$. We  optimize DRSA on $\hat {\mathcal  A} =\{ (\hat \ba, \hat \bc ) \}$ where
\begin{align}
    \hat{\ba} = \frac{1}{\sqrt[4]{D}} \frac{\ba}{\sqrt{\mathbb E _ {i,j} [a_{ij}^2 ]}}, \quad \hat{\bc} =  \frac{1}{\sqrt[4]{D}} \frac{\bc}{ \sqrt{\mathbb E_{i,j} [ c_{ij}^2 ]} }.
\end{align}
We found that the normalization helps stabilize the optimization process. We initialize the training of DRSA with a random $D\times D$-orthogonal matrix\footnote{We generate such an orthogonal matrix via the `ortho\_group' module from SciPy \cite{2020SciPy-NMeth}.}, which we partition into  $K$ blocks according to the numbers of dimensions $d_k$'s chosen by the user. As mentioned in the main paper, each optimization iteration contains two steps, namely batch gradient ascent and 2) orthogonalization.  Because the objective of DRSA (Eq.\ 10 in the main paper) is non-convex, we perform $\tau$  runs using different orthogonal matrices. Among these $\tau$ runs, we select the run  that achieves the highest objective value to be  the solution of the optimization. We sort the blocks $U_k$'s of the solution according to $\mathbb{E}_{\hat{\mathcal D}}[ ( U_k^\top \hat{\ba})^\top (  U_k^\top \hat{\bc} )]$ in descending order and form the final orthogonal matrix $\bbu$ accordingly.

%Table \ref{tab:training-irca} summarizes the values of all optimization parameters used in our experiments.
We select for $N=500$ examples activation vectors at $n=20$ different spatial locations. The optimization procedures are run for $5000$ iterations, and we perform $\tau=3$ runs, retaining the best solution. With these parameters, the optimization time of DSA/DRSA is approximately 10  and 60 minutes on Nvidia Quadro RTX 5000 for activations with $D=512$ and $1536$ respectively; the former is the case of VGG16 at Conv4\_3, while the latter is of NFNet-F0 at Stage 2.  Fig.\ \ref{fig:irca-optimization-convergence} shows the training curves of the optimization from the three models: VGG16-TV, VGG16-ND, and NFNet-F0.

\begin{figure*}[h]
    \begin{minipage}{.7\textwidth}
    \centering
    \includegraphics[width=\textwidth]{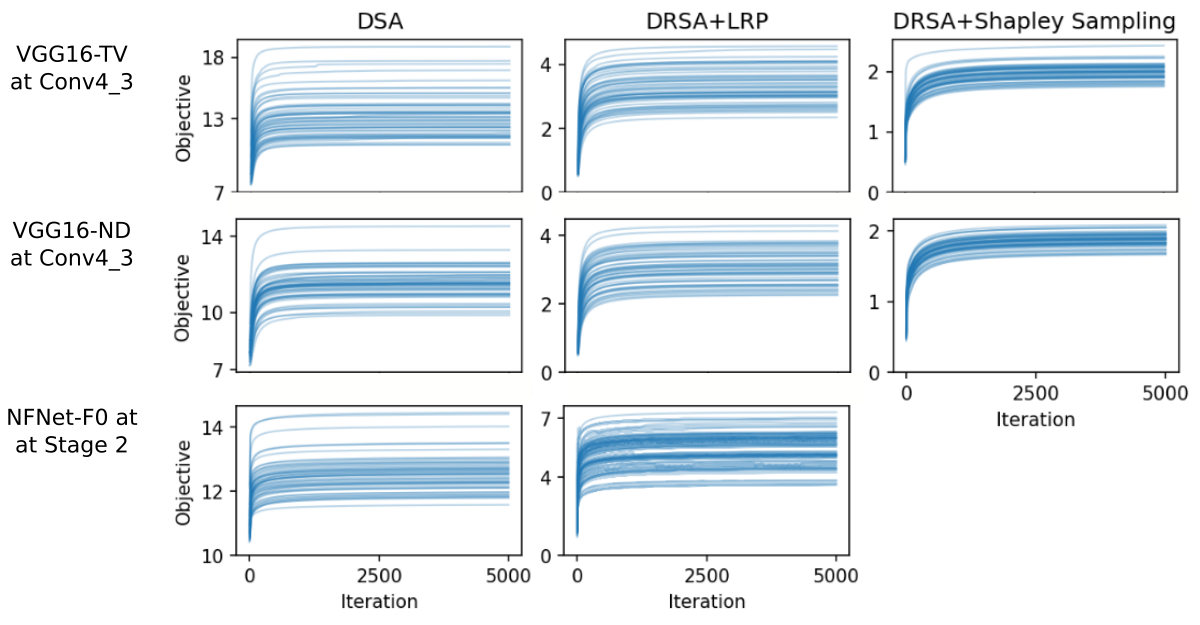}
    \end{minipage}\hfill
    \begin{minipage}{.25\textwidth}
    \caption{Training curves of the DSA and DRSA optimization across different models and attribution methods. In each plot, each curve corresponds to one of the 50 ImageNet classes used in the main experiments (see Section 5 of the main paper). 
    }
    \label{fig:irca-optimization-convergence}
    \end{minipage}
\end{figure*}

\subsection{Procedure for Selecting Subspace Prototypes}
\label{sec:algo-prototype}
We develop a procedure to select a set of prototypical images for visualizing what semantic features DRSA subspaces represent. For example, we use the procedure to generate Fig.\ 1 in the main paper. Objectively, we design the procedure such that all subspaces are approximately equally expressed. We achieve this goal by utilizing the objective of DRSA (Eq.\ 10 in the main paper).

Let $\bbu= (U_1|\dots |U_k| \dots | U_K)$ be the learned orthogonal matrix from  DRSA for a given class. Let   $n$ be the desired number of prototypes and $N$ be the number of random candidate subsets.  Denote $\mathcal X$ to be a set of some images from the class. The procedure goes as follows:
\begin{enumerate}
    \item We construct $N$ random candidate subsets, each containing $n$ images from $\mathcal X$;
    \item We compute, for each candidate subset, the DRSA objective using the activation and context vectors of the $n$ images in the subset;
    \item We take the subset with the largest objective to be the set of prototypical images.
\end{enumerate}
We use $N=1000$.

\section{Ablation Studies}
\label{sec:ablation-studies}

\subsection{Ablation on PRCA and DRSA Formulations}
The goal of PRCA is to find a $d$-dimensional subspace that is maximally relevant for the prediction.  As discussed in Section  4.1 of the main paper, the goal is equivalent to find a matrix $U \in \R^{D \times d}$ that defines a projection onto such a subspace and takes the most relevance, i.e.\ $\maximize_U \mathbb{E}[R]$ with $R = (U^\top \ba)^\top (U^\top \bc)$, and subject to $U^\top U = I_{d}$. As stated in the main paper and shown  in Supplementary Note \ref{sec:derivation-pca-prca-objectives}, the solution to the optimization is the $d$ eigenvectors associated to the largest eigenvalues of the symmetrized cross-covariance matrix  $\mathbb{E}[\ba\bc^\top + \bc\ba^\top]$. In this ablation analysis, we aim to verify that the solution of PRCA indeed preserves the most relevance. We quantify the property through the `\textit{Total Relevance}' score of  the input features $(\boldsymbol{x}_p)_{p=1}^P$: 
\begin{align}
     \text{TotalRelevance}(U)  = \mathbb{E} \Big[ \sum_{p=1}^{P} [\mathcal{E}(R, \x)]_p \Big].
     \label{eq:rtot}
 \end{align}
We remark that, for the Shapley value, we can compute this score  efficiently via $\text{TotalRelevance}(U) = \mathbb{E}\big[ (\phi(\x) - \phi(\widetilde \x))^\top U U^\top \bc \big]$, where $\phi$ is the function mapping the input $\x$ to the activation vector $\ba$, where $\widetilde \x = \boldsymbol{0}$, and where $\bc$ is the context vector computed from attributing the neural network output $f(\x)$ onto $\ba$. We refer to Supplementary Note \ref{sec:analytic-relevance-model} for the derivation. 
We compare PRCA with three ablations of its formulation:
\begin{itemize}
    \item Ablation 1: we replace the context vector $\bc$ in the objective function with the activation vector $\ba$; the optimization problem leads to the formulation of (uncentered) PCA (cf.\ Proposition \ref{proposition:reduction})
    \item Ablation 2: we construct  $U$ from only standard basis vectors, i.e.\ the matrix $U$ has only one non-zero entry in each row and column.
    \item Ablation 3: we use the first $d$ columns of a random orthogonal matrix, i.e.\ no training involved
\end{itemize}

\begin{table}
\centering
\caption{
Total relevance score (Eq.\ \eqref{eq:rtot}) for PRCA and three ablations. Results are shown for different
combinations of models, datasets, and underlying attribution techniques (columns).  We highlight for each column the best subspace method (highest total relevance score) in bold.  Results are averaged over 50 classes of
the ImageNet dataset or 7 classes of Places365. ($\dagger$) average from three seeds.
}
\label{table:maxcontribution}
    \begin{tabular}{lcccccc}
    \toprule

     & \multicolumn{5}{c}{ImageNet} & \multicolumn{1}{c}{Places365} \\
 
    \cmidrule(lr){2-6} \cmidrule(l){7-7}
    
     & \rotatebox{90}{\parbox{1.5cm}{VGG16-TV\\+ LRP}}
     & \rotatebox{90}{\parbox{1.5cm}{VGG16-ND\\+ LRP}}
     & \rotatebox{90}{\parbox{1.5cm}{NFNet-F0\\+ LRP}}
     & \rotatebox{90}{\parbox{1.5cm}{VGG16-TV\\+ Shapley}}
     & \rotatebox{90}{\parbox{1.5cm}{VGG16-ND\\+ Shapley}}
     & \rotatebox{90}{\parbox{1.5cm}{ResNet18\\+ LRP}}\\
     
     \midrule

%% Version: 2023-12-22 16:55:03.406430
%% artifact-dir=../artifacts/2023-12-revision-v1.20.0/raw-main-experiment

\em No subspace projection ($U=I_D)$  &  11.47 &  10.35 &   6.57 &  17.22 &  16.59 &   1.19 \\ \midrule 

% > prca-ns1-ss1
              PRCA &  \textbf{13.63} &  \textbf{13.72} &  \textbf{11.29} &  \textbf{44.69} &  \textbf{42.26} &   \textbf{1.39}   \\[1mm]

% > pca-ns1-ss1
                      Ablation 1: $\bc \gets \ba$ &   1.81 &   2.69 &  -2.22 &  21.81 &  18.99 &   1.04   \\

% > max-rel-ns1-ss1
         Ablation 2: $U$ standard basis  &   0.97 &   0.87 &   0.30 &   1.11 &   0.92 &   0.14   \\

% > random*-ns1-ss1
          Ablation 3$^\dagger$: $U$ random &   0.02 &   0.00 &   0.01 &   0.02 &   0.02 &   0.00   \\[1mm]

         % largest stderr &   0.66 &   0.61 &   0.58 &   1.76 &   1.41 &   0.12

\textit{Error bars (max)} & $\pm$ 0.66 & $\pm$ 0.61 & $\pm$ 0.58 & $\pm$ 1.76 & $\pm$ 1.41 & $\pm$ 0.12 \\
%% debug information
%% end table %%

\bottomrule
\end{tabular}
\end{table}

\noindent The experimental setup is similar to Section 5.1 in the main paper. From Table \ref{table:maxcontribution}, we observe that, as to be expected, PRCA has the highest total relevance score comparing to the three ablations. This observation confirms that two properties of PRCA: 1) \textit{maximizing} relevance, and 2) doing so over \textit{any} orthogonal projection, are both important in order to concisely capture the relevant part of the decision strategy.

\begin{figure}
    \centering 
    \begin{minipage}{.45\textwidth}
        \includegraphics[clip, trim={0.2cm 0 0cm 1.2cm}, width=\linewidth]{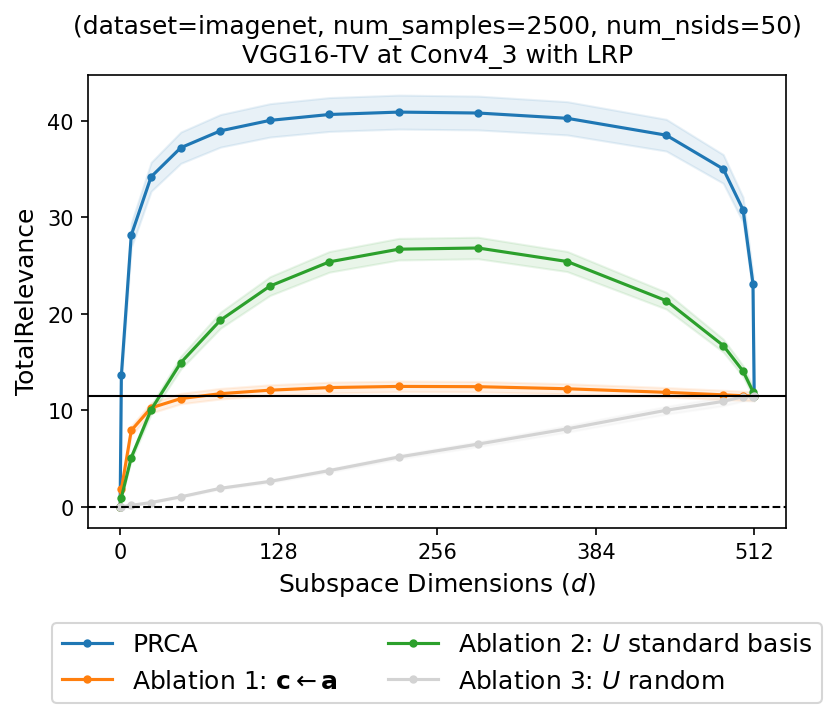}
    \end{minipage}
    \begin{minipage}{.53\textwidth}
        \caption{
        Total relevance of PRCA and three ablations when varying the subspace dimensionality (the variable $d$); higher is better. The analysis is performed on VGG16-TV with LRP
(same as column 1 in Table \ref{table:maxcontribution}). Each curve is an average over
the means of these 50 classes, and shaded regions represent
one standard error (over classes). The horizontal solid line
represents the total relevance of no subspace projection.
        }
        \label{fig:maxcontribution-varydim}
    \end{minipage}
\end{figure}

As a further experiment, we analyze the total relevance score  as a function of  the subspace dimensions $d$. We perform the experiment using  the VGG16-TV model. From  \mbox{Fig.\ \ref{fig:maxcontribution-varydim}}, we observe that PRCA is superior to the three ablations for every subspace size. In particularly, PRCA is able to 1) extract in the top-few principal components a large amount of positive evidence for the output neuron and 2) strongly suppresses negative contributions.

Together, the results of these experiments therefore substantiate that PRCA  indeed finds the subspace that is maximally relevant.

\medskip

 The second question we have considered in this paper (and for which we have proposed the DRSA analysis) is whether the explanation can be disentangled into  semantic components that contribute to the model's decision strategy. In Section 4.2 of the main paper, we have translated the question into the problem of finding an orthogonal matrix $\bbu = (U_1 | \dots | U_k | \dots | U_K)$ that partitions the $D$-dimensional activation space into $K$ subspaces.

Because our focus is on the case of CNNs---in that semantic patterns in the input are spatially separate---such a matrix $\bbu$  disentangles explanation into spatially non-overlapping components. The goal of this ablation study is therefore to verify the non-overlapping property of explanation components directly at the level of joint pixel-concept relevance scores (not at the change of the model's output like the AUPC score). 

To quantify the property, we propose \textit{separability} and \textit{peakness} scores of explanation components, which we define
\begin{align}
    \text{Separability}(\bbu) &= \mathbb{E} \Big[\sum_{p=1}^{P} \max_{k}  \big\{  R_{pk}  \big\} - \max_{k} \Big\{\sum_{p=1}^{P}  R_{pk} \Big\} \Big],
     \label{eq:metric-separability}\\
    \text{Peakness}(\bbu) &= \mathbb{E} \bigg[ \sum_{k=1}^K \bigg(\max_p \{ R_{pk} \} \bigg) \bigg ].
\end{align}

A low separability score occurs, for example, when only a single-component explanation (i.e.\ a standard explanation) is available or when all components of the explanation are the same. Conversely, the separability score is high when the contributions associated to different components correspond to different  input features; in other words, these components are spatially separated. A high peakness occurs when the components of the explanation focus strongly on distinct aspects of the decision strategy.

In the following, we consider DRSA and two ablations of its formulation, namely

\begin{itemize}
    \item Ablation 1: we substitute the context vector $\bc$ in the objective of  DRSA with the activation vectors $\ba$; this ablation is considered in the main paper and called DSA.
    \item Ablation 2: we use a random orthogonal matrix.
\end{itemize}

The experimental setup is similar to Section 5.2 in the main paper. Table \ref{table:maxseparation} shows the separability and peakness scores across setups. From the table, we observe that DRSA has the highest separability and peakness scores. This result reflects the visual inspection of \mbox{Fig.\ 1}  in the main paper, where we can identify distinct concepts  from the DRSA explanations. Furthermore, with different choice  of layers or  number of subspaces, the observation from Table \ref{table:maxseparation} still applies. We discuss these additional experiments in Supplementary Note \ref{section:ablation-different-layers-and-num-subspaces}.

\begin{table}
    \centering
    \caption{
     Separability and peakness scores of subspaces $\bbu$ extracted by DRSA and two ablations  (rows). Results are shown for the same dataset/model/attribution settings (columns) as in Table \ref{table:maxcontribution}. We highlight for each column the best method
(highest scores) in bold. Results are averaged over 50 classes of the ImageNet dataset or 7 classes of Places365. ($\dagger$) average from three seeds.
    }
    \label{table:maxseparation}
\begin{tabular}{lcccccc}
    \toprule

     & \multicolumn{5}{c}{ImageNet} & \multicolumn{1}{c}{Places365} \\
 
    \cmidrule(lr){2-6} \cmidrule(l){7-7}
    
     & \rotatebox{90}{\parbox{1.5cm}{VGG16-TV\\+ LRP}}
     & \rotatebox{90}{\parbox{1.5cm}{VGG16-ND\\+ LRP}}
     & \rotatebox{90}{\parbox{1.5cm}{NFNet-F0\\+ LRP}}
     & \rotatebox{90}{\parbox{1.5cm}{VGG16-TV\\+ Shapley}}
     & \rotatebox{90}{\parbox{1.5cm}{VGG16-ND\\+ Shapley}}
     & \rotatebox{90}{\parbox{1.5cm}{ResNet18\\+ LRP}}\\
    
    \midrule

%% Version: 2024-01-05 17:00:15.911739
%% artifact-dir=../artifacts/2023-12-revision-v1.20.0/raw-main-experiment

\multicolumn{1}{l}{}\\[-1mm]
\multicolumn{1}{l}{Separability} \\\midrule
% > learnt--drsa-ns4-ss[SS]-sm2-seed1
              DRSA  & \textbf{7.2360} & \textbf{6.6688} & \textbf{1.9419} & \textbf{12.1734} & \textbf{8.7456} & \textbf{0.2204}   \\[1mm]
%     stderr [0.4749 0.4173 0.1122 0.6662 0.4823 0.0216]

% > learnt--dsa-ns4-ss[SS]-sm2-seed1
                      Ablation 1: $\bc \gets \ba$ (DSA) & 5.0253 & 4.9317 & 1.1329 & 7.4581 & 5.7264 & 0.1187   \\
%     stderr [0.2629 0.2551 0.0789 0.345  0.2275 0.0139]

% > random*-ns4-ss[SS]
          Ablation 2$^\dagger$: $\bbu$ random & 2.1638 & 2.1438 & 0.1381 & 6.7845 & 5.4216 & 0.0396   \\[1mm]
%     stderr [0.0679 0.065  0.0041 0.2554 0.1706 0.0022]

% > largest stderr=0.6662
\textit{Error bars (max)} & $\pm$ 0.4749 & $\pm$ 0.4173 & $\pm$ 0.1122 & $\pm$ 0.6662 & $\pm$ 0.4823  & $\pm$ 0.0216 \\\midrule

\multicolumn{1}{l}{}\\[-1mm]
\multicolumn{1}{l}{Peakness} \\\midrule
% > learnt--drsa-ns4-ss[SS]-sm2-seed1
              DRSA  & \textbf{0.0524} & \textbf{0.0420} & \textbf{0.0337} & \textbf{0.0072} & \textbf{0.0055} & \textbf{0.0014}   \\[1mm]
%     stderr [0.0036 0.0027 0.0032 0.0004 0.0002 0.0001]

% > learnt--dsa-ns4-ss[SS]-sm2-seed1
                      Ablation 1: $\bc \gets \ba$ (DSA) & 0.0367 & 0.0320 & 0.0222 & 0.0043 & 0.0035 & 0.0011   \\
%     stderr [0.0023 0.0022 0.0019 0.0002 0.0002 0.0001]

% > random*-ns4-ss[SS]
            Ablation 2$^\dagger$: $\bbu$ random & 0.0259 & 0.0231 & 0.0140 & 0.0035 & 0.0030 & 0.0009   \\[1mm]
%     stderr [0.0019 0.0017 0.0011 0.0002 0.0002 0.0001]

% > largest stderr=0.0688
\textit{Error bars (max)} & $\pm$ 0.0019 & $\pm$ 0.0017 & $\pm$ 0.0011 & $\pm$ 0.0002 & $\pm$ 0.0002 & $\pm$ 0.0001\\

    \bottomrule
    \end{tabular}
\end{table}

\subsection{Choosing Different Layers or Number of Subspaces}

\label{section:ablation-different-layers-and-num-subspaces}

We investigate the effect of  the two important parameters in our evaluations between DRSA and other baselines, namely the choice of layers and  the number of subspaces. Because DSA is  the strongest baseline from the evaluation in Section 5.2 of the main paper, we compare DRSA with it on different values of these parameters.

We perform the  experiments with LRP-$\gamma$ and the VGG16-TV and VGG16-ND models with the 50 classes of the ImageNet dataset, similar to 
the main experiments 
(cf.\ Section
5.2
of the main paper
). 
We report the \textit{Area Under the Patch-Flipping Curve (AUPC)}, and the \textit{Separability} and \textit{Peakness} scores defined above. We remark that, for AUPC, a lower score is better, while, for separability and peakness, a higher score is better

\subsubsection{Comparison between DSA and DRSA at Different Layers}
\label{section:ablation-different-layers}
We investigate whether the difference in AUPC, separability and peakness scores of DSA and DRSA remains consistent when choosing different layers.
We consider three different layers of the VGG16 architecture, namely Conv3\_3, Conv4\_3, and Conv5\_3. We fix the number of subspaces to $K=4$.

\textit{Results : }
Fig.\ \ref{fig:comparing-irca-ica-across-layers} shows the AUPC, separability, and peakness scores  of DSA and DRSA at the three different layers of the VGG16 models and across 50 ImageNet classes.
We observe that  DRSA generally has better scores than DSA. The results  indicate that the superiority of DRSA is not  due to the choice
of layers.

\begin{figure}[t!]
    \centering
    \begin{subfigure}{\textwidth}
          \includegraphics[width=\textwidth,clip,trim=0 27.7cm 0 0] {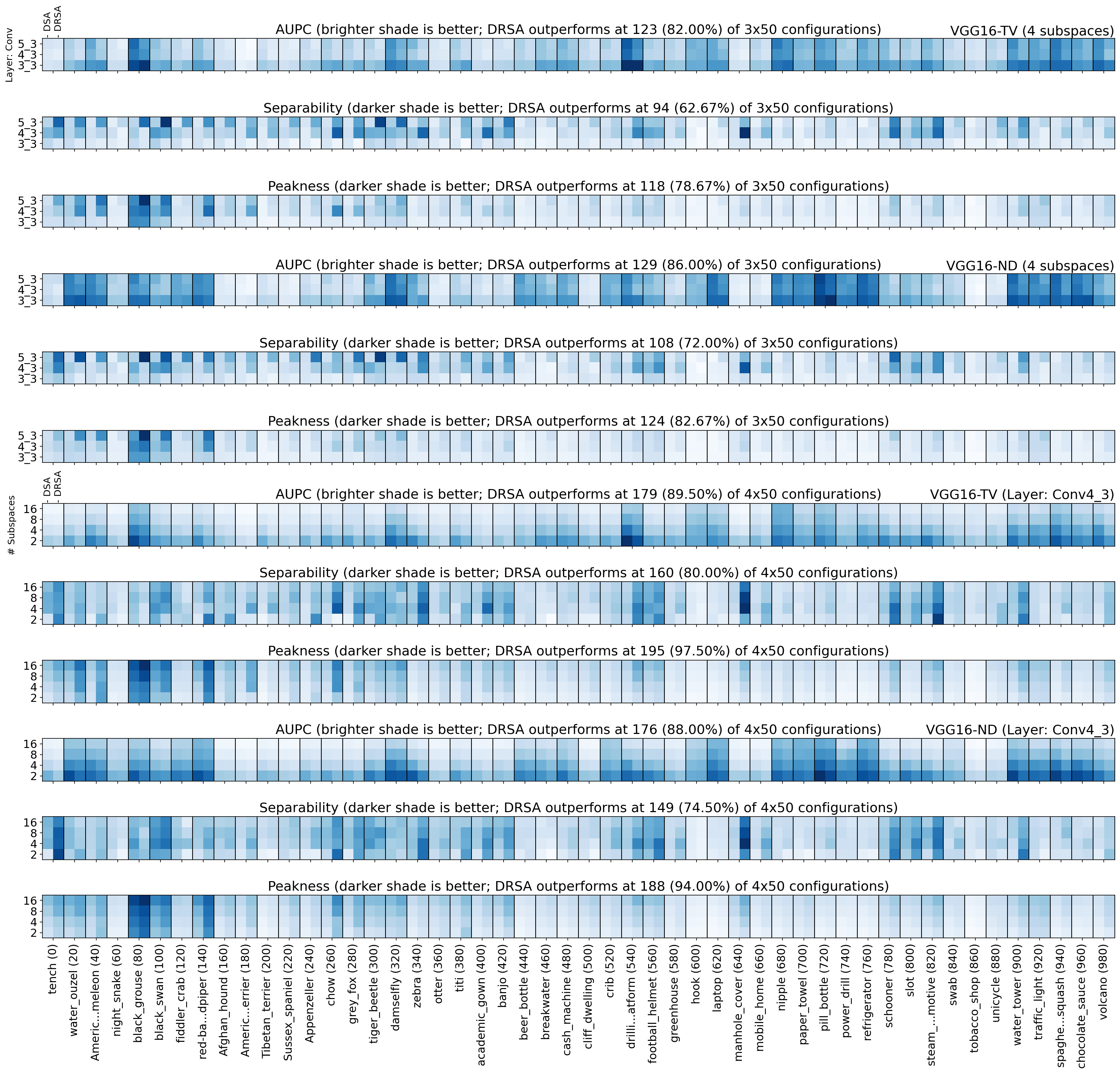}
            \caption{}
            \label{fig:comparing-irca-ica-across-layers}
    \end{subfigure}
    \begin{subfigure}{\textwidth} 
          \includegraphics[width=\textwidth,clip,trim=0 0 0 21.35cm] {figures/ablation/comparison_between_different_subspace_sizes_and_layers.png}
        \caption{}
        \label{fig:comparing-irca-ica-across-subspaces}
    \end{subfigure}
        \caption{Area Under the Patch-flipping Curve (AUPC), separability and peakness scores of subspaces $\bbu$ extracted by DSA and DRSA at (a) different layers with 4 subspaces or (b) Conv4\_3 with different numbers of subspaces.}
    
\end{figure}

\subsubsection{Comparison between DSA and DRSA on Different Numbers of Subspaces}
We investigate whether the difference in AUPC, separability and peakness scores of DSA and DRSA remains consistent when varying the number of subspaces $K$. We consider $K \in \{2, 4, 8, 16\}$ and fix the layer of interest to be Conv4\_3.

Fig.\ \ref{fig:comparing-irca-ica-across-subspaces} shows the AUPC, separability, and peakness scores from disentangled explanations from different collections of subspaces.  For the majority of configurations, we observe again that DRSA yields higher scores than DSA.  This result suggests that the superiority of DRSA is not due to  the number of subspaces.

\section{Class-wise AUPC Score Analysis}
\label{section:class-wise-analysis}
We conduct an additional analysis on the experiment results presented in Section 5.2 of the main paper. Our goal is to  investigate the level of explanation disentanglement across different classes. Because the outputs of different classes  are often in  different scales, to account for such an effect, we use the following class statistics
\begin{align}
    \Delta \text{AUPC}(\bbu) = \text{AUPC}(I_{D, K=1}) - \text{AUPC}(\bbu)
\end{align}
for comparing the disentanglement level of different classes; the higher the $\Delta$AUPC score, the more disentangled explanation components the class has.

\textit{Results : } Fig.\ \ref{fig:class-wise-analysis} shows the $\Delta$AUPC scores across different classes from VGG16-TV. We find that the top three classes are class `zebra`, `drilling platform`, and `steam locomotive`. Fig.\ \ref{fig:class-wise-analysis-heatmap} shows the DSA and DRSA explanation components of  three validation images from class zebra. From the figure, we observe that DRSA decomposes  the prediction strategy of the class zebra into four sub-strategies, namely the detection of the zebra body, the legs, the top part of the body, and other features.
\begin{figure}[t!]
    \centering
    \includegraphics[width=\textwidth]{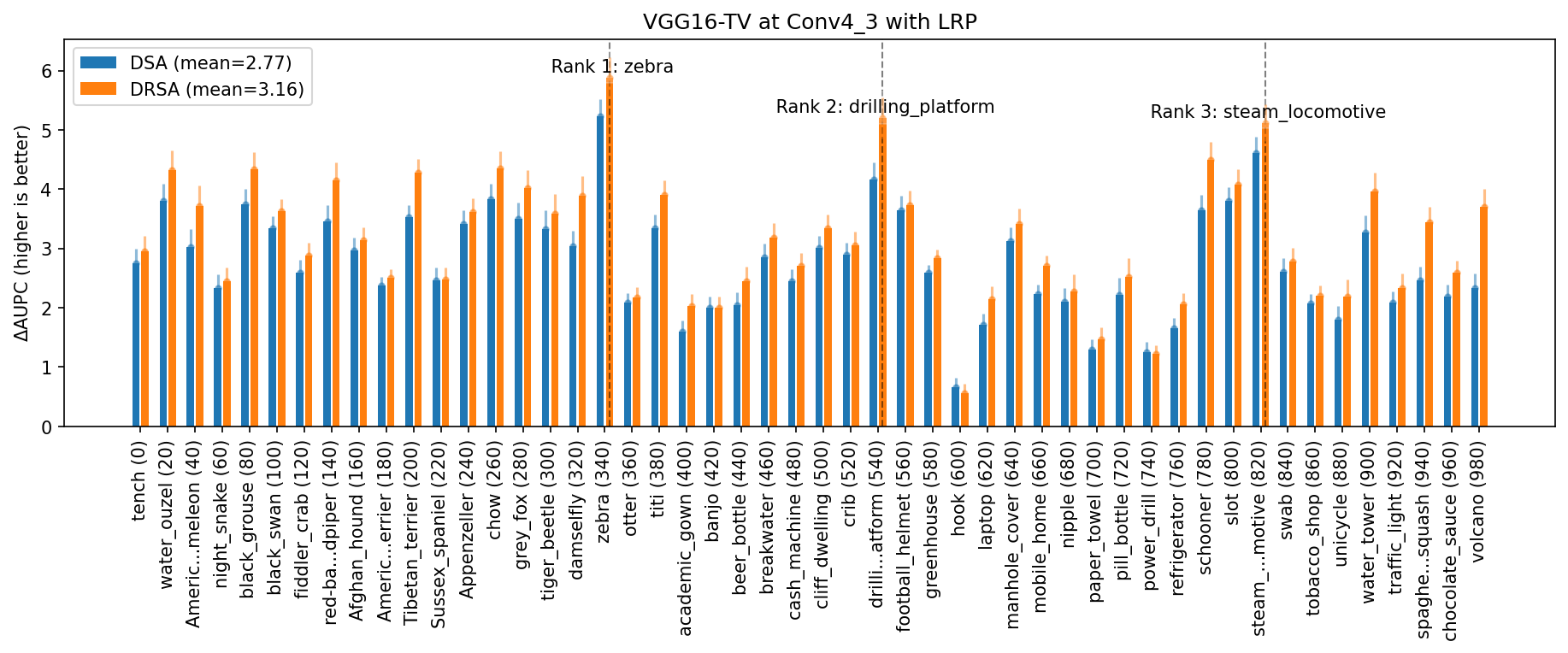}
    \caption{Per class $\Delta$ Area Under the Patch-Flipping Curves ($\Delta$AUPC) of disentangled explanations produced by DSA and DRSA for VGG16-TV using LRP; higher is better.}
    \label{fig:class-wise-analysis}

    \bigskip
    
    \includegraphics[height=0.201\textheight,clip,trim=0 12cm 14.7cm 1cm]{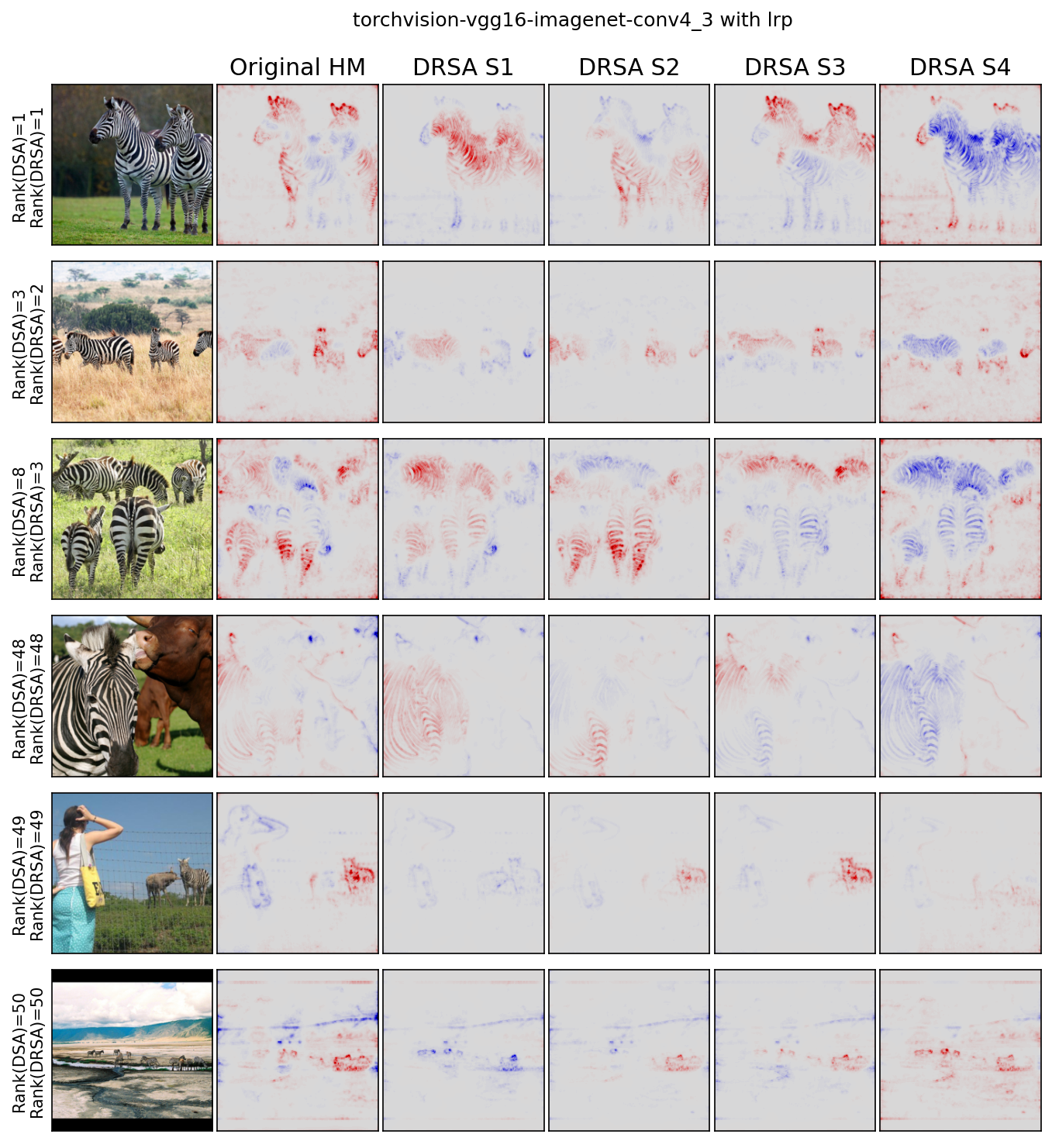}
    \hfill
    \includegraphics[height=0.201\textheight,clip,trim=8.3cm 12cm 0.2cm 1cm]{figures/analysis-zebra/heatmaps-drsa.png}
    \hfill
    \includegraphics[height=0.201\textheight,clip,trim=8.3cm 12cm 0 1cm]{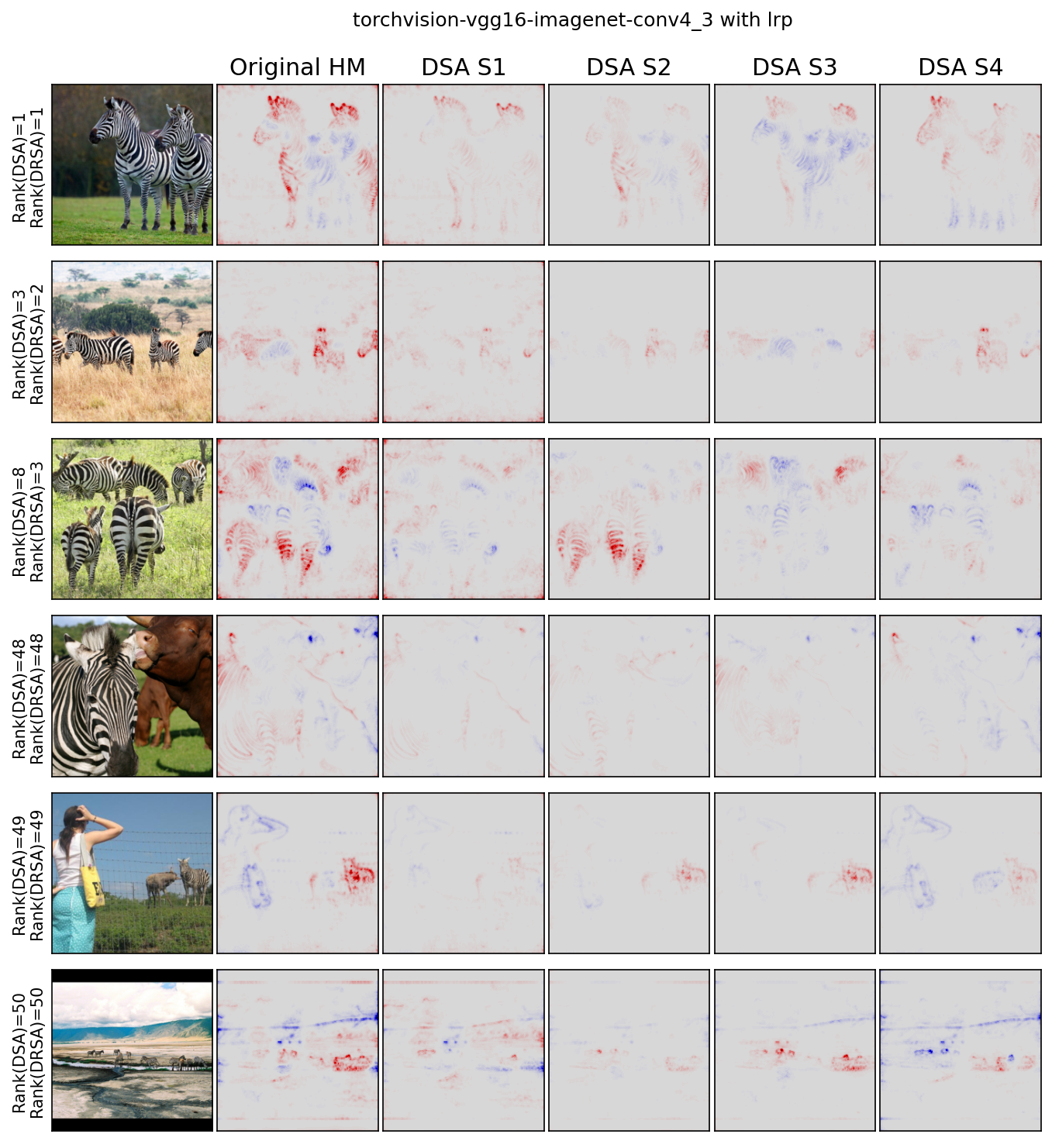}
    \caption{Qualitative comparison between heatmaps  produced from DSA and DRSA subspaces on VGG16-TV at Conv4\_3 with the LRP backend. Red color indicates pixels that contribute evidence for class `zebra`, while blue color indicates pixels that speak against it.  These images are from the validation set of the ImageNet dataset, and the ranking is based on the $\Delta$AUPC scores of DSA and DRSA and relative to all validation images in the class.
    }
    \label{fig:class-wise-analysis-heatmap}
\end{figure}

\section{Details about NetDissect}

\label{sec:baselines-netdissect}

We briefly describe the NetDissect method which we use in our benchmark evaluations in the main paper. NetDissect \cite{DBLP:journals/pami/ZhouBO019} is a framework that associates high-level concepts to units (filters in a given layer) in neural networks. The framework constructs a set of concepts $\mathcal K$ from the semantic categories of the  Broden dataset \cite{BauZKO017}. Let $\mathcal J = \{1, \dots, D\}$ be the set of  units in a given layer.  The framework performs three steps to associate a unit $j \in \mathcal J$ with a concept $k \in \mathcal K$:
\begin{enumerate}
    \item \textit{Gathering Activation Maps:}  images from the Broden dataset $\{ \x \in \R^{3 \times h \times w} \}$ are fed to the model. Their unit $j$ activation maps  $\{ A_j(\x) \in \R^{h' \times w' }\}$ are gathered to determine the 99.5-th percentile $\tau_j$ of the unit $j$'s overall response, i.e.\ $\mathbb{P}(a_j > \tau_j) = 0.005.$

    \medskip
    
    \item \textit{Producing Binary Response Mask:} each activation map $A_j(\x)$ is  resized to the  spatial dimensions of the input
    $S_j(\x) = \texttt{upsample}(A_j(\x)) \in \R^{h'\times w'},$
    and then binarized with the percentile $\tau_k$ to produce a  response mask, i.e.\ $M_j(\x) = {\indicator{S_j(\x) > \tau_j} } \in \{0, 1\}^{h \times w}$.

    \medskip
    
    \item \textit{Quantifying Alignment between `Concept $k$` and `Unit $j$`:} Let $\mathcal D_{k} \subset \mathcal D$ be the subset of Broden images whose pixels are annotated with the concept $k$. Denote the concept $k$ annotation mask of each image $\x$ as $L_k(\x) \in \{0, 1\}^{h \times w}$. NetDissect quantifies the alignment between the concept $k$ and unit $j$  using the Intersection over Union (IoU) ratio: 
    \begin{align}
    \text{IoU}(j, k) = \frac{\sum_{\x \in \mathcal D_k} | M_j(\x) \cap L_k(\x) | }{\sum_{\x \in \mathcal D_k} | M_j(\x) \cup L_k(\x) | } .
    \end{align}
    If the  criteria $\text{IoU}(j, k) > \alpha$  is satisfied\footnote{\cite{DBLP:journals/pami/ZhouBO019} uses $\alpha=0.04$.}, the concept $k$ is  added to the  unit $j$'s  concept set $\mathcal K_j \subseteq \mathcal K$.
    NetDissect then assigns the concept with the largest IoU score to be the concept of the unit $j$, i.e.
    \begin{align}
    \texttt{ConceptOf}(j) \leftarrow \argmax_{k   \in \mathcal K_j   } \text{IoU}(j, k). \label{eq:netdissect-step3}
    \end{align}
\end{enumerate}

We adapt `NetDissect-Lite`\footnote{ \url{https://github.com/CSAILVision/NetDissect-Lite}} (provided by  \cite{DBLP:journals/pami/ZhouBO019}) to dissect two publicly available ImageNet-pretrained VGG16 \cite{DBLP:journals/corr/SimonyanZ14a} models. These two models are from TorchVision \cite{torchvision2016} (VGG16-TV) and 
     NetDissect's model  repository\footnote{The model is available at \url{http://netdissect.csail.mit.edu/}. We note that  the model is available in the Caffe \cite{jia2014caffe} format, and we have converted it to the PyTorch \cite{Paszke_PyTorch_An_Imperative_2019} format.
    Our reproduction is on the PyTorch model.
    } (VGG16-ND).    
We reproduce NetDissect results at two layers, namely Conv4\_3 and Conv5\_3. We refer to our extended code repository\footnote{\url{https://github.com/p16i/NetDissect-Lite/wiki}} for the technical details of the reproduction. Fig.\ \ref{fig:netdissect-vgg16} shows the distribution of concepts assigned to filters in the two layers of VGG16-TV and -ND. We see that the concept distributions of the two models generally agree.  In particular, we observe that  lower layers tend to capture low-level concepts (e.g.\ `part' and  `texture'), while high-level layers capture high-level semantics (e.g.\ `object' and `scene'); our observations are similar to what was discussed in  \cite{DBLP:journals/pami/ZhouBO019}. For VGG16-ND specifically, our result is also similar to what \cite{DBLP:journals/pami/ZhouBO019} reports\footnote{\url{http://netdissect.csail.mit.edu/dissect/vgg16_imagenet/}}, indicating no difference between the original Caffe model and our PyTorch converted version. Complementary to Fig.\ \ref{fig:netdissect-vgg16}, Table \ref{tab:netdissect-unique-concept-counts} shows the  number of unique detected concepts in each concept category.

\begin{figure*}[t!]
    \centering
    \includegraphics[clip,trim=0 0 0cm 0,width=0.8\textwidth]{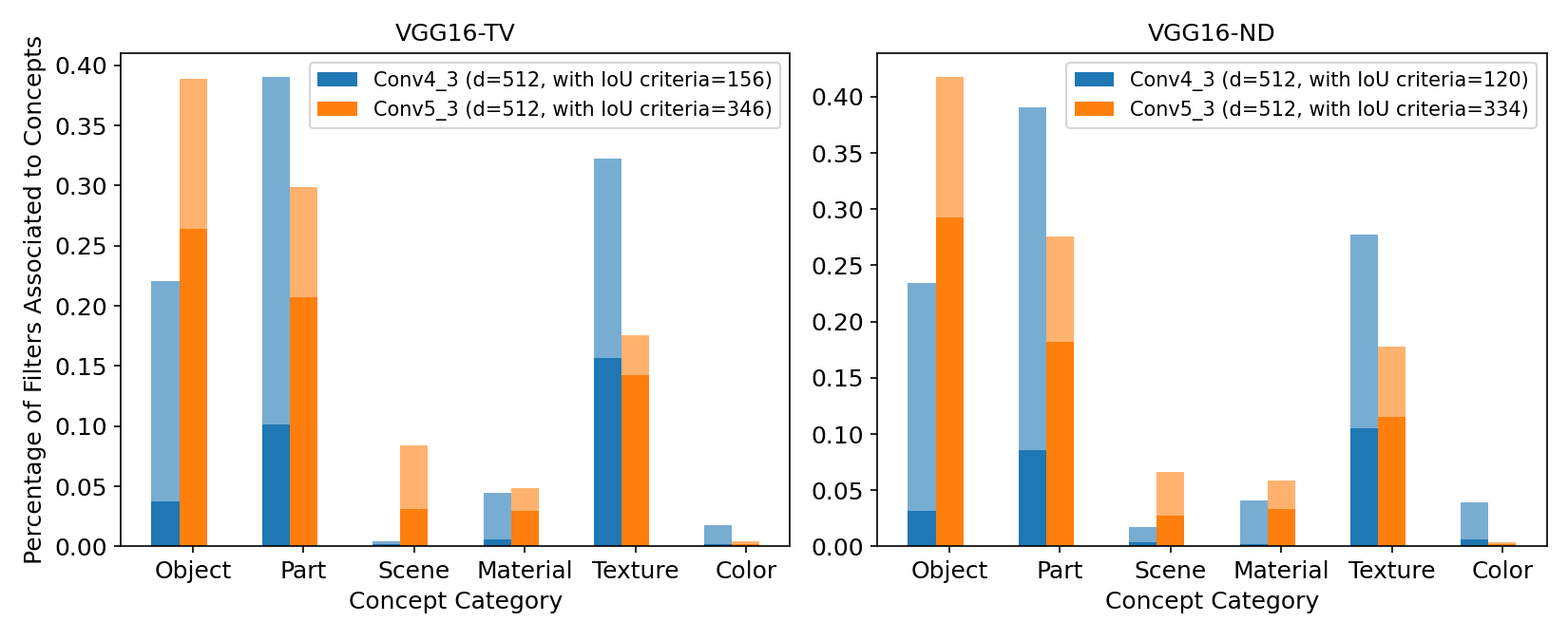}
    \caption{Percentage of filters from two layers of  VGG16-TV and -ND that   NetDissect  associates them to concepts from the Broden dataset. Area with light and dark shade indicate the percentages without and with  the $\text{IoU}$ criteria respectively. }
    \label{fig:netdissect-vgg16}
\end{figure*}

\begin{table*}[t!]
    \centering
    \caption{Number of unique concepts that NetDissect detects from filters in two layers of VGG16-TV and VGG16-ND. The numbers outside and inside parentheses are without and with the $\text{IoU} > \alpha$ criteria respectively.
    }
    \label{tab:netdissect-unique-concept-counts}
    \begin{tabular}{lcccccccc}
    \toprule
     &   & & & & Concept Category & & &  \\
       \cmidrule(lr){3-8} 
    Model & Layer   & Object & Part & Scene & Material & Texture & Color & Total  \\
     \midrule
     
% printed at: 2022-03-24 10:42:33.209521 
VGG16-TV & Conv4\_3 & 35 (10) & 34 (15) & 2 (1) & 8 (2) & 31 (22) & 4 (1) & 114 (51)   \\
 & Conv5\_3 & 47 (31) & 27 (24) & 31 (11) & 8 (5) & 29 (24) & 2 (1) & 144 (96)   \\[1mm]
 
VGG16-ND & Conv4\_3 & 35 (10) & 36 (14) & 6 (1) & 9 (1) & 26 (20) & 6 (1) & 118 (47)   \\
 & Conv5\_3 & 52 (38) & 25 (20) & 24 (10) & 8 (5) & 33 (28) & 2 (1) & 144 (102)   \\

\midrule
% printed at: 2022-03-24 10:42:33.268397  
     
    \end{tabular}
\end{table*}

    As a remark, for the experiments in Section 5.2 of the main paper,  we do not use the $\text{IoU} > \alpha$ criteria to construct subspaces from  NetDissect (i.e.\ $\mathcal{K}_j = \mathcal K$). As a result, each filter is associated to a concept.
    Furthermore, because the  assignment procedure of  NetDissect yields unequal numbers of filters per concept, we rank the concepts based on $R_k / N_k$ where  $R_k$ is the concept $k$ relevance  of a data point and $N_k$ is the number of filters corresponding to the concept.

\section{Details about Interpretable Basis Decomposition (IBD)}
\label{sec:ibd}

We briefly outline the IBD method which we use in our benchmark evaluations in the main paper.
IBD \cite{DBLP:conf/eccv/ZhouSBT18} is a framework that decomposes  the decision of the neural network into a linear combination of concept vectors. Suppose that there exists 1) a set of concepts $\mathcal K = \{ k \}$
and 2) concept vectors $\bu_k \in \R^D$ corresponding to each concept $k$.
Let $\w_t \in \R^D$ be the weight vector (in the last layer of the neural network) corresponding to the class $t$. IBD decomposes the weight vector as
\begin{align}
    \w_t  = \sum_{k \in \mathcal K_t} \alpha_t^k \bu_k  + \boldsymbol{r}_t,
    \label{eq:ibd-weight-decomp}
\end{align}
where $\mathcal K_t \subset \mathcal K$ is a set of class-compatible concepts, $\alpha^k_t \in \R$ is a class-concept coefficient,
and  $\boldsymbol{r}_t \in \R^D$ is a residue vector. 
By linearity, the output of the neural network for the class $t$ is decomposed into the contribution of concepts (via concept vectors)
\begin{align}
    \ba^\top \w_t = \bigg( \sum_{k \in \mathcal K_t} \alpha^k_t (\ba^\top \bu_k) \bigg) + \ba^\top \boldsymbol{r}_t.
\end{align}

In practice, IBD constructs the set of concepts $\mathcal K$ from the Broden dataset \cite{BauZKO017}. Each concept vector $\bu_k$ is based on the weight vector of a classifier that is trained to determine the presence of the concept $k$.
IBD uses a greedy algorithm to determine the  set of class-compatible concepts $\mathcal K_t$ and the class-concept coefficients  $\{ \alpha_t^k \}_{k \in \mathcal K_t}$.

To integrate IBD in our evaluation, we use the sets of class-compatible concepts $\mathcal K_t$'s and concept vectors $\bu_k$'s that are provided by the authors of IBD\footnote{\url{https://github.com/CSAILVision/IBD}}.

We now describe how we construct a virtual layer from IBD concept vectors. We assume that 1) we have the set of class-compatible concepts $\mathcal K_t$ and 2) the concept vectors $\bu_k$  are linearly independent. Let $U  \in \R^{D \times |\mathcal K_t|}$ be the matrix with the concept vectors in columns and $U^+=(U^\top U)^{-1} U^\top$ be its left-pseudo inverse matrix. Denote $\ba, \bc \in \R^D$ to be an activation vector and its context vector. To address the non-orthogonality of the concept vectors, we adapt our formulation of the virtual layer accordingly by expressing
\begin{align}
    \ba' &= UU^+ \ba + \ba^\perp \\
    &= (U^+)^\top U^\top \ba + \ba^\perp,
\end{align}
where $\ba^\perp$ is a residue vector. The concept relevance is
\begin{align}
    R_k = (\bu^\top \ba) (\bu^+ \bc),
\end{align}
where the row vector $\bu^+_k \in \R^{1 \times D}$ is the corresponding row in the left pseudo-inverse matrix $U^+$. We note that this adaptation of the virtual layer has no guarantee on `positivity' (cf.\ Proposition \ref{proposition:conservation}).

\section{Additional Details for Showcases}

\subsection{Showcase 1: Detecting and Mitigating Clever Hans Effects}
\label{sec:showcase-1-additional}

In this note, we provide additional information for Section 6.1 of the main paper. We first briefly outline the Hanzi Watermark Clever Hans strategy used by VGG16-TV for predicting class `carton'. We then describe the construction of the synthetic classification task and poisoning.  Lastly, we present additional experimental results for clean and 50\%-poisoned data.

\subsubsection{Hanzi Watermark Clever Hans Strategy}

Reference \cite{DBLP:journals/inffus/AndersWNSML22} shows that there are a number of classes in the ImageNet dataset \cite{imagenet_cvpr09} that  training images contain Hanzi watermarks. One of such class is class `carton`. To illustrate, Fig.\ \ref{fig:overview-ch} (i) shows three training images from class carton and their corresponding Hanzi watermarks. In general, these watermarks appear at the center of the image. \cite{DBLP:journals/inffus/AndersWNSML22} discusses that ML models can develop a  `Clever Hans' strategy by  making the prediction of class carton simply using features extracted from the collection of these watermarks.    In addition to Hanzi watermarks,  a domain name or timestamp also often appears in the bottom-right corner of carton images \cite{DBLP:journals/inffus/AndersWNSML22}; the three images in Fig.\ \ref{fig:overview-ch} (i) also have such a timestamp. Although ML models could develop `Clever Hans' strategies from these domain-name and timestamp features, it is unlikely because the features  lie at the location that is discarded by the common center-cropping pre-processing step.

 Using SpRAy \cite{Lapuschkin2019}, \cite{DBLP:journals/inffus/AndersWNSML22} identifies that  the ImageNet-pretrained VGG16 \cite{DBLP:journals/corr/SimonyanZ14a} from the PyTorch model repository, i.e.\ VGG16-TV, has such a Clever Hans strategy, exploiting features from Hanzi watermarks to make prediction for class `carton'.

\subsubsection{Synthetic Task and Poisoning}
\label{sup:showcase1-synthesis}
Our goal is to demonstrate that we can fool VGG16-TV with Hanzi watermarks, making it classify non-carton images as carton. To achieve this, we construct a  synthetic task of (1+$M$)-class classification: class `carton' and $M$ other classes. We choose these $M$ classes to be the classes that the percentage of their validation images containing class `carton' in the top-3 predicted classes is larger than $10\%$. We show these $M$ classes of VGG16-TV in Fig.\ \ref{fig:overview-ch} (ii).

\begin{figure}[t!]
    \centering
    \includegraphics[width=0.9\textwidth]{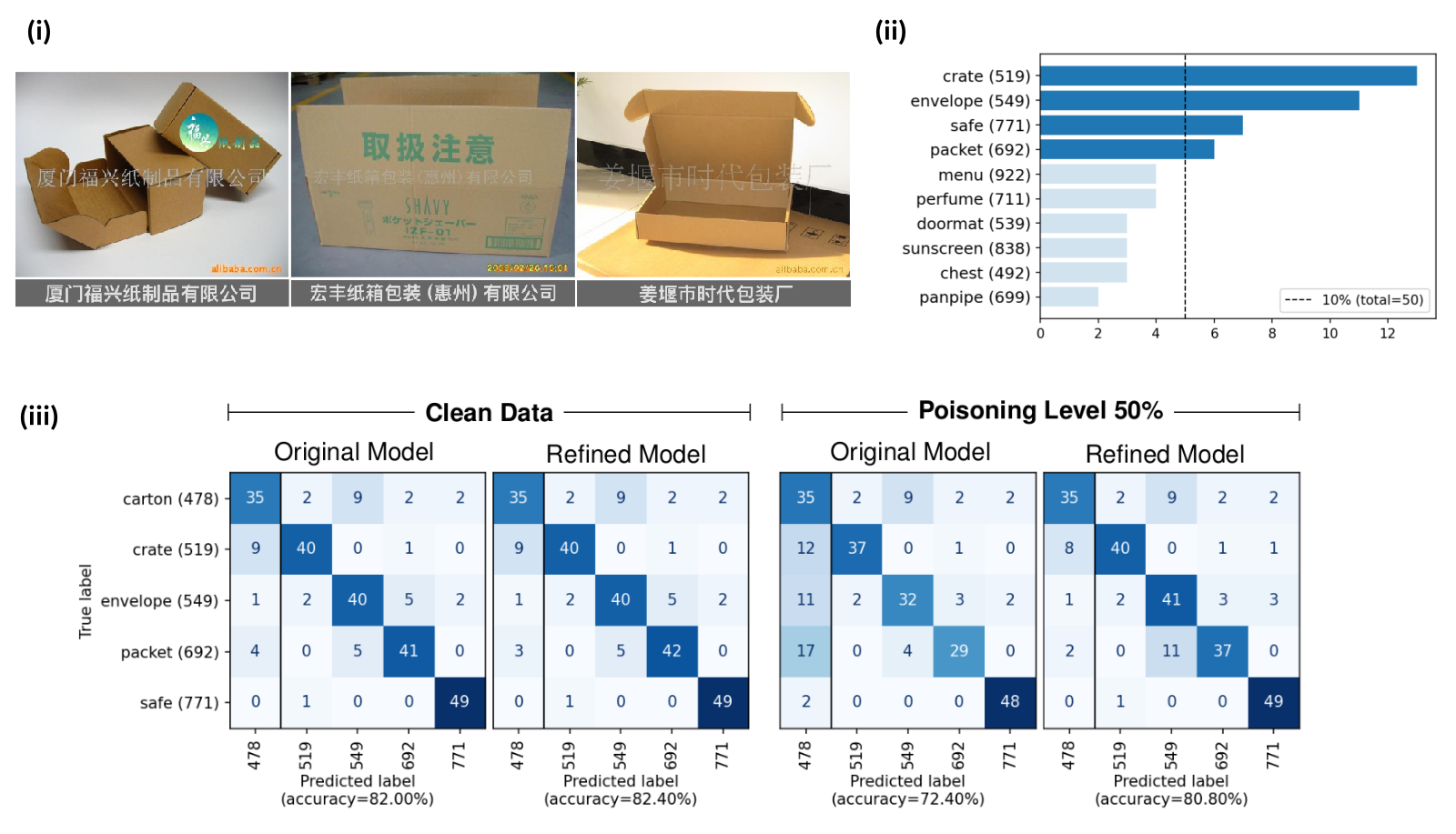}
    \caption{(i) Three training images from Class Carton and their corresponding Hanzi watermarks. (ii) ImageNet classes that VGG16-TV often confuses for `carton'. The horizontal axis is the number of validation images for which VGG16-TV prediction includes `carton' in the top-3. (iii) Confusion matrices from the original and refined VGG16-TV models on clean and 50\%-poisoned data. The refined model weakens the prediction of class `carton' using the excess relevance from the subspace S4 (Eq.\ 12 in the main paper).}
    \label{fig:overview-ch}
\end{figure}

Our poisoning procedure aims to increase the chance that VGG16-TV predicts non-carton images as `carton'. We achieve the goal by overlaying Hanzi watermarks on a number of non-carton images.  Let $\tau$ poisoning rate parameter (a percentage value) and $N$ be the number of validation images in each non-carton class. Our poisoning procedure is as follows:
\begin{enumerate}
    \item We randomly select $\tau\%$ of non-carton validation images;
    \item For each image, we select a random watermark (from the three extracted watermarks shown in Fig.\ \ref{fig:overview-ch} (i)) and overlay the watermark on the image with opacity $0.5$.
\end{enumerate}

\subsubsection{Confusion Matrices from Clean and 50\%-Poisoned Data}

Fig \ref{fig:overview-ch} (iii) shows confusion matrices on the clean and  50\%-poisoned data from the original VGG16-TV and its refined version. The refined model adjusts the prediction of class `carton' with the excess relevance of the subspace S4 (Eq.\ 12 in the main paper), which captures the relevance of the Hanzi watermarks (see \mbox{Fig.\ 7} in the main paper). Here, we observe that the refined model performs as good as the original model on the clean data, while it is substantially more accurate on the 50\%-poisoned data than the original model. More importantly, the performance difference between the two model is larger than what observed in the 25\%-poisoned data (see Fig.\ 8 in the main paper). The results thus assure that removing the excess evidence of S4 mitigates the Hanzi Clever Hans strategy of class `carton' from VGG16-TV.

\subsubsection{Comparison with Spectral Relevance Analysis in Detecting Hanzi Watermark Strategy}
\label{sup:sec-spray}
Spectral Relevance Analysis (SpRAy) \cite{Lapuschkin2019} is an analysis that clusters data points w.r.t. their explanations.
With the clustering structure, the user can then gain more comprehensive understanding on what or how the model makes the prediction of each cluster of data points.
\cite{Lapuschkin2019} shows that SpRAy can effectively uncover unintentional strategies, e.g., leveraging spurious correlation or Clever Hans features, of machine learning models.  The goal of this experiment is to compare the ability of our DRSA approach and the SpRAy baseline in detecting Clever Hans effects.

\textit{Experimental Setup for SpRAy}: We extract the heatmaps of carton training images (the same images that we use to train DRSA). We post-process these heatmaps with the sum-pooling of size $8 \times 8$,  rectification, and  the $\ell_2$ normalization.  To find clusters of these heatmaps, we use the $K$-Mean clustering algorithm with $K=4$ (instead of spectral clustering as in the original work \cite{Lapuschkin2019}), for its robustness and the possibility to easily derive test statistics; we use the implementation of the algorithm from \cite{scikit-learn}. Specifically, to quantify whether SpRAy can detect data points with the Hanzi watermark, we use the distance to each cluster as the test statistics.

\textit{Experimental Setup for DRSA}: As shown in Fig.\ 7 of the main paper, the heatmaps of the subspace S4 from DRSA highlight prominently the Hanzi watermark. We therefore quantify whether our DRSA approach can detect data points with the Hanzi watermark by using the rectified input relevance of S4.

\textit{Results}: We randomly take 100 training images of class carton that are not used in training DRSA and SpRAy. We manually annotate them according to whether they have the Hanzi watermark; in total, there are 42 images having the watermark. The annotation is then the target label for the detection task. We then quantify the detection capability of DRSA and SpRAy using the receiver operating characteristic (ROC) curve. Figure \ref{fig:spray} (top) shows  three carton training images that are closest to  each SpRAy cluster center. In particular, we observe that the prototypes of  Cluster 2 and 4 all contain the Hanzi watermark. Figure \ref{fig:spray} (bottom) shows the ROC curves of DRSA and the four SpRAy cluster centers.  We see that the ROC curve of DRSA is superior to the curves of SpRAy. This comparison suggests that DRSA is better than SpRAy in detecting spurious features. 

\begin{figure*}
    \centering
    \includegraphics[height=0.175\textheight]{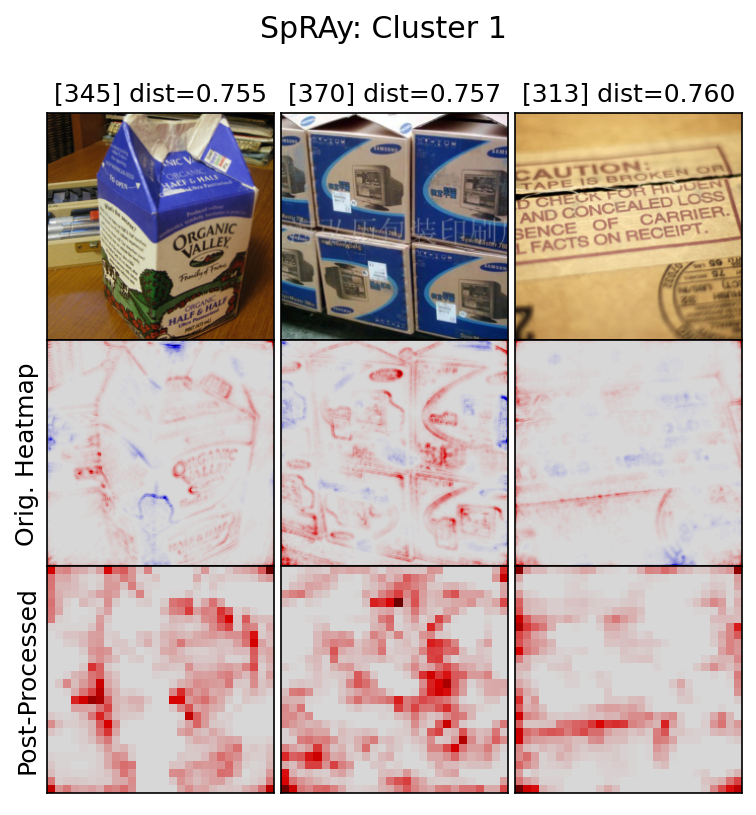}
    \includegraphics[height=0.175\textheight,clip,trim=0.75cm 0 0 0]{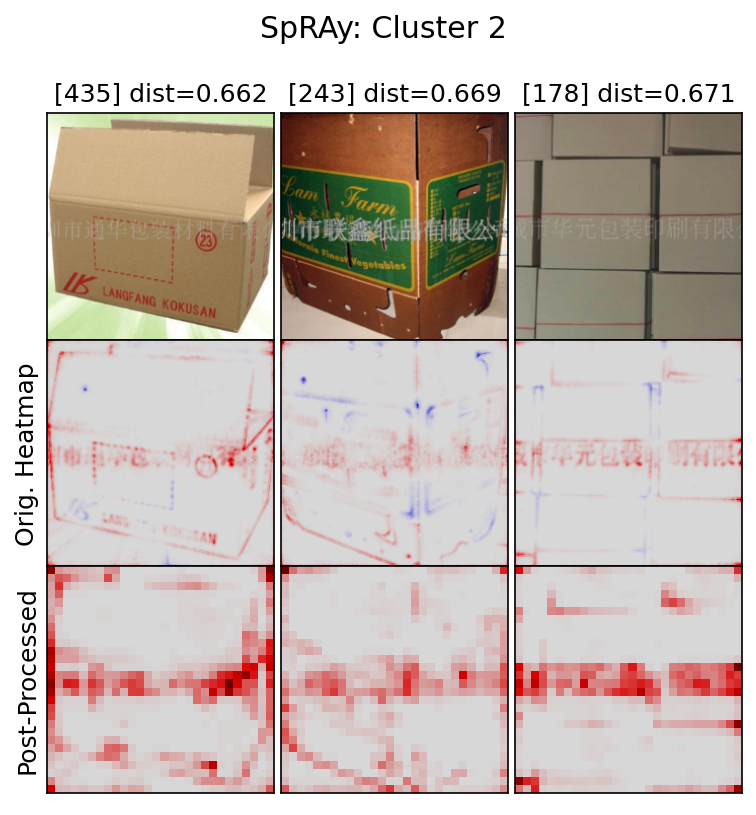}
    \includegraphics[height=0.175\textheight,clip,trim=0.75cm 0 0 0]{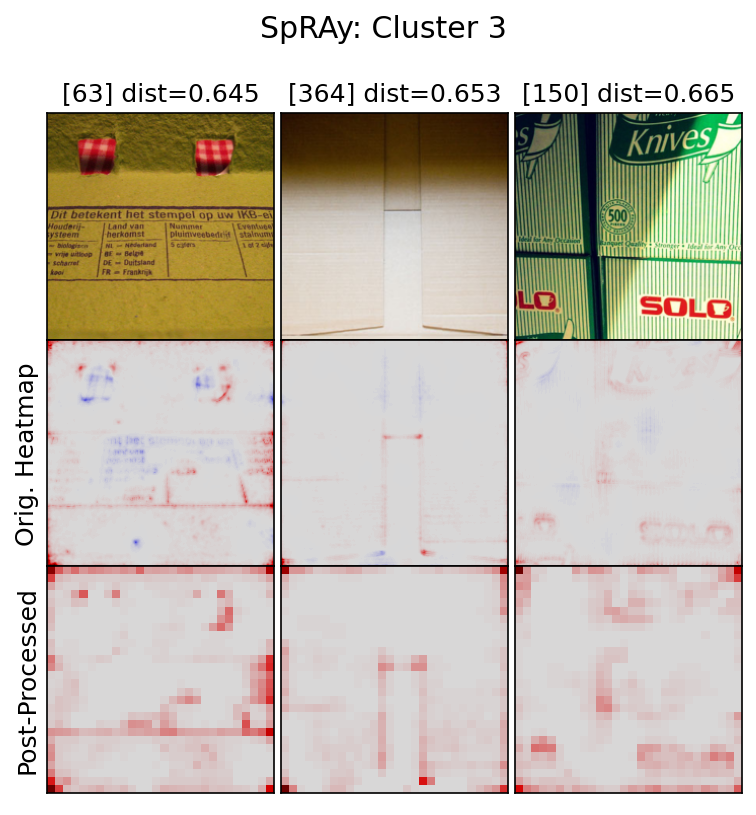}
    \includegraphics[height=0.175\textheight,clip,trim=0.75cm 0 0 0]{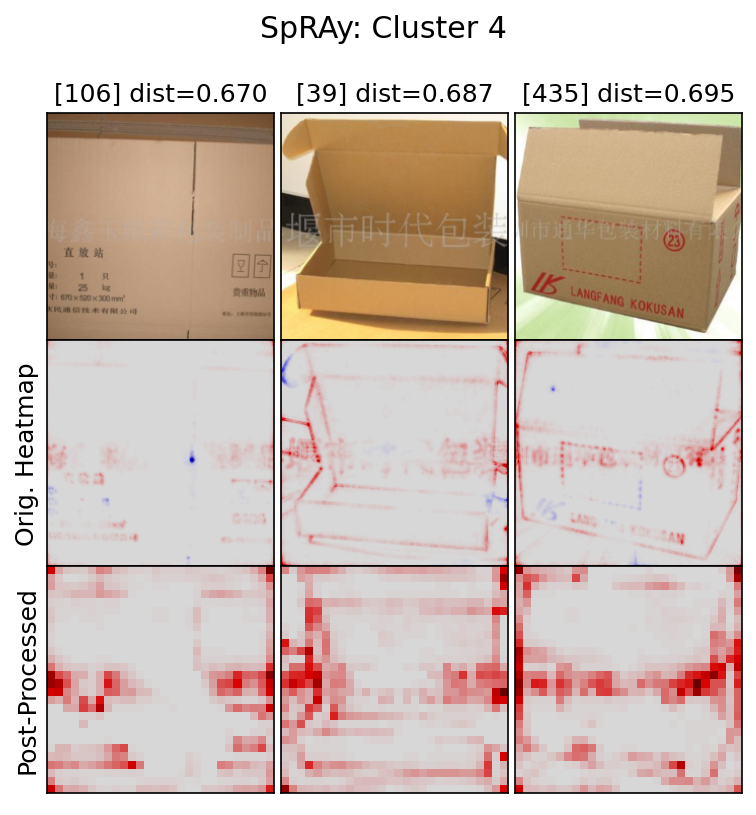}
\medskip

    \centering
    \includegraphics[height=0.175\textheight]{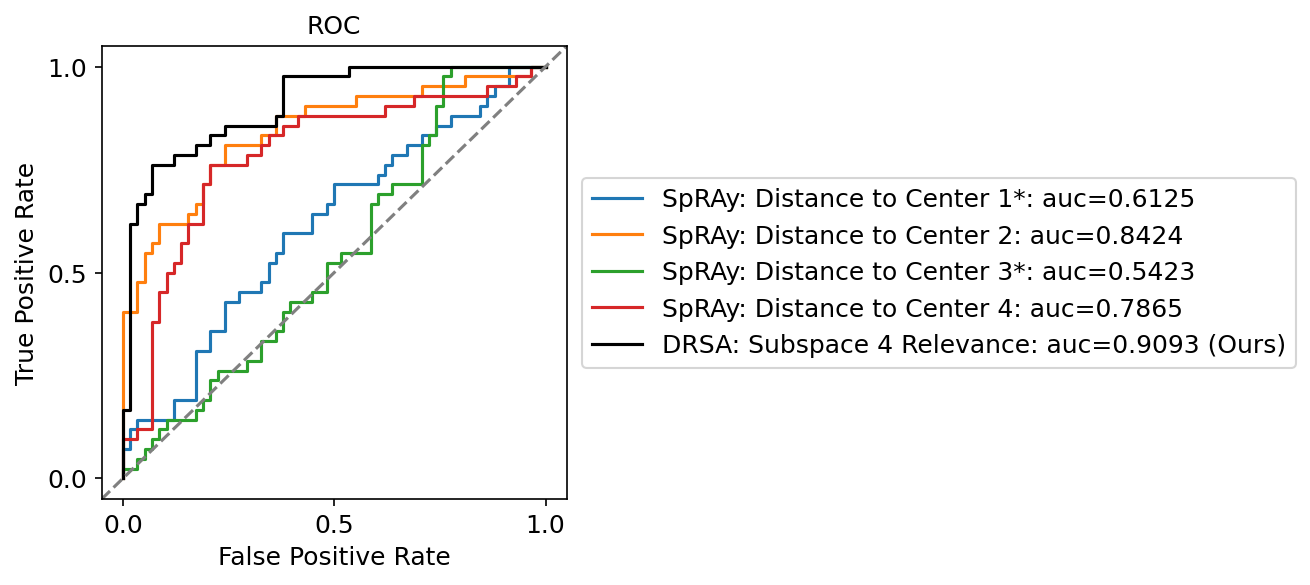}
    \caption{Top: Clusters of carton training images from Spectral Relevance Analysis \cite{Lapuschkin2019} (SpRAy). The second row shows the heatmaps of these training images, while the third row shows the ones after post-processing (cf.\ Section \ref{sup:sec-spray}). Bottom: Receiver Operating Characteristic (ROC) curves of DRSA and SpRAy in detecting the Hanzi watermark from a set of carton images. Asterisk indicates the setups that we use the negative of the distance to the corresponding cluster center.}
    \label{fig:spray}
\end{figure*}

\subsubsection{Comparison with Deep Feature Reweighting in Mitigating Effect of Hanzi Watermark}
Deep Feature Reweighting \cite{DBLP:conf/iclr/KirichenkoIW23} (DFR) is a competitive approach in mitigating the influence of spurious correlation. The approach is not only simple but also achieves state-of-the-art results. The approach consists of two steps: dataset construction and retraining the last layer of a standardly trained model.

For the first step, training data points are manipulated such that the appearance of  current spurious features are equally likely across data points. Then, the last layer of the underlying model is trained with the constructed data.

The goal of this experiment is to compare the effectiveness of DFR and our DRSA approach in mitigating the effect of the Hanzi watermark.

\textit{Experimental Setup for DFR}:  Similar to Section \ref{sup:showcase1-synthesis}, we take $500$ training images of class carton and the other four classes.  We construct a reweighting dataset by overlaying the Hanzi watermark on non-carton training images. We produce four such reweighting datasets with poisoning levels of $\{ 25\%, 50\%, 75\%, 100\%\}$. On each dataset, we train a new last layer of VGG16-TV using \texttt{MLPClassifier} of \cite{scikit-learn} with 100 epochs. We  select the value of the weight decay regularizer from $\{10^{-3}, 10^{-2}, \dots, 10^4 \}$  on another set of 500 training images.

\textit{Results:} We evaluate the effectiveness of DFR and our DRSA approach in mitigating the effect of the Hanzi watermark by measuring accuracy on the $25$\%- and $50$\%-poisoned validation sets (cf.\ Section \ref{sup:showcase1-synthesis}). Figure \ref{fig:showcase1-dfr} shows the accuracies of the DFR and DRSA approaches on these two poisoning levels. From the figure, we observe that the effectiveness of DFR on the poisoned data increases as the  poisoning level of the training data increases. Overall, DRSA and DFR both produce significant accuracy gains compared to the original model. We note that, unlike DFR, our approach requires neither creating a modified training set with artificially added artifacts, nor setting a poisoning rate hyperparameter, thereby making our approach easier to deploy.

\begin{figure}
    \centering
    \includegraphics[width=0.8\textwidth]{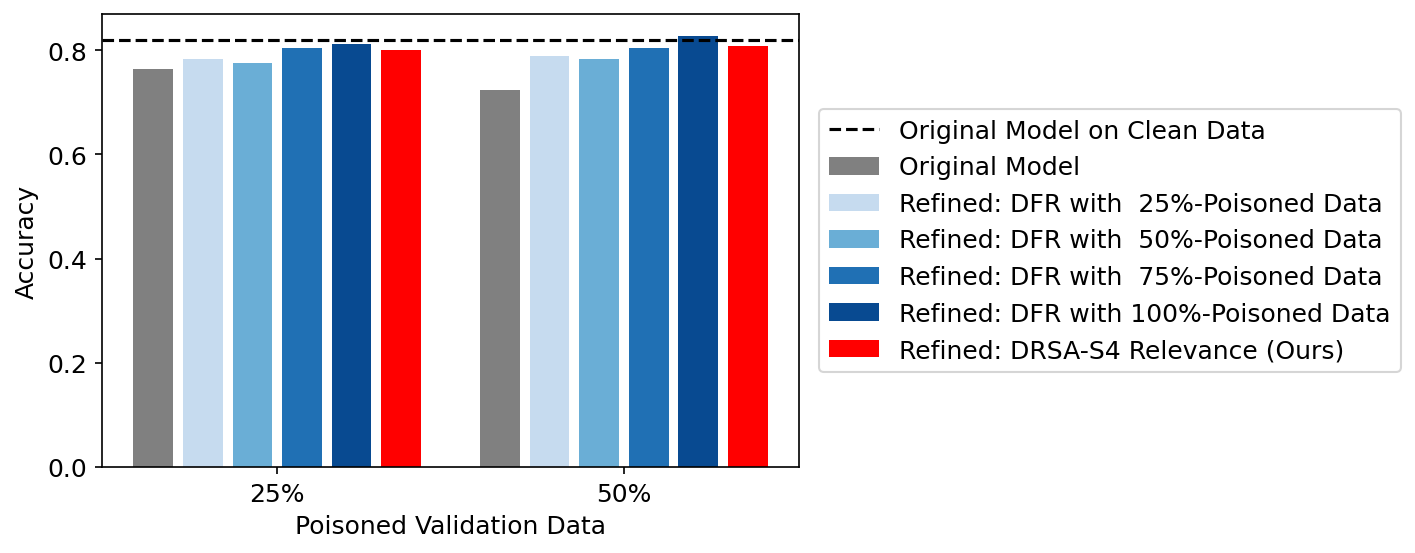}
    \caption{Accuracies of VGG16-TV models that are refined  using  Deep Feature Reweighting (DFR) or our DRSA approach to mitigate the influence of the Hanzi watermark on the prediction of non-carton images.}
    \label{fig:showcase1-dfr}
\end{figure}

\subsection{Showcase 2:  Better Insights via Disentangled Explanations}
\label{section:showcase2}
We provide additional results complementing the discussion in Section 6.2 of the main paper. Fig.\ \ref{fig:showcase-2-subspace-distribution} is a detailed version of Fig.\ 9 in the main paper. It illustrates the complete distributions of relevance scores across different subspaces and six butterfly classes. In the figure, we also highlight a prototypical  example of each  class, except class `lycaenid' that we randomly selected; we note that the ones of class `monach', `admiral', `sulphur', and `ringlet' are part  of Fig.\ 10 in the main paper.  Fig.\ \ref{fig:showcase-2-prototypes}  shows the standard and  DRSA heatmaps of these examples.

\begin{figure}[h!]
    \centering
    \includegraphics[clip,trim=0cm 0cm 0cm 1cm,width=0.8\textwidth]{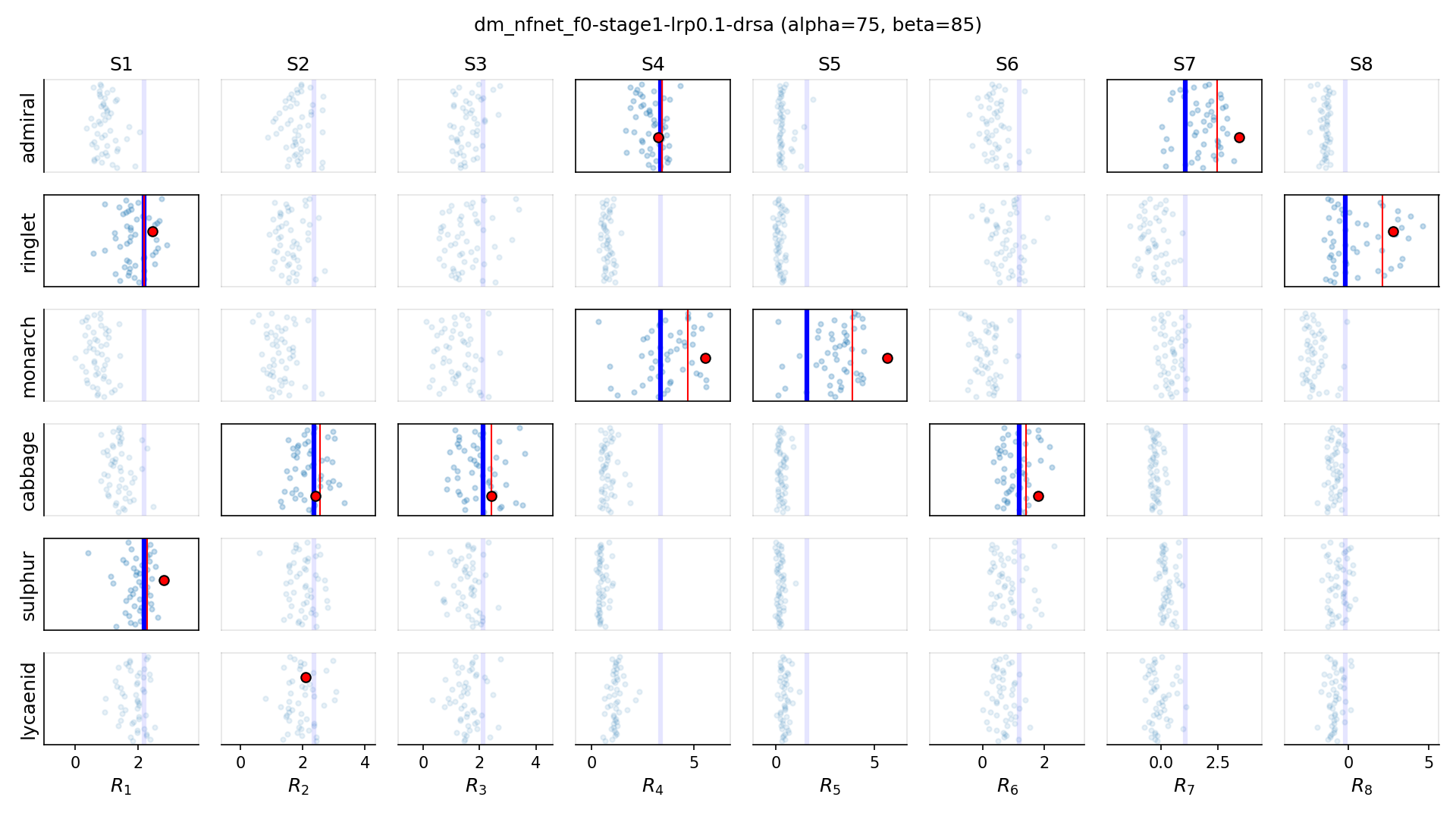}
    \caption{Distribution of relevance scores across six butterfly classes and subspaces extracted  using DRSA with activation and context vectors from NFNet-F0 at Stage 1. The red circles are  prototypical examples of each class, except the `lycaenid' image that we randomly selected.  The thick blue vertical lines represent the $\beta$-quantile  of the six-class distribution from each subspace, while the thin red lines indicate the $\alpha$-quantile of each class-subspace configuration. To ease visualization, we  dim the class-subspace configurations that do not pass the selection criteria (Eq.\ 13 in the main paper). }
    \label{fig:showcase-2-subspace-distribution}
\end{figure}

\begin{figure}[h!]
    \centering
    \includegraphics[clip,trim=7cm 2.5cm 8cm 2cm,width=\textwidth]{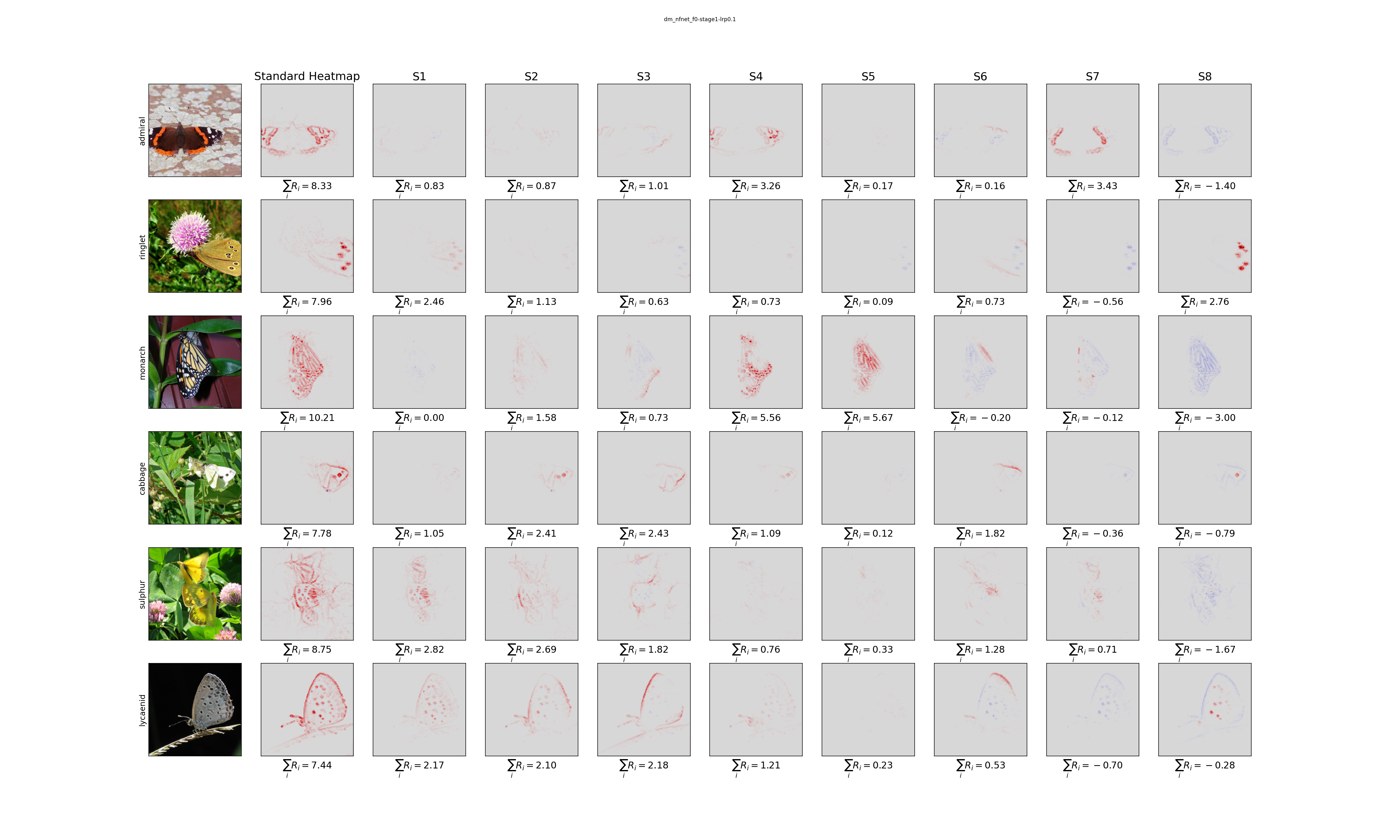}
    \caption{Prototypical butterfly examples with their standard  and DRSA heatmaps from \mbox{NFNet-F0} using \lrpgamma{}.  We extract subspaces  using  activation and context vectors from NFNet-F0 at Stage 1. We generate the heatmaps  w.r.t.\ the target class of each example.}
    \label{fig:showcase-2-prototypes}
\end{figure}

\subsection{Showcase 3: Analyzing Manipulated Explanations using PRCA}
\label{sec:showcase3}
We provide additional details  for Section 6.3 of the main paper. They include 1) the procedure that  perturbs an input and causes changes in its explanation; 2)  experimental setup; and 3) additional qualitative results.

\subsubsection{Finding Perturbation that Causes Arbitrary Changes in Explanations}
We use the optimization procedure proposed by \cite{DBLP:conf/nips/DombrowskiAAAMK19} to find a perturbation that leads to changes in explanation.
Consider an input $\boldsymbol{x} \in \mathbb{R}^P$ and its label $t \in \mathcal C$.
Let $f: \R^{P} \rightarrow \R^{|\mathcal{C}|}$ be a neural network and  $\mathcal E$  be an attribution method that produces an explanation $\mathcal E(f_t(\x), \x) \in \R^{P}$. Reference  \cite{DBLP:conf/nips/DombrowskiAAAMK19} formulates an optimization problem that finds  a perturbed version  $\hat{\x}$ of $\x$ such that
\begin{enumerate}
    \item the two inputs  $\x$ and $\hat{\x}$ are visually indistinguishable;
    \item  the model $f$ behaves approximately the same on these inputs, i.e.\ $\text{softmax}(f(\boldsymbol{x})) \approx \text{softmax}(f(\hat{\boldsymbol{x}}))$; 
    \item  but, the original $\mathcal E(f_t(\x), \x)$ and manipulated $\mathcal E(f_t(\hat{\x}), \hat{\x})$  explanations are substantially different. In practice, the difference is induced by making $\mathcal E(f_t(\hat{\x}), \hat{\x})$ similar to some  target explanation $\mathcal E_\text{target}$.
\end{enumerate}
More precisely, the objective of this constrained optimization problem is 
\begin{align}
\hat{\x} \leftarrow \argmin_{\x'} \| \mathcal E(f_t(\x'), \x') - \mathcal E_\text{target} \|^2 + \lambda \|\text{softmax}(f(\x')) - \text{softmax}(f(\x))  \|^2,
\label{eq:finding-adv-sample-objective}    
\end{align}
where $\lambda \in \R_+$ is a hyperparameter and where we choose $\mathcal E_\text{target}$ to be the explanation of a random input does not belong to class $t$. In addition, we rescale the target heatmap $\mathcal E_\text{target}$ with $\sum_p [\mathcal E(f_t(\x), \bx)]_p / \sum_p (\mathcal E_\text{target})_p$. The ratio constrains  the total relevance scores of the manipulated and original explanations are approximately the same, i.e.\ $ \sum_p [\mathcal E(f_t(\hat{\x}), \hat{\x})] \approx \sum_p [\mathcal E(f_t(\x), \x)]_p$.  

\subsubsection{Experimental Setup}

We focus on manipulating \lrpgamma{} heatmaps derived from VGG16-TV. More specifically, We perform the manipulation on the  50 validation images of class `tibetan terrier' from the ImageNet dataset \cite{imagenet_cvpr09}. We choose target heatmaps to be the heatmaps of  50 random images from other classes in the dataset; the heatmaps are produced w.r.t. the class of each random image. We extend the code provided by \cite{DBLP:conf/nips/DombrowskiAAAMK19} to support 1) \lrpgamma{} with the same heuristics we use for VGG16-TV\footnote{Reference \cite{dombrowski2022towards} uses LRP with the z$^+$ rule \cite{DBLP:journals/pr/MontavonLBSM17} for intermediate layers in their experiments.} and 2) the relevance preservation constraint. Similar to the original work, we use the same gradient-based iterative procedure and parameters to perform the optimization. We refer to our extended code repository\footnote{\url{https://github.com/p16i/explanations_can_be_manipulated}} for the details. We perform PRCA using the activation and context vectors from 500 training images of  class `tibetan terrier', and we extract these vectors at Conv4\_3.

\subsubsection{Additional Results}

Fig.\ \ref{fig:showcase3-correlation-original-manipulated-heatmaps} depicts that the total relevance scores between original and manipulated heatmaps are highly correlated. It indicates that the two heatmaps are approximately in the same scale.  Fig.\ \ref{fig:showcase3-additional-manipulated-samples} shows a number of selected images and their manipulated heatmaps. We see that these manipulated heatmaps are different from the original heatmaps but similar to the target ones. The results confirm that the optimization procedure proposed by \cite{DBLP:conf/nips/DombrowskiAAAMK19} also works under  the  relevance preservation constraint we impose. 

\begin{figure}[h!]
    \begin{center}
    \begin{minipage}{.4\textwidth}
    \includegraphics[width=\textwidth,clip,trim=0 0.3cm 0 1cm]{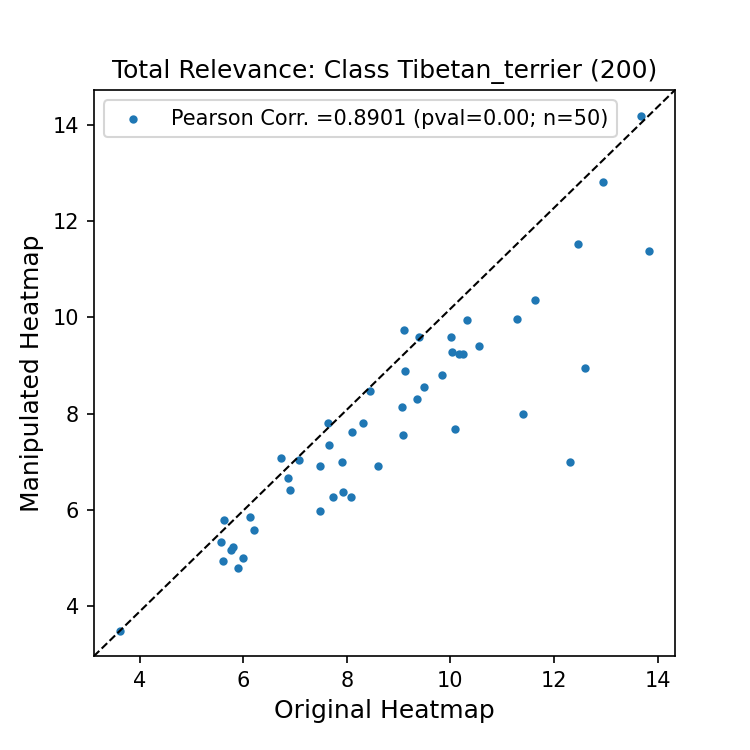}
    \end{minipage} 
    \begin{minipage}{.4\textwidth}
    \caption{Pearson correlation between the total \mbox{relevance} scores of original and manipulated heatmaps.}
    \label{fig:showcase3-correlation-original-manipulated-heatmaps}
    \end{minipage}
        
    \end{center}
    
\end{figure}

Fig.\ \ref{fig:showcase3-varying-prca-components} shows the decomposition  of manipulated heatmaps onto  PRCA with different subspace sizes and the correspondence residue (from the orthogonal complement of each  subspace). From the figure, we observe that the PRCA heatmaps generally preserve the main characteristics of the original heatmaps, while the positive part of the heatmap residues tends to contain the structure of the target heatmaps. More importantly, we also observe that incorporating more PRCA components makes the PRCA heatmaps become similar to the target heatmaps. The evidence suggests  that the first PRCA component is  the least affected by explanation manipulation.

\begin{figure}[h!]
    \centering
    \includegraphics[clip,trim=2.5cm 8cm 2.5cm 2cm,width=0.49\textwidth]{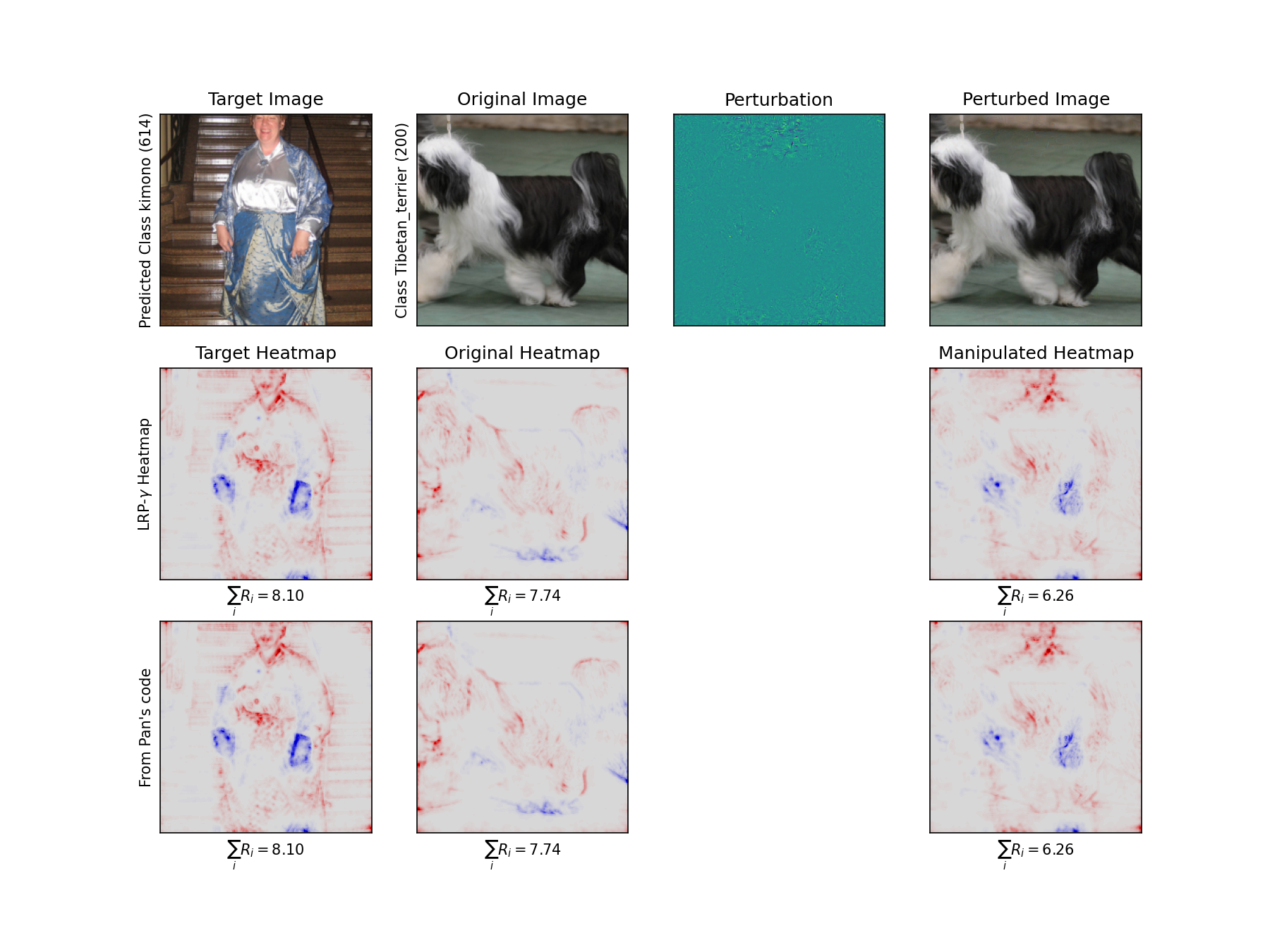} \hfill
    \includegraphics[clip,trim=2.5cm 8cm 2.5cm 2cm,width=0.49\textwidth]{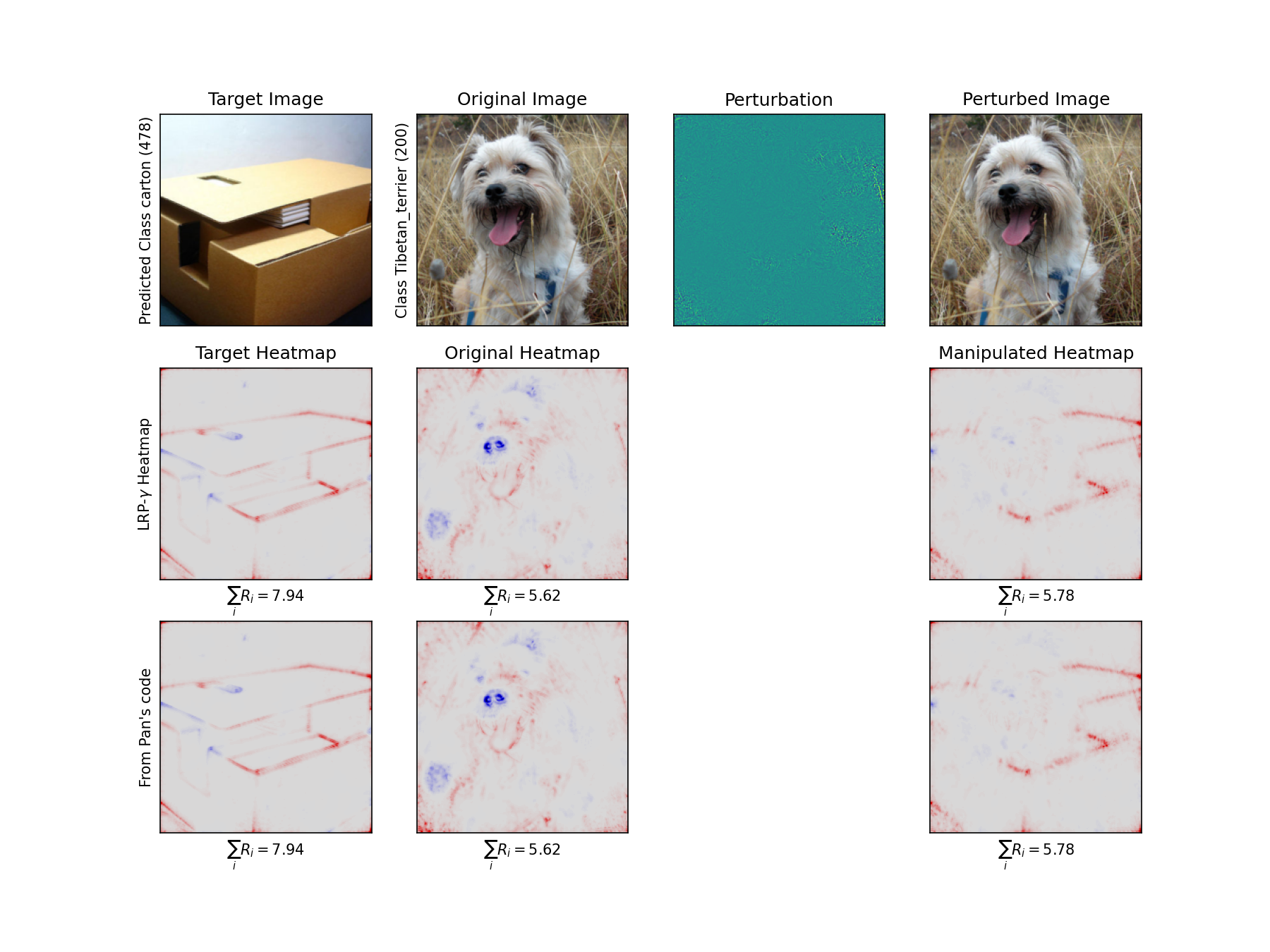}
    \caption{First row of each subplot: a random image used to produce a target heatmap, original image, perturbation noise, and perturbed image.  Second row of each subplot: target heatmap, heatmap of the original image, and heatmap of the perturbed image.}
    \label{fig:showcase3-additional-manipulated-samples}
\end{figure}

\begin{figure}[h!]
    \centering
    \includegraphics[clip,trim=5.5cm 1cm 4cm 1cm,width=.8\textwidth]{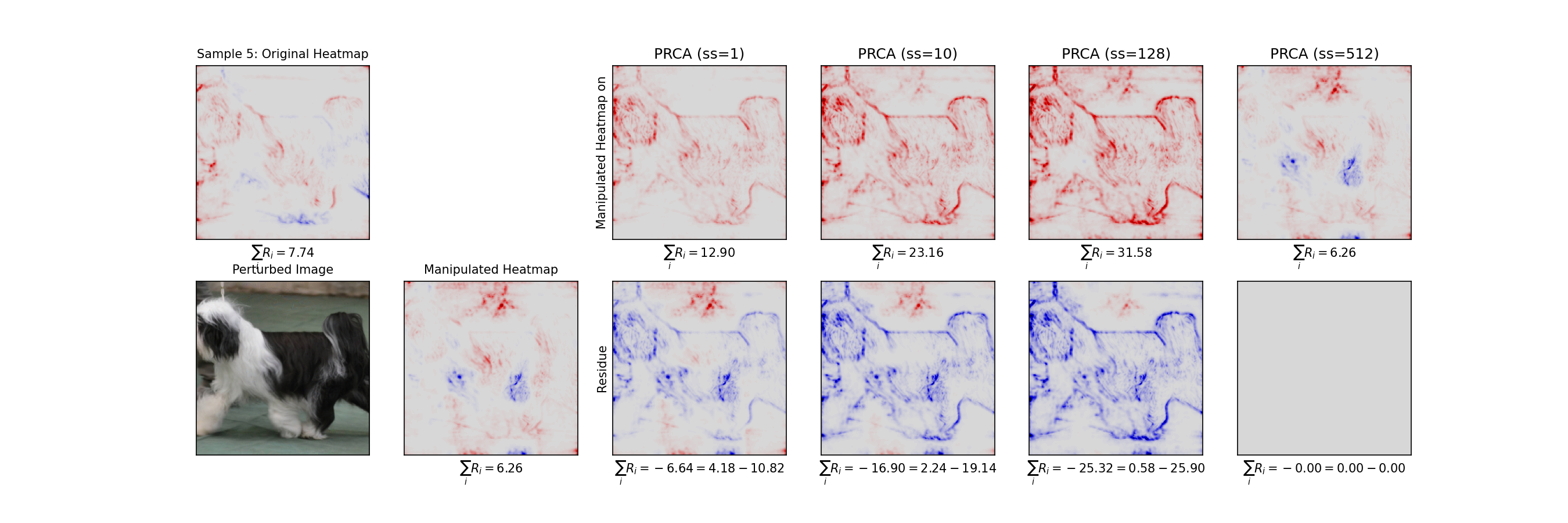}
    \includegraphics[clip,trim=5.5cm 1cm 4cm 1cm,width=.8\textwidth]{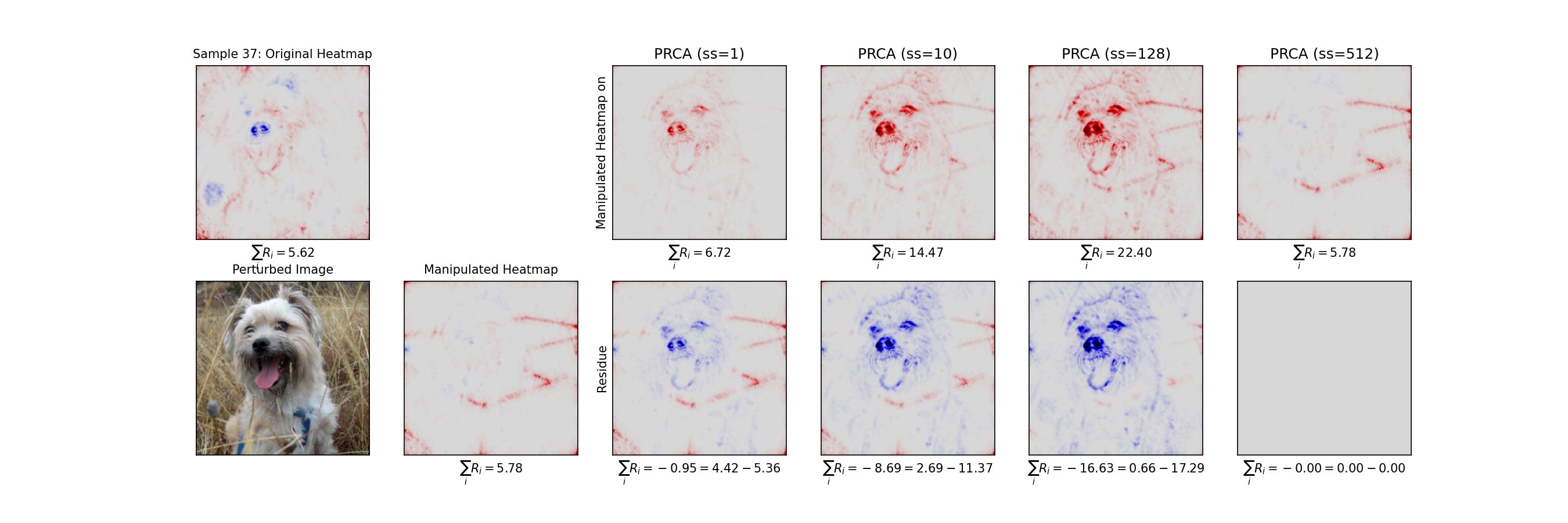}
    \caption{Decomposition of manipulated heatmaps using PRCA with different subspace sizes (`ss').}
    \label{fig:showcase3-varying-prca-components}
\end{figure}
\section{LRP Implementation for NFNets}
\label{sec:implementing-lrp-nfnet}
LRP is a propagation-based attribution method, and it comes with a number of propagation rules. One employs these rules to successively propagate a relevance quantity (e.g.\ logit value of a target class) from the last to the input layer. To use the method, one needs to choose an appropriate LRP propagation rule for each layer of the architecture.

Software packages like Innvestigate \cite{JMLR:v20:18-540}, Zennit \cite{anders2021software}, and Captum \cite{kokhlikyan2020captum} provide ready-to-use LRP implementation for common architectures. To the best of our knowledge, these packages have not yet implemented LRP for the recent state-of-the-art Normalizer-Free Network (NFNets) architecture \cite{pmlr-v139-brock21a}. Therefore, we close this technical gap by contributing a PyTorch \cite{Paszke_PyTorch_An_Imperative_2019} implementation of LRP for NFNets.

In the following, we first describe the NFNet architecture (Supplementary Note \ref{sec:nfnet-desc}).
Secondly, we categorize layer patterns in the  NFNet architecture into a number of cases (Supplementary Note \ref{sec:nfnetlrp-cases}); for each case, we define an appropriate LRP rule and outline its implementation. Finally, we discuss a technical step that improves the numerical stability of the  implementation (Supplementary Note \ref{sec:improve-numerical-stability-lrp}).  We refer to Fig.\ \ref{fig:lrp-final-heatmaps}  for example explanations from our LRP-NFNets  implementation.

\subsection{Normalizer-Free Networks (NFNets)}
\label{sec:nfnet-desc}
A number of state-of-the-art neural networks for image classification relies on BatchNorm \cite{DBLP:conf/icml/IoffeS15}. It is an important component that makes the training of these networks stable \cite{DBLP:conf/nips/SanturkarTIM18}. However, BatchNorm has a number of undesirable properties, e.g.\ breaking the independence assumption in the maximum likelihood principle \cite{pmlr-v139-brock21a} and introducing additional memory overhead \cite{DBLP:conf/cvpr/BuloPK18}.

As a result, \cite{pmlr-v139-brock21a} proposes the Normalizer-Free Networks (NFNets), whose design is based on the ResNet architecture \cite{DBLP:conf/cvpr/HeZRS16}. The unique aspect of the NFNets is the absence of BatchNorm. To this end, \cite{pmlr-v139-brock21a} proposes and employs a number of technical tricks to control the scale of the  activations and gradients, which used to be the role of BatchNorm. With these tricks, \cite{pmlr-v139-brock21a} demonstrates that,  for every training latency, NFNets have higher predictive performance than several state-of-the-art architectures (cf.\ Fig.\ 1 in \cite{pmlr-v139-brock21a}). 

We use the ImageNet-pretrained NFNets from the PyTorch Image Models package \cite{rw2019timm} (version `0.4.9'). These models use a customized activation function $\rho(x) = \tau \cdot \texttt{GELU}(x)$ where GELU is the Gaussian Linear Unit (GELU) \cite{DBLP:journals/corr/HendrycksG16} and $\tau$ is a positive constant.
We summarize the core components of NFNets in Fig.\ \ref{fig:nfnet}.

\begin{figure}[h]
    \centering
    \includegraphics[width=\textwidth]{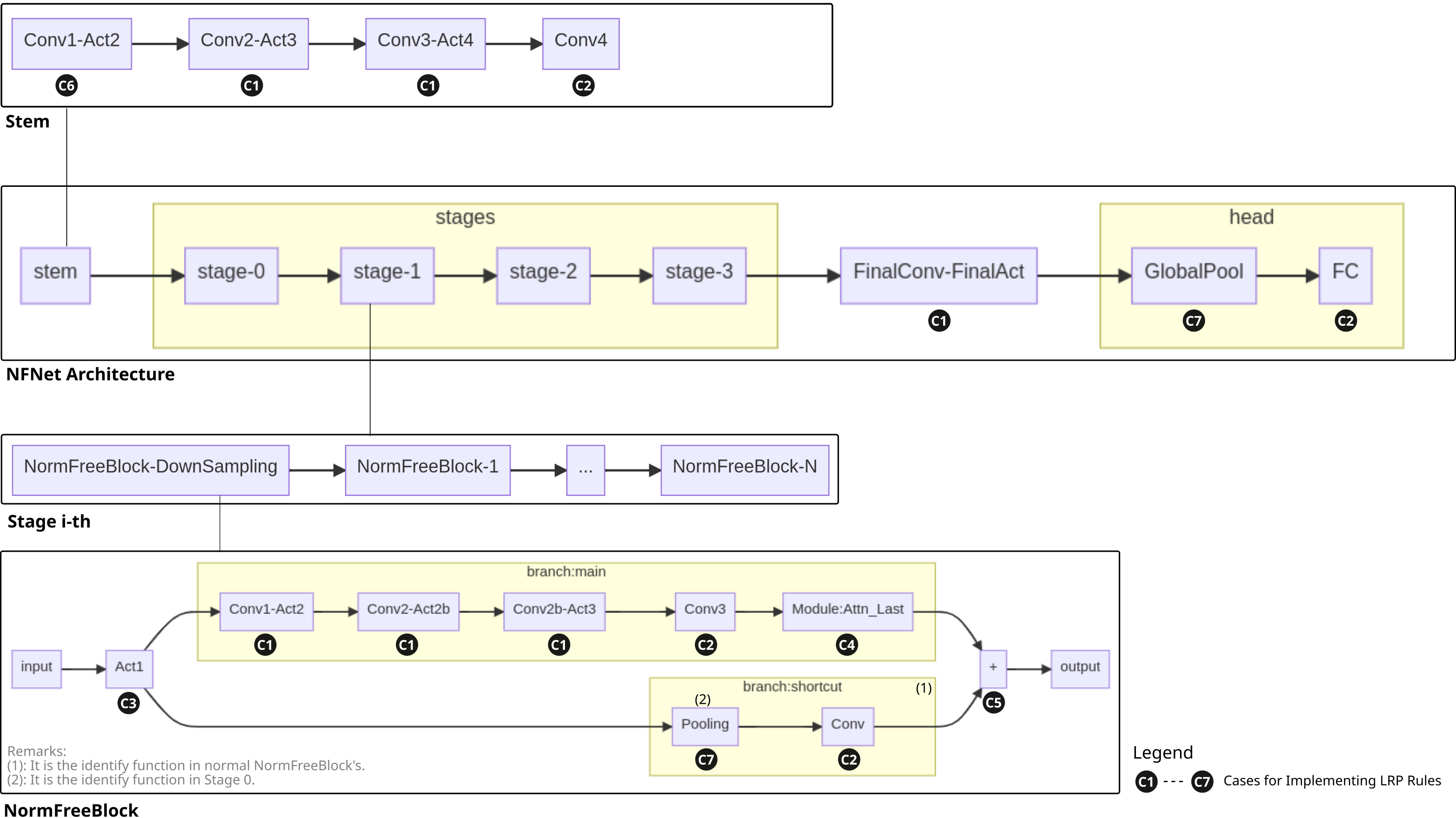}
    \caption{Components in Normalizer-Free Networks (NFNets \cite{pmlr-v139-brock21a}). The illustrated components  are extracted from the implementation of NFNets in the PyTorch Image Models \cite{rw2019timm} package (version `0.4.9'). We refer to Table \ref{tab:summary-lrp-nfnet-cases} for the summary of the depicted LRP implementation cases.}
    \label{fig:nfnet}
\end{figure}

\subsection{LRP Rules for NFNets}
\label{sec:nfnetlrp-cases}

\begin{table}[t!]
    \centering
    \caption{Summary of forward hook cases required to implement  LRP for NFNets}
    \label{tab:summary-lrp-nfnet-cases}
    \begin{tabular}{c|l|l}
    \toprule
    Supplementary Note & Pattern of Layers & Description  \\
    \midrule
    \ref{sec:lrpnfnet-case1} & Pairs of convolution and activation layers & Common layer pattern in NFNets  \\
    \ref{sec:lrpnfnet-case2} &  Orphan convolution layer or  last fully-connected layer &  \\ 
    \ref{sec:lrpnfnet-case3} &  Orphan activation layers  \\
    \ref{sec:lrpnfnet-case4} & Dimension-wise attention layers &  Squeeze-and-Excitation Block \cite{DBLP:journals/pami/HuSASW20} \\ 
    \ref{sec:lrpnfnet-case5} & Shortcut connections & Inline operator of the form $a + b$ \\ 
    \ref{sec:lrpnfnet-case6} & First pair of convolution and activation layers &\\ 
    \ref{sec:lrpnfnet-case7} & Pooling layers &  \\ 
    \bottomrule
    \end{tabular}
\end{table}

We  recall that the process of LRP can be seen as computing modified gradients of the output w.r.t. input features \cite{ancona17}. Leveraging the fact, one can  utilize the forward hook functionality of PyTorch \cite{Paszke_PyTorch_An_Imperative_2019}  to implement LRP rules \cite{samek-tnnls17,DBLP:series/lncs/MontavonBLSM19}. We refer to Algorithm \ref{algo:forwardpass-for-lrp} for a generic implementation of such forward hooks.

To this end, we analyze the NFNet architecture that is implemented in the PyTorch Image Models package \cite{rw2019timm} (version `0.4.9'). Our analysis shows that we require to construct seven forward hook cases to implement LRP for NFNets. Because this version of NFNets uses a modified  GELU activation function, these cases are some variants of the generalized \lrpgamma{} rule (Eq. \ref{eq:generalized-lrp}). We summarize these cases  in  \mbox{Table  \ref{tab:summary-lrp-nfnet-cases}}.

We  first introduce some notations: 
\begin{itemize}
    \item the vector $(a_j)_{j} \in \R^{d}$  is a $d$-dimensional activation vector;
    \item the vector $(z_k)_k \in \R^{d'}$  is a $d'$-dimensional pre-activated vector, i.e.\ $a_k = \rho(z_k)$;
    \item the matrix $(w_{jk})_{jk} \in \R^{d\times d'}$ is the weight matrix of a convolution or fully-connected layer;
    \item the variable $R(a_j), R(a_k) \in \R$ are the relevance received by the neurons $a_j$ and $a_k$ respectively;
    \item the functions $(\cdot)^+ = \max(0, \cdot)$ and  $(\cdot)^- = \min(0, \cdot)$.
\end{itemize}
  We also write layers in NFNets with gray-boxed text: for example, \torchmodule{Conv} is a convolution layer, while \torchmodule{Act} is a $\rho$-activation layer. We depict `FH-\torchmodule{Conv}' to be the forward hook of \torchmodule{Conv}. We use $(\cdot)^\dag$ to denote the quantity $(\cdot)$ is overridden in a forward hook and \texttt{[$\cdot$].detach()} to denote the `detach' functionality in PyTorch \cite{Paszke_PyTorch_An_Imperative_2019} that detaches the variable \texttt{[$\cdot$]} from the underlying computational graph. In the following, we assume that
\begin{assumption}
    \label{assumption:r-zk-the-same-r-ak}
    Let $\rho: \R \rightarrow \R$ be a sign-preserving activation function taking $z \mapsto \rho(z) = a$. The input $z$ and output $z$ have the same relevance, i.e.
    $R(z) = R(a).$
\end{assumption}

\begin{algorithm}[t]
\DontPrintSemicolon
  \caption{PyTorch Forward Hook for LRP \cite{DBLP:journals/pieee/SamekMLAM21}}
  \label{algo:forwardpass-for-lrp}
  \KwData{\texttt{module: torch.nn.Modulue}, \texttt{input: torch.Tensor}, \texttt{output: torch.Tensor}, \texttt{gamma: float}}

  \texttt{\# $\varphi(...)$ implements  the LRP rule of each case (see  Table \ref{tab:summary-lrp-nfnet-cases}).} \\
  \texttt{z} $\gets$ \texttt{$\varphi$(input, module, gamma)}
  
  \texttt{overridden\_output} $\gets$ \texttt{z * (output / z)}$\detach$

    \texttt{assert\_equal(overridden\_output, output, "sanity check")}

    \KwRet \texttt{overridden\_output}
\end{algorithm}

\subsubsection{Case : Pairs of Convolution and Activation Layers}
\label{sec:lrpnfnet-case1}

A pair of convolution and activation layers  is a common layer pattern in the NFNet architecture. The  computation of the two layers is
\begin{align*}
    z_k &= \sum_{j=1}^d w_{jk} a_j,  \tag*{\torchmodule{Conv}} \\
    a_k &= \rho(z_k)        \tag*{\torchmodule{Act}}.
\end{align*}
Using the generalized \lrpgamma{} and Assumption \ref{assumption:r-zk-the-same-r-ak}, the relevance of $a_j$ is  
\begin{align}
    R(a_j) &= \sum_k \frac{a_j^+(w_{jk} + \gamma w_{jk}^+) + a_j^- (w_{jk} + \gamma w_{jk}^-)}{\sum_j a_j^+(w_{jk} + \gamma w_{jk}^+) + a_j^- (w_{jk} + \gamma w_{jk}^-)}\cdot  1_{[a_k \ge 0]} \cdot  R(a_k) \nonumber \\
&\ \ \ \ \ \ + \sum_k \frac{a_j^+(w_{jk} + \gamma w_{jk}^-) + a_j^- (w_{jk} + \gamma w_{jk}^+)}{\sum_j a_j^+(w_{jk} + \gamma w_{jk}^-) + a_j^- (w_{jk} + \gamma w_{jk}^+)}\cdot   1_{[a_k < 0]} \cdot   R(a_k).
\label{eq:case-1}
\end{align}
Using forward hooks, one can achieve the  rule above by overriding the output of the two layers to to be 
    \begin{align*}
        z_k^\dagger &\gets \bigg [\sum_j\varphi^+_{\gamma, jk} \bigg] \cdot \bigg[ \frac{\rho(z_k)^+}{\sum_j\varphi^+_{\gamma, jk}} \bigg ]_\detach  + \bigg [\sum_j\varphi^-_{\gamma, jk} \bigg] \cdot \bigg[ \frac{\rho(z_k)^-}{\sum_j\varphi^-_{\gamma, jk}} \bigg ]_\detach \tag*{FH-\torchmodule{Conv}}\\
        a_k^\dagger & \gets  z_k^\dagger, \tag*{FH-\torchmodule{Act}}
    \end{align*}    
where $\varphi^+_{\gamma, jk} = a_j^+(w_{jk} + \gamma w_{jk}^+) + a_j^- (w_{jk} + \gamma w_{jk}^-)$ and $\varphi^-_{\gamma, jk}= a_j^+(w_{jk} + \gamma w_{jk}^-) + a_j^- (w_{jk} + \gamma w_{jk}^+)$.

\subsubsection{Case : Orphan Convolution Layers or Last Fully-Connected Layer}
\label{sec:lrpnfnet-case2}
Orphan convolution layers are  convolution layers that have no activation layer followed.  Commonly, these layers are   
\torchmodule{Conv3}  of  \torchmodule{NormFreeBlock}. The NormFreeBlock layers  are  in every stage of the NFNets (see Fig.\ \ref{fig:nfnet}). Computationally,  the last fully-connected layer \torchmodule{FC} (also known as the logit layer) is equivalent to those orphan convolution layers. The computation of these layers is
\begin{align*}
    z_k = \sum_{j=1}^d a_jw_{jk} \tag*{\torchmodule{Conv3} or \torchmodule{FC}}.
\end{align*}
Therefore, the relevance of $a_j$ and its forward hook are similar to what is described in Supplementary Note \ref{sec:lrpnfnet-case1}, except that we substitute $a_k$ in Eq.\ \eqref{eq:case-1} with $z_k$.

\subsubsection{Case : Orphan Activation Layers}
\label{sec:lrpnfnet-case3}
These orphan activation layers are \torchmodule{Act1} of every \torchmodule{NormFreeBlock}. The computation of these layers is  
\begin{align*}
a_k  = \rho(z_k). \tag*{\torchmodule{Act1}}
\end{align*}
With Assumption \ref{assumption:r-zk-the-same-r-ak}, we have $R(z_k) = R(a_k)$. Therefore, we override the output of these orphan activation layers to   be 
    \begin{align*}
        a_k^\dagger \leftarrow z_k \cdot \bigg[ \frac{a_k}{z_k} \bigg]_\texttt{.detach()}. \tag*{FH-\torchmodule{Act1}}
    \end{align*}

\subsubsection{Case : Attention Layer (also known as Squeeze-And-Excitation Block)}
\label{sec:lrpnfnet-case4}

The attention layer \torchmodule{AttnLast} is the second layer to last  of  \torchmodule{NormFreeBlock}. The layer is the implementation of the Squeeze-and-Excite layer \cite{DBLP:journals/pami/HuSASW20}, which performs dimension-wise modulation to the output. More specifically, the layer has a  function  $\mathcal A: \R^{d'} \rightarrow [0, 1]^{d'}$ and output
\begin{align*}
    a_k =  z_k \cdot \xi_k \tag*{\torchmodule{AttnLast}},
\end{align*}
where $ \xi_k = \mathcal A(\bz)_k$. This dimension-wise multiplicative structure is similar to the gating mechanism in LSTMs \cite{DBLP:journals/neco/HochreiterS97} and GRUs \cite{DBLP:conf/emnlp/ChoMGBBSB14}. With this structure,   \cite{arras-etal-2017-explaining,thomas-fron19} argue that  the modulation of  the function $\mathcal A$ already influences the relevance $R(z_k)$ in the forward pass and propose to directly compute the relevance $R(z_k)$   from $a_k$, without considering the  dependency to $a_k$ via the variable $\xi_k$.  We therefore have  
\begin{align}
    R(z_k) &= \frac{a_k}{z_k} \cdot R(a_k).
\end{align}
We can implement the rule by

\begin{align*}
    a_k^\dagger \leftarrow  z_k \cdot \bigg [ \frac{ a_k}{z_k} \bigg]_\texttt{.detach()}. \tag*{FH-\torchmodule{AttnLast}}
\end{align*}

\subsubsection{Case : Shortcut Connections}
\label{sec:lrpnfnet-case5}

The shortcut connection \torchmodule{Shortcut} is the last step of \torchmodule{NormFreeBlock}. The step combines  together inputs from two computational paths: the residual and shortcut paths. Let  $(z_k)_{k=1}^{d'}$ and $(s_k)_{k=1}^d$ be the input of these two paths respectively. The  input $(z_k)_k$ is from \torchmodule{AttnLast}, while the input $(s_k)_k$ is based on  the input of \torchmodule{NormFreeBlock}.  The computation of the step is
\begin{align*}
a_k = z_k   + s_k  \tag*{\torchmodule{Shortcut}}.
 \end{align*}
 We can interpret this step as a linear layer with $w_z = w_s = 1$, and the generalized LRP-$\gamma$ rule reduces to
\
 \begin{align}
        R(z_k) &= \bigg [ \frac{z_k^+ (1+\gamma) + z_k^-}{(z_k^+ +s_k^+) (1+\gamma) + (z_k^- +s_k^-)} \bigg ] \cdot 1_{[a_k \ge 0]}\cdot R(a_k) \nonumber \\ & \ \ \  + \bigg [ \frac{z_k^+ +z_k^- (1+\gamma)}{(z_k^+ + s_k^+) +(z_k^-  + s_k^-)(1+\gamma)} \bigg ] \cdot 1_{[a_k < 0]} \cdot R (a_k), \\\nonumber\\
        R(s_k) &= \bigg [ \frac{s_k^+ (1+\gamma) + s_k^-}{(z_k^+ +s_k^+) (1+\gamma) + (z_k^- +s_k^-)} \bigg ] \cdot 1_{[a_k \ge 0]}\cdot R(a_k) \nonumber \\ & \ \ \  + \bigg [ \frac{s_k^+ +s_k^- (1+\gamma)}{(z_k^+ + s_k^+) +(z_k^-  + s_k^-)(1+\gamma)} \bigg ] \cdot 1_{[a_k < 0]} \cdot R(a_k). 
    \end{align}
We can implement the rule above by 
\begin{align*}
        a_k^\dagger \leftarrow (\varphi^+_{\gamma, k} + \varphi^-_{0, k}) \cdot \bigg [\frac{a_k^+ }{\varphi^+_{\gamma, k} + \varphi^-_{0, k}}\bigg]_\texttt{.detach()} + (\varphi^+_{0, k} + \varphi^-_{\gamma, k}) \cdot \bigg [\frac{a_k^- }{\varphi^+_{0, k} + \varphi^-_{\gamma, k}}\bigg]_\texttt{.detach()}, \tag*{FH-\torchmodule{Shortcut}}
\end{align*}
where $\varphi_{\gamma,k}^+ = (z_k^+ + s_k^+)(1+\gamma)$ and $\varphi_{\gamma,k}^- = (z_k^- + s_k^-) ( 1 + \gamma)$. In practice, the shortcut connection is implemented as an in-place operation, which one can not attach any forward hook. To overcome the issue, one needs to replace this inplace operator with a PyTorch module that takes two inputs $(z_k)$ and $(s_k)$ and performs the addition.

\subsubsection{Case : First Convolution and Activation Layers}
\label{sec:lrpnfnet-case6}
These two layers are in the \torchmodule{Stem} block of NFNets. Denote \torchmodule{Stem.Conv1} and \torchmodule{Stem.Act2} to be these two layers. The computation is
\begin{align*}
    z_j &= \sum_p x_p w_{pj}, \tag*{\torchmodule{Stem.Conv1}}\\
    a_j &= \rho(z_j), \tag*{\torchmodule{Stem.Act2}}
\end{align*}
where $x_p \in \mathcal B$ is an input feature.
Although the computation of this case is similar to Supplementary Note \ref{sec:lrpnfnet-case1}, it requires a different treatment to properly attribute relevance to input features $x_p$'s. This is because in practice these input features are commonly normalized to be approximately in a range, i.e.\ $\mathcal{B} = [l_p, h_p]$ for some $l_p \le 0$ and  $h_p \ge 0 $.  Under this boundary condition, we utilize the LRP-$z^{\mathcal B}$ rule \cite{DBLP:journals/pr/MontavonLBSM17}, which lead to  
    \begin{align}
        R(x_p) = \sum_j \frac{x_p w_{pj} - (l_p w_{pj}^+ + h_p w_{pj}^-)}{\sum_p x_p w_{pj} - (l_p w_{pj}^+ + h_p w_{pj}^-)} \cdot R(a_j)
    \end{align}
We can implement the rule above by 
\begin{align*}
        z_j^\dagger &\gets \bigg[\sum_{p} \varphi^{\mathcal B}_{ pj} \bigg] \bigg[ \frac{\rho(z_j)}{\sum_{p} \varphi^{\mathcal B}_{pj} } \bigg]_\detach \tag*{FH-\torchmodule{Stem.Conv1}} \\ 
            a_j^\dag &\gets z_j^\dag \tag*{FH-\torchmodule{Stem.Act2}} ,
\end{align*}
where $\varphi^{\mathcal{B}}_{pj} = x_p w_{pj} - (l_p w_{pj}^+ + h_p w_{pj}^-)$. We refer to Supplementary Note \ref{appendix:attribution-lrp} on how the range $[l_p, h_p]$ can be chosen.

\subsubsection{Case : Pooling Layers}
\label{sec:lrpnfnet-case7}

These pooling  layers are the pooling layer in each  \torchmodule{NormFreeBlock.Downsample} or \torchmodule{GlobalPool}. Because these pooling are average pooling,  we can view their computation  a linear layer. More specifically, the computation is 
\begin{align*}
    z_k =    \sum_j a_j  w_{jk},  \tag*{\torchmodule{Pooling}}
\end{align*}
where $w_{jk} = 1 / n $ and $n$ is the size of the pooling kernel. Therefore, the relevance of $R(a_j)$ and its forward hook implementation is similar to Supplementary Note \ref{sec:lrpnfnet-case2}.

\subsection*{Improving Numerical Stability of LRP for NFNets}
\label{sec:improve-numerical-stability-lrp}
We have observed that the LRP implementation outlined in Supplementary Note \ref{sec:nfnetlrp-cases}
  produces  artifacts in  some images.  Our analysis suggested that these artifacts occur when $|\texttt{d}|$ in the  $[\texttt{n} /\texttt{d}]\detach$ operator is  small.  We found that setting  the detached variable to zero when  $|\texttt{d}| \le 10^{-3}$ mitigates the problem.

{
\bibliographystyle{ieeetr}
\bibliography{ref}
}

\end{appendices}